\newtheorem{assumption}{Assumption}
\newtheorem{lemma}{Lemma}
\newtheorem{theorem}{Theorem} 
\newtheorem{remark}{Remark}
\title{Personalized Federated Learning with Bidirectional Communication Compression via One-Bit Random Sketching}
\author{
    %Authors
    % All authors must be in the same font size and format.
    Jiacheng Cheng\textsuperscript{\rm 1}, 
    Xu Zhang\textsuperscript{\rm 2}\thanks{Corresponding author.},
    Guanghui Qiu\textsuperscript{\rm 3},
    Yifang Zhang\textsuperscript{\rm 2,3},
    Yinchuan Li\textsuperscript{\rm 4},
    Kaiyuan Feng\textsuperscript{\rm 5}
}
\title{My Publication Title --- Single Author}
\author {
    Author Name
}
\title{My Publication Title --- Multiple Authors}
\author {
    % Authors
    First Author Name\textsuperscript{\rm 1,\rm 2},
    Second Author Name\textsuperscript{\rm 2},
    Third Author Name\textsuperscript{\rm 1}
}
\begin{document}

\maketitle

\begin{abstract}
Federated Learning (FL) enables collaborative training across decentralized data, but faces key challenges of bidirectional communication overhead and client-side data heterogeneity. 
To address communication costs while embracing data heterogeneity, we propose {\bf{pFed1BS}}, a novel {\bf{p}}ersonalized {\bf{fed}}erated learning framework that achieves extreme communication compression through {\bf{one-b}}it random {\bf{s}}ketching. 
In personalized FL, the goal shifts from training a single global model to creating tailored models for each client. In our framework, clients transmit highly compressed one-bit sketches, and the server aggregates and broadcasts a global one-bit consensus. To enable effective personalization, we introduce a sign-based regularizer that guides local models to align with the global consensus while preserving local data characteristics. 
To mitigate the computational burden of random sketching, we employ the Fast Hadamard Transform for efficient projection. Theoretical analysis guarantees that our algorithm converges to a stationary neighborhood of the global potential function. Numerical simulations demonstrate that pFed1BS substantially reduces communication costs while achieving competitive performance compared to advanced communication-efficient FL algorithms.
\end{abstract}

%\begin{links}
%    \link{Code}{https://aaai.org/example/code}
%    \link{Datasets}{https://aaai.org/example/datasets}
%    \link{Extended version}{https://aaai.org/example/extended-version}
%\end{links}

\section{Introduction}
Federated Learning (FL) is an increasingly popular paradigm in deep learning, designed to train machine learning models on distributed client data while preserving privacy \cite{mcmahan2017fedavg, liconvergence}. Although FL performs well under ideal conditions of \textit{i.i.d.} data and unconstrained communication, real-world deployments face two fundamental challenges inherent to its decentralized nature.

First, data across clients is typically \textit{non-i.i.d.}, reflecting diverse user behaviors and environments, which can severely degrade the performance of a single global model. Second, the communication overhead is often prohibitive, as repeatedly transmitting high-dimensional models between a central server and numerous clients is infeasible in bandwidth-limited networks. These challenges are particularly acute in critical application domains such as the massive Internet of Things (IoT), Vehicle-to-Everything (V2X) communications, and remote sensing networks. In these settings, where devices operate under extremely constrained bandwidth, efficient communication is not merely an optimization but a fundamental necessity for the system to be viable.

To overcome the challenge of \textit{non-i.i.d.} data, Personalized Federated Learning (PFL) \cite{t2020personalized, li2020federated} has been proposed. Here, it is crucial to distinguish the goal of "personalization" from merely "addressing data heterogeneity." While the latter often aims to improve a single global model, PFL explicitly seeks to produce bespoke, customized models for individual clients that capture the unique characteristics of their local data. However, most PFL methods still incur significant communication costs by transmitting full-precision, high-dimensional model parameters or updates. This leads to a more precise and challenging research question: \textit{can we design a federated learning algorithm that not only provides high-quality personalized models for each client but also operates efficiently under extreme bidirectional communication constraints?}

To reduce this burden, communication-efficient FL (CEFL) techniques have been developed, employing methods like prototype-learning~\cite{tan2022fedproto}, sparsification \cite{sattler2019robust,liu2023sparse} and quantization \cite{reisizadeh2020fedpaq,mao2022communication}. Among the most aggressive are one-bit compression strategies. For instance, OBDA \cite{zhu2020obda} applies symmetric one-bit quantization for bidirectional communication, OBCSAA \cite{fan2022obcsaa} combines a one-bit compressed sensing uplink with an uncompressed downlink, and zSignFed \cite{tang2024z} stabilizes sign-based compression through a noisy perturbation scheme.

While these methods achieve remarkable compression rates, they primarily focus on training a single global model and thus overlook the critical challenge of data heterogeneity. To systematically analyze the limitations of existing work and motivate our contribution, we provide a comparison of representative algorithms in Table \ref{tab:compression_comparison}. 
The table reveals a clear research gap: no existing framework offers both extreme, bidirectional communication efficiency and native support for personalization.

\begin{table*}[htbp]
	\centering
	\renewcommand{\arraystretch}{1.1}
	\resizebox{\textwidth}{!}{
		\begin{tabular}{c | c c | c c | c}
			\toprule[1.0pt]
			\multirow{2}{*}{\textbf{Algorithm}} & \multicolumn{2}{c|}{\textbf{Upload Compression}} & \multicolumn{2}{c|}{\textbf{Download Compression}} & \multirow{2}{*}{\makecell{\textbf{Personalization} \\ \textbf{Capability}}} \\
			\cline{2-5}
			& \textbf{Dim. Reduction} & \textbf{1-bit Quant.} & \textbf{Dim. Reduction} & \textbf{1-bit Quant.} & \\
			\midrule[0.7pt]
			FedAvg~\cite{mcmahan2017fedavg} & $\times$ & $\times$ & $\times$ & $\times$ & $\times$ \\
			OBDA~\cite{zhu2020obda}         & $\times$ & $\checkmark$ & $\times$ & $\checkmark$ & $\times$ \\
			OBCSAA~\cite{fan2022obcsaa}     & $\checkmark$ & $\checkmark$ & $\times$ & $\times$ & $\times$ \\
			zSignFed~\cite{tang2024z}     & $\times$ & $\checkmark$ & $\times$ & $\times$ & $\times$ \\
			\textbf{pFed1BS}                   & $\checkmark$ & $\checkmark$ & $\checkmark$ & $\checkmark$ & $\checkmark$ \\
			\bottomrule[1.0pt]
		\end{tabular}
	}
	\caption{Comparisons of communication-efficient schemes and personalization Capabilities in Federated Learning Algorithms.}
	\label{tab:compression_comparison}
\end{table*}

In this paper, we bridge this gap by introducing pFed1BS, a novel PFL framework designed for extreme communication constraints based on one-bit random sketching. We formulate a joint optimization problem where clients learn personalized models by minimizing a local loss augmented with a sign-based regularizer. This regularizer encourages alignment with a global consensus vector derived by the server. Critically, our framework achieves bidirectional compression: clients upload one-bit sketches of their local models, and the server broadcasts a compact one-bit consensus vector. This contrasts sharply with prior CEFL methods that either compress only the uplink or require a full-model downlink. 
Figure \ref{fig:method} provides an overview of our proposed framework.

\subsection{Main Contributions}
This paper proposes \textbf{pFed1BS}, as shown in Figure \ref{fig:method}, a personalized federated learning framework designed for settings with extreme communication constraints. Our main contributions are as follows:

\begin{itemize}
	\item We are the first to formulate the problem of personalized learning with one-bit bidirectional communication as a principled joint optimization problem. 
	Our framework defines two coupled objectives: a client-side objective that balances local empirical risk with a novel sign-based regularizer for global alignment, and a server-side objective for optimally aggregating the compressed client signals.
	% Our framework defines two coupled objectives: one for clients, which minimizes local loss while regularizing towards a global consensus vector, and one for the server, which optimally aggregates client signals. This principled formulation, centered on a carefully designed sign-based regularizer, provides a solid foundation for our algorithm. 
	
	\item We make our framework practical for large-scale models by introducing a highly efficient implementation of the required sketching operations. By leveraging the Fast Hadamard Transform (FHT), we reduce the complexity of the client-side sketching operation from quadratic, $\mathcal{O}(mn)$, to near-linear, $\mathcal{O}(n\log n)$, without performance degradation.
	
	\item We provide the comprehensive convergence analysis for this challenging alternating optimization scheme. 
	We formally prove that pFed1BS converges to a stationary neighborhood of a global potential function, rigorously accounting for the interplay between personalization, local stochastic updates, and errors introduced by one-bit sketching and server aggregation.
	
	\item We conduct extensive experiments on benchmark datasets (MNIST, FMNIST, CIFAR-10, CIFAR-100, and SVHN).
	Our results demonstrate that pFed1BS achieves a superior trade-off between accuracy and communication. Remarkably, pFed1BS matches or exceeds the performance of state-of-the-art one-bit FL algorithms while operating at a fraction of the communication cost and, crucially, providing personalization that they lack.
\end{itemize}

\subsection{Related Works}
Our work is closely related to the following topics:
\paragraph{Personalized Federated Learning.}
To address the challenge of heterogeneous datasets, a rich body of work has emerged in Personalized Federated Learning (PFL).
These approaches can be broadly categorized into local adaptation~\cite{li2020federated,t2020personalized,li2021ditto,zhang2022personalized}, multi-task learning~\cite{smith2017federated,marfoq2021federated}, and architecture-based methods~\cite{arivazhagan2019federated,collins2021exploiting}.
For instance, regularization-based methods like pFedMe~\cite{t2020personalized} and Ditto~\cite{li2021ditto} learn personalized models by augmenting the local objective with a proximal term that regularizes it towards a global model. Architecture-based methods, such as FedRep~\cite{collins2021exploiting}, learn a shared feature representation while personalizing the final model layers. 
More recently, DisPFL~\cite{dai2022dispfl} learns personalized sparse masks for each client. 
However, these advanced PFL methods typically inherit the communication bottlenecks of standard FL, as they still presuppose the transmission of full-precision, high-dimensional model parameters or updates.

\paragraph{Communication-Efficient Federated Learning.}
In a parallel research thrust, numerous methods have been proposed to alleviate the communication burden in FL.  
One prominent approach is update sparsification, where only a fraction of the model update is transmitted, using techniques like Top-k selection~\cite{sattler2019robust} or identifying parameters with high-magnitude changes~\cite{long2024fedsq}.
Another major direction is quantization, which reduces the numerical precision of the transmitted updates~\cite{reisizadeh2020fedpaq,chen2024mixed,mao2022communication}. A critical limitation of these popular techniques, however, is that they typically compress only the uplink (client-to-server) channel, still requiring the server to broadcast a full-precision, high-dimensional model.

More aggressive strategies leverage techniques from signal processing.  
Some works employ Compressed Sensing (CS) to project sparse updates into a low-dimensional subspace~\cite{li2021communication,oh2022communication,oh2023fedvqcs}.
Others explore one-bit quantization schemes, sometimes combined with over-the-air computation in wireless settings, to achieve extreme compression~\cite{zhu2020obda,tang2024z,oh2024communication}. 
While highly efficient, these methods are fundamentally designed to learn a single global model and lack mechanisms to handle data heterogeneity.

pFed1BS uniquely bridges these two disparate lines of research. It is the first work to integrate a bidirectional, one-bit sketching mechanism within a principled PFL formulation, thereby explicitly and simultaneously tackling the dual challenges of communication efficiency and data heterogeneity.

\section{The Proposed Method}
This section presents our proposed method, pFed1BS. 
We begin by formulating the overall optimization framework that governs the collaborative learning process by introducing a sign-based regularizer and random sketching.  
Subsequently, we describe the iterative algorithm, specifying distinct client-server procedures for optimizing this objective. 
Finally, to address the computational bottleneck posed by the high-dimensional random sketching in local training, we introduce an efficient sketching method based on the Fast Hadamard Transform.

\begin{figure*}[!t]
	\centering
	\includegraphics[width=7.0in]{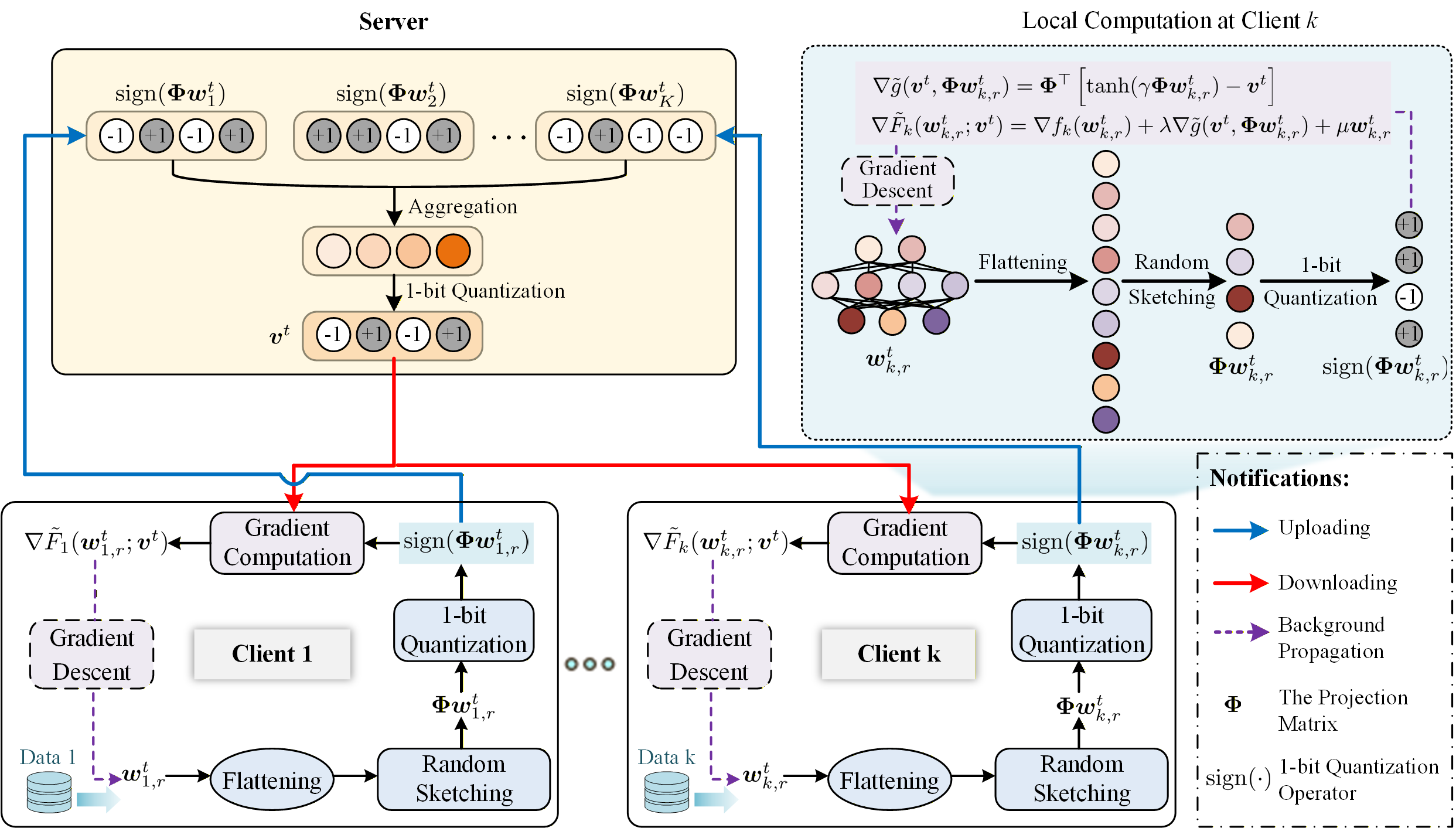}
	\caption{An overview of our proposed framework. 
		At round $t$, each client performs a local update using a sign-based regularizer with the global one-bit vector $\bm{v}^t$. Then each client projects and quantizes the updated local model to one bit vector $\mbox{sign}(\mathbf{\Phi} \bm{w}_k^{t+1})$, and then transmits it to the server. 
		The server aggregates all clients' one-bit vectors to form the next global one-bit vector $\bm{v}^{t+1}$, which is broadcast for the next round.
	}
	\label{fig:method}
\end{figure*}

\subsection{Optimization Framework}
In a federated learning system with $K$ clients, our goal is to move beyond learning a single global model and instead learn a personalized model $\bm{w}_k \in \mathbb{R}^n$ for each client $k \in \{1,\ldots,K\}$.
The system aggregates client contributions using weights $p_k$, which are typically set based on the client's dataset size, e.g., $p_k=N_k/\sum_{i=1}^{K}N_i$
, where $N_k$ is the number of local samples.
The local objective for each client is to minimize the expected loss over its private data distribution $\mathcal{P}_k$ 
\begin{equation}
\label{eq:expected_loss}
f_k(\bm{w}_k) = \mathbb{E}_{\xi_k \sim \mathcal{P}_k } [\hat{f}_k(\bm{w}_k; \xi_k)],
\end{equation}
\noindent where $\xi_k$  is a data sample drawn from $\mathcal{P}_k$ and $\hat{f}_k(\bm{w}_k; \xi_k)$ is the loss function for a single sample $\xi_k$. 
In practice, the true gradient $\nabla f_k(\bm{w}_k)$ is intractable, so we rely on stochastic gradients computed on mini-batches to approximate it.

To drastically reduce communication costs, our core technical idea is to replace the transmission of high-dimensional models $\bm{w}_k$ with low-dimensional one-bit sketches.
Specifically, each client transmits only $\mbox{sign}(\mathbf{\Phi}\bm{w}_k)$, where $\mathbf{\Phi} \in \mathbb{R}^{m \times n}$ is a random projection matrix. 

This radical compression necessitates a new way for the server and clients to interact.
To this end, we introduce a global consensus vector $\bm{v} \in \{\pm 1\}^m$, which the server aggregates from the clients' one-bit sketches.
This vector $\bm{v}$ is then broadcast back to clients and incorporated into their local optimization, which acts as a target, guiding the local model $\bm{w}_k$ to produce a projection $\mathbf{\Phi}\bm{w}_k$ whose signs align with $\bm{v}$.

To enforce this guidance, we introduce a sign-based regularizer, denoted by
$g(\bm{v},\mathbf{\Phi}\bm{w}_k)$, which measures the disagreement between the signs of the projected model $\mathbf{\Phi}\bm{w}_k$ and the global consensus $\bm{v}$.
We define this regularizer using the one-sided $\ell_1$-norm:
\begin{equation}
\begin{aligned}
g(\bm{x}, \bm{y}) = \|[\bm{x} \odot \bm{y}]_{-}\|_{1}, 
\text{~~where~} [{x_i}]_{-} = \min(x_i, 0).
\end{aligned}
\end{equation}
For the client-side objective, where $\bm{v}$ is a sign vector and $\mathbf{\Phi} \bm{w}_k$ is real-valued, this regularizer is equivalent to:
\begin{equation} 
\label{eq:g_func}
{g}(\bm{v},\mathbf{\Phi} \bm{w}_k)=\frac{1}{2}(\|\mathbf{\Phi} \bm{w}_k\|_1-\langle \bm{v},\mathbf{\Phi} \bm{w}_k \rangle).
\end{equation}

Furthermore, to prevent the personalized models $\bm{w}_k$ 
from diverging or growing unbounded during local training due to the absence of global data constraints, we introduce an $\ell_2$ penalty term.

This leads to the following regularized client objective:
\begin{equation}
\label{eq:client_obj_full}
F_k(\bm{w}_k; \bm{v}) = f_k(\bm{w}_k) + \lambda g(\bm{v}, \mathbf{\Phi} \bm{w}_k) +\frac{\mu}{2} \|\bm{w}_k\|_2^2,
\end{equation}
where $\lambda$ controls the strength of the sign alignment and $\mu$ controls the norm of model parameters.

This form, however, is non-smooth due to the $\ell_1$-norm in \eqref{eq:g_func}, posing a challenge for gradient-based optimization. 
To enable gradient-based optimization, we employ a continuously differentiable approximation for the $\ell_1$-norm.
A standard approach is to approximate $\|\bm{z}\|_1$ with  $h_{\gamma}(\bm{z})$, where $h_\gamma(\bm{z})=\frac{1}{\gamma}\sum_{i=1}^{m}\log(\cosh(\gamma z_i))$.
So we obtain a smoothed regularizer 
\begin{equation}
\tilde{g}(\bm{v}, \mathbf{\Phi}\bm{w}_k) = h_\gamma(\mathbf{\Phi}\bm{w}_k) - \langle \bm{v}, \mathbf{\Phi}\bm{w}_k \rangle,
\end{equation}
and a smoothed client-side objective $\tilde{F}_{k}(\bm{w}_k; \bm{v})$
\begin{multline}
\label{eq:smoothed_obj}
\tilde{F}_{k}(\bm{w}_k; \bm{v}) 
= f_k(\bm{w}_k) + \lambda \tilde{g}(\bm{v}, \mathbf{\Phi}\bm{w}_k) +\frac{\mu}{2} \|\bm{w}_k\|_2^2 \\
= f_k(\bm{w}_k) + \lambda \left(h_\gamma(\mathbf{\Phi}\bm{w}_k) - \langle \bm{v}, \mathbf{\Phi}\bm{w}_k \rangle \right) +\frac{\mu}{2} \|\bm{w}_k\|_2^2,
\end{multline}
where the factor of $\frac{1}{2}$ can be absorbed into the hyperparameter $\lambda$. 

The gradient of the smoothed penalty term with respect to $\bm{w}_k$ is then given by:
\begin{equation} 
\label{eq:smooth_grad}
\nabla \tilde{g}(\bm{v}, \mathbf{\Phi}\bm{w}_k) = \mathbf{\Phi}^\top \left( \tanh(\gamma \mathbf{\Phi}\bm{w}_k) - \bm{v} \right).
\end{equation}
As the smoothing parameter $\gamma \to \infty$, the term $\tanh(\gamma \mathbf{\Phi}\bm{w}_k) \approx \mbox{sign} (\mathbf{\Phi}\bm{w}_k)$.
Therefore, the gradient effectively penalizes the misalignment between the signs of the projected local model and the global consensus vector $\bm{v}$, driving the local updates towards alignment.

Having defined how each client utilizes the global vector $\bm{v}$, we now address how the server generates it.
At the end of each round, the server receives a one-bit sketch $\bm{z}_k = \mbox{sign}(\mathbf{\Phi}\bm{w}_k)$ from each participating client $k$. 
The server's task is to aggregate these sketches into a new consensus vector $\bm{v}^{t+1}$ that best represents the collective information.

We formulate this as an optimization problem where the server seeks to find a vector $\bm{v} \in \{\pm 1\}^m$ that minimizes the total weighted disagreement with the received client sketches:
\begin{equation}
\label{eq:server_obj}
\min_{\bm{v} \in \{\pm 1\}^m} \sum_{k=1}^{K} p_k \tilde{g}(\bm{v}, \bm{z}_k).
\end{equation}

Overall, we formulate a bilevel optimization problem as follows
\begin{align}
\label{eq:objective}
\textbf{Server:} \quad &\min_{\bm{v} \in \{\pm 1\}^m} \sum_{k=1}^{K} p_k  \tilde{g}(\bm{v}, \mbox{sign}(\mathbf{\Phi} \bm{w}_k^\star(\bm{v}))) \nonumber\\
\textbf{Clients:} \quad & \bm{w}_k^\star(\bm{v}) \in  \arg \min_{\bm{w}_k \in \mathbb{R}^n} F_k(\bm{w}_k; \bm{v}) = f_k(\bm{w}_k) \nonumber\\ & \qquad \qquad + \lambda  \tilde{g}(\bm{v}, \mathbf{\Phi} \bm{w}_k) 
+\frac{\mu}{2} \|\bm{w}_k\|_2^2,
\end{align} 
where $\bm{w}_k^\star(\bm{v})$ denotes the optimal solution of the $k$-th low-level problem for a given upper-level variable $\bm{v}$.

\subsection{Algorithm}
To solve the joint optimization problem defined in Eq.~\eqref{eq:objective}, we propose an alternating optimization scheme named pFed1BS. 
The core idea is to iteratively perform (i) local optimization on the client side to update the personalized models $\bm{w}_k$ and (ii) global aggregation on the server side to update the consensus vector $\bm{v}$. 
The overall procedure is presented in Algorithm \ref{alg:main}. 
We describe the specifics of each component below.

\begin{algorithm}[t!]
	\caption{pFed1BS: Personalized Federated Learning via One-Bit Random Sketching}
	\label{alg:main}
	\begin{algorithmic}[1]
		\STATE {\bfseries Input:} Total rounds $T$, local steps $R$, learning rate $\eta$, regularization hyperparameters $\lambda$, $\mu$
		\STATE {\bfseries Server Initializes:} Model $\bm{w}^0$, random seed $I$. Broadcasts $I$ to all clients.
		Initializes $\bm{v}^0=\bm{0}$.
		\FOR{ $t = 0$ to $T-1$ }
		\FOR{$k=1$ to $K$  \textbf{in parallel}}
		\STATE $ \bm{z}_k^{t+1}, \bm{w}_{k}^{t+1} \leftarrow$ \texttt{ClientUpdate}$(k, \bm{w}_k^{t}, \bm{v}^{t})$
		\ENDFOR
		\STATE Random sample a subset of clients $\mathcal{S}^t$
		\STATE \textbf{Aggregate signs:} $\bm{v}^{t+1} = \mbox{sign} \left(\sum_{k\in \mathcal{S}^{t}} p_k \bm{z}_k^{t+1}) \right)$
		\ENDFOR
		\vspace{2mm}
		\STATE {\bfseries Function} \texttt{ClientUpdate}$(k, \bm{w}_k^t, \bm{v}^t)$:
		\STATE  {\bfseries Client $k$ Initializes:} $\bm{w}_{k,0}^{t+1} = \bm{w}_{k}^{t}$.
		\FOR{$r = 0$ to $R - 1$}
		\STATE  Sample a mini-batch $\mathcal{B}_{k,r}$ from data distribution $\mathcal{P}_{k}$
		\STATE  Compute gradient: 
		$\nabla \hat{f}_k(\bm{w}_{k,r}^{t+1}; \mathcal{B}_{k,r}) =\frac{1}{|\mathcal{B}_{k,r}|}\sum_{\xi_k \in \mathcal{B}_{k,r}} \nabla \hat{f}_k (\bm{w}_{k,r}^{t+1}; \xi_k) $
		\STATE Compute regularization subgradient: 
		$\nabla \tilde{g}(\bm{v}^t, \mathbf{\Phi}\bm{w}_{k,r}^{t+1}) = \mathbf{\Phi}^{\top}(\tanh(\gamma \mathbf{\Phi}\bm{w}_{k,r}^{t+1})-\bm{v}^t)$
		\STATE Update local model: 
		$\bm{w}_{k,r+1}^{t+1}\leftarrow \bm{w}_{k,r}^{t+1} - \eta(
		\nabla \hat{f}_k(\bm{w}_{k,r}^{t+1}; \mathcal{B}_{k,r}) + \lambda \nabla \tilde{g}(\bm{v}^{t}, \mathbf{\Phi}\bm{w}_{k,r}^{t+1}) +\mu \bm{w}_{k,r}^{t+1})$
		\ENDFOR
		\STATE {\bf{return}}  $\mbox{sign}(\mathbf{\Phi} \bm{w}_{k,R}^{k+1})$, $\bm{w}_{k,R}^{k+1}$
	\end{algorithmic}
\end{algorithm}

At the start of round $t$, each participating client $k \in \mathcal{S}^t$ receives the global consensus vector $\bm{v}^t$.
The client's goal is to update its local model $\bm{w}_k^t$ by approximately minimizing its smoothed objective $\tilde{F}_k(\bm{w}_k; \bm{v}^t)$ (from Eq.~\eqref{eq:smoothed_obj}). 
As shown in Line 16 of Algorithm \ref{alg:main}, the client performs $R$ steps of stochastic gradient descent. 
For local step $r$, the update is
\begin{equation}
\label{eq:client_update_rule}
\bm{w}_{k,r+1} = \bm{w}_{k,r} - \eta \nabla \tilde{F}_{k}(\bm{w}_{k,r}; \bm{v}^t).
\end{equation}
The gradient $\nabla \tilde{F}_{k}$ is composed of the standard task gradient and our regularization terms:
\begin{align}
\label{eq:client_modified_gradient}
\nabla \tilde{F}_k(\bm{w}_{k,r}^{t}; \bm{v}^t)
=& \nabla f_k(\bm{w}_{k,r}^{t}) + \lambda \mathbf{\Phi}^{\top} \left[\tanh(\gamma \mathbf{\Phi} \bm{w}_{k,r}^{t}) - \bm{v}^{t} \right] \nonumber\\
&+\mu \bm{w}_{k,r}^t.
\end{align}
Since the calculation of $\nabla f_k(\bm{w}_{k,r}^{t})$ is intractable, we use the mean over a mini-batch of data $\mathcal{B}_{k,r}$ from $\mathcal{P}_{k}$
\begin{equation}
\nabla \hat{f}_k(\bm{w}_{k,r}^{t+1}; \mathcal{B}_{k,r}) =\frac{1}{|\mathcal{B}_{k,r}|}\sum_{\xi_k \in \mathcal{B}_{k,r}} \nabla \hat{f}_k(\bm{w}_{k,r}^{t+1}; \xi_k). 
\end{equation}

After $R$ steps, the client transmits its new one-bit sketch $\bm{z}_k^{t+1} = \mathrm{sign}(\mathbf{\Phi}\bm{w}_k^{t+1})$ to the server. Upon receiving the sketches $\{\bm{z}_k^{t+1}\}_{k \in \mathcal{S}^t}$, the server generates the next consensus vector $\bm{v}^{t+1}$. 
As defined in Eq.~\eqref{eq:obj}, the server aims to solve:
\begin{equation}
\label{eq:server_obj_sampled}
\min_{\bm{v} \in \{\pm 1\}^m} \sum_{k \in \mathcal{S}^t} p_k g(\bm{v}, \bm{z}_k^{t+1}).
\end{equation}
Crucially, this discrete optimization problem admits an exact, closed-form solution. The following lemma states that the optimal aggregation is a simple weighted majority vote.
\begin{lemma}[Optimal Server Aggregation]
	\label{lem:server_optimality}
	The unique minimizer of the server objective in Eq.~\eqref{eq:server_obj_sampled} is given by:
	\begin{equation}
	\bm{v}^* = \mbox{\rm sign} \left(\sum_{k \in \mathcal{S}^t} p_k \bm{z}_k^{t+1} \right).
	\end{equation}
\end{lemma}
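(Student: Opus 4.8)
The plan is to exploit the fact that in the server objective \eqref{eq:server_obj_sampled} \emph{both} arguments of $g$ are sign vectors, which collapses $g$ to a weighted Hamming distance and makes the minimization separable across coordinates. First I would observe that for any $\bm{v}\in\{\pm1\}^m$ and a fixed sketch $\bm{z}_k^{t+1}\in\{\pm1\}^m$, each product $v_i z_{k,i}^{t+1}$ lies in $\{-1,+1\}$, so by the definition $[t]_-=\min(t,0)$ one has $[v_i z_{k,i}^{t+1}]_-=-\mathbf{1}[v_i\neq z_{k,i}^{t+1}]$, and therefore
\begin{equation*}
g(\bm{v},\bm{z}_k^{t+1}) = \big\|[\bm{v}\odot\bm{z}_k^{t+1}]_-\big\|_1 = \sum_{i=1}^{m}\mathbf{1}[\,v_i\neq z_{k,i}^{t+1}\,].
\end{equation*}
(Equivalently, since $\bm{v}$ is a sign vector, identity \eqref{eq:g_func} gives $g(\bm{v},\bm{z}_k^{t+1})=\tfrac12(\|\bm{z}_k^{t+1}\|_1-\langle\bm{v},\bm{z}_k^{t+1}\rangle)=\tfrac12(m-\langle\bm{v},\bm{z}_k^{t+1}\rangle)$, which I could use just as well.)

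Next I would interchange the order of summation to expose the coordinatewise structure:
\begin{equation*}
\sum_{k\in\mathcal{S}^t} p_k\, g(\bm{v},\bm{z}_k^{t+1}) = \sum_{i=1}^{m}\Big(\sum_{k\in\mathcal{S}^t} p_k\,\mathbf{1}[\,v_i\neq z_{k,i}^{t+1}\,]\Big) =: \sum_{i=1}^{m} c_i(v_i),
\end{equation*}
so the problem decouples into $m$ independent scalar minimizations $\min_{v_i\in\{\pm1\}} c_i(v_i)$. For coordinate $i$, writing $P_i^{+}=\sum_{k\in\mathcal{S}^t:\, z_{k,i}^{t+1}=1} p_k$ and $P_i^{-}=\sum_{k\in\mathcal{S}^t:\, z_{k,i}^{t+1}=-1} p_k$, we get $c_i(+1)=P_i^{-}$ and $c_i(-1)=P_i^{+}$, hence $c_i(-1)-c_i(+1)=P_i^{+}-P_i^{-}=\sum_{k\in\mathcal{S}^t} p_k z_{k,i}^{t+1}$. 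Thus $v_i=+1$ strictly beats $v_i=-1$ iff $\sum_{k} p_k z_{k,i}^{t+1}>0$ and conversely, so the coordinate optimizer is $v_i^\star=\mathrm{sign}\big(\sum_{k\in\mathcal{S}^t} p_k z_{k,i}^{t+1}\big)$; stacking coordinates yields $\bm{v}^\star=\mathrm{sign}\big(\sum_{k\in\mathcal{S}^t} p_k\bm{z}_k^{t+1}\big)$, exactly the aggregation rule in Algorithm~\ref{alg:main}.

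The only genuine subtlety — and the point I would flag rather than treat as a real obstacle — is \emph{uniqueness}: the scalar problem at coordinate $i$ has a strictly unique minimizer precisely when $\sum_{k\in\mathcal{S}^t} p_k z_{k,i}^{t+1}\neq 0$, i.e.\ when no coordinate carries an exactly balanced weighted vote. Under the mild genericity assumption that such ties do not occur (or upon fixing a convention for $\mathrm{sign}(0)$), each coordinate is uniquely optimized and hence so is the joint problem. Everything else reduces to the routine coordinatewise comparison above, with no inequalities to chase and no reliance on the smoothing of $g$.
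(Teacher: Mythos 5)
Your proof is correct and follows essentially the same route as the paper's: both reduce the objective to maximizing $\langle\bm{v},\sum_{k\in\mathcal{S}^t}p_k\bm{z}_k^{t+1}\rangle$ over $\bm{v}\in\{\pm1\}^m$ and then optimize coordinatewise, your weighted-Hamming-distance phrasing being an equivalent repackaging of the paper's linear-form computation via Eq.~\eqref{eq:g_func}. Your observation that uniqueness fails at any coordinate with an exactly balanced weighted vote is a legitimate caveat that the paper's proof silently ignores despite the lemma claiming a \emph{unique} minimizer.
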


This result is a straightforward but significant application of optimization principles, ensuring that our aggregation step (Line 8 in Algorithm 1) is not a heuristic but is guaranteed to be optimal given the information available to the server.

\subsection{Efficient Projection via Fast Hadamard Transform} 
A naive implementation of the projection $\mathbf{\Phi} \bm{w}$ 
using a dense Gaussian matrix requires $\mathcal{O}(mn)$ computation and memory, which causes a computational burden for large models ($n \gg 10^6$). 
To ensure scalability, we employ a structured projection based on the Subsampled Randomized Hadamard Transform (SRHT)~\cite{zhang2010compressed}, which reduces the complexity to $\mathcal{O}(n\log n)$.

\begin{figure*}[h]
	\centering
	\includegraphics[width=7.0in]{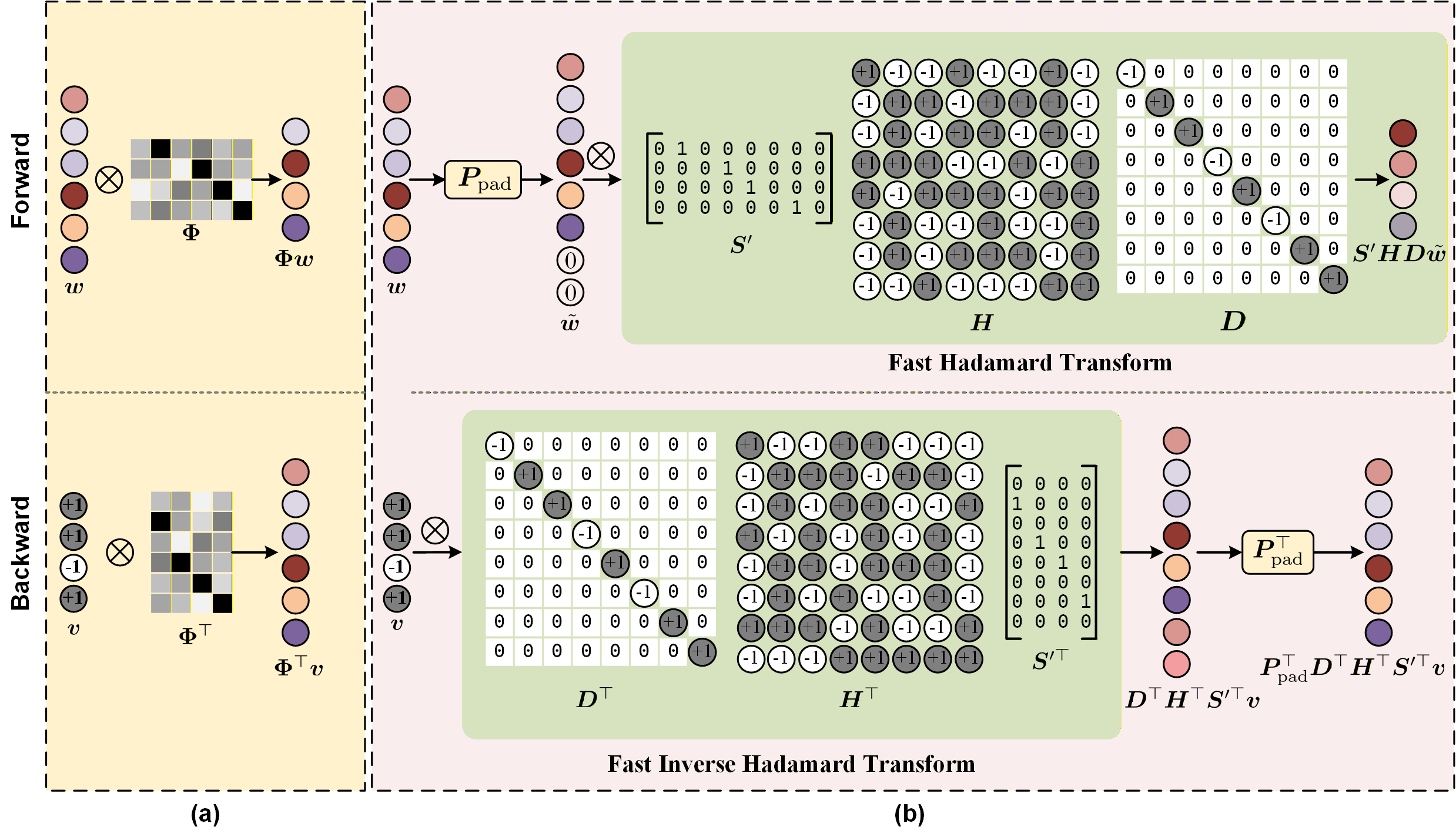}
	\caption{Comparison between (a) a dense random projection and (b) our efficient structured projection. The structured projection sequentially applies element-wise random sign flips ($\bm{D}$), a Fast Hadamard Transform ($\bm{H}$), and random subsampling ($\bm{S}$).}
	\label{fig:fht}
\end{figure*}

\paragraph{Forward Projection.} 
The forward projection $\bm{z}=\mathbf{\Phi} \bm{w} \in \mathbb{R}^{m}$ is computed through a sequence of efficient operations, as illustrated in Figure~\ref{fig:fht} (b) (Up).
Let $n$ denote the model dimension and $m \ll n$ be the target dimension.
First, the input vector $\bm{w} \in \mathbb{R}^n$  is zero-padded to the next power-of-two dimension, $n' = 2^{\lceil \log_2 n \rceil}$.
This can be represented as:
\begin{equation} 
\tilde{\bm{w}} = \bm{P}_{\text{pad}} \bm{w} =\begin{bmatrix} \bm{w} \\ \bm{0} \end{bmatrix},
\end{equation} 
where $\bm{P}_{\text{pad}}$ can be viewed as an $n'\times n$  matrix consisting of the identity matrix $\bm{I}_n$ stacked on top of an $(n'-n) \times n$ zero matrix, and $\tilde{\bm{w}} \in \mathbb{R}^{n'}$.
This step ensures compatibility with the Fast Hadamard Transform (FHT), which requires input lengths that are powers of 2.

Then the structured random projection is performed as follows 
\begin{equation}
\mathbf{\Phi} \bm{w} = \bm{S}' \bm{H} \bm{D} \tilde{\bm{w}},
\end{equation}
where $\bm{D}\in \mathbb{R}^{n' \times n'}$  is a diagonal matrix where each diagonal entry $\bm{D}_{ii}$ is an independent random variable drawn uniformly from $\{-1,+1\}$,
$\bm{H} \in \mathbb{R}^{n' \times n'}$ is the normalized Walsh-Hadamard matrix, and $\bm{S}'=\sqrt{\frac{n'}{m}}\bm{S} \in \mathbb{R}^{m \times n'}$, where $S$ is a subsampling matrix that uniformly selects $m$ rows at random from the $n'$-dimensional identity matrix.
This structured operator avoids forming any dense matrices. The total computational complexity is dominated by the FHT, resulting in an efficient $\mathcal{O}(n\log n)$ computation.

\paragraph{Backward Computation.}
In the backward pass, we compute the adjoint projection $\mathbf{\Phi}^\top \bm{v}$, where $\bm{v} \in \mathbb{R}^m$ denotes a low-dimensional one-bit vector.

Specifically, we first lift $\bm{v} \in \mathbb{R}^{m} $ to $\tilde{\bm{v}} \in \mathbb{R}^{n'}$ by zero-padding as follows
\begin{align}
\tilde{\bm{v}} = \bm{S}'^\top \bm{v}.
\end{align}

This operation places the elements of $\bm{v}$ into the coordinates that were selected by $\bm{S}$, filling the rest with zeros.
We then apply the FHT to the padded vector, and apply the same random sign flips as in the forward pass as $\bm{D} \bm{H}\tilde{\bm{v}}$.

Finally, we truncate the resulting $n'$-dimensional vector back to the original dimensions. 
This is the action of $\bm{P}_{\text{pad}}^\top$.
The operator $\bm{P}_{\text{pad}}^\top:\mathbb{R}^{n'}\rightarrow \mathbb{R}^n$ is a truncation operator, which we denote as $\bm{P}_{\text{trunc}}$.
Its action is to select the first $n$ coordinates of a vector in $\mathbb{R}^{n'}$ and discard the remaining $n'-n$ coordinates.

The complete operation is formulated as
\begin{equation}
\begin{aligned}
\mathbf{\Phi}^\top \bm{v}&=(\bm{S}' \bm{H} \bm{D} \bm{P}_{\text{pad}})^\top \bm{v} = \bm{P}_{\text{pad}}^\top \bm{D}^\top \bm{H}^\top \bm{S}'^\top \bm{v}.
\end{aligned}
\end{equation}
Given that $\bm{P}_{\text{trunc}}=\bm{P}_{\text{pad}}^\top$ and
$\bm{D}$ is diagonal, this simplifies to $\mathbf{\Phi}^\top \bm{v}=\bm{P}_{\text{trunc}} \bm{D} \bm{H}^\top \bm{S}'^\top \bm{v}$.

Similar to the forward projection, the adjoint projection is matrix-free and has a computational complexity of $\mathcal{O}(n \log n)$, making our algorithm practical and efficient.

\section{Theoretical Analysis}
We provide a rigorous convergence guarantee for pFed1BS, showing that the algorithm converges to a neighborhood of a stationary point.

\subsection{Assumptions and Preliminaries}
Our analysis relies on a set of standard assumptions common in FL literature.
\begin{assumption}[$L$-smoothness]
	\label{assump:L-smooth}
	The local objective function $f_k(\cdot)$ is $L$-smooth, \textit{i.e.}, its gradient is $L$-Lipschitz continuous.
\end{assumption}

\begin{assumption}[Bounded Below] 
	\label{assump: Bounded Below}
	The global potential function 
	$\Psi(\bm{w}_1,\ldots,\bm{w}_K;\bm{v})=\sum_{k=1}^{K} p_k \tilde{F}_k(\bm{w}_k ;\bm{v})$ is bounded below by some value $F^*$.
\end{assumption}

\begin{assumption}[Bounded Gradient Variance]
	\label{assump:bounded_variance_main}
	The variance of the stochastic gradients is uniformly bounded, i.e., for any client $k$ and model $\bm{w}$, $\mathbb{E}_{\mathcal{B}} \bigg[\|\nabla \hat{f}_k(\bm{w};\mathcal{B}) - \nabla f_k(\bm{w})\|^2 \bigg] \le \sigma^2$ for some constant $\sigma^2 \geq 0$.
\end{assumption}

\begin{assumption}[Bounded Task Gradient Variance]
	\label{assump: BTGV}
	The stochastic gradient of the task loss $f_k$ has a bounded second moment, \textit{i.e.}, there exists a constant  $G>0$ such that $\mathbb{E}_{\mathcal{B}} \bigg[\|\nabla \hat{f}_k(\bm{w};\mathcal{B}) \|^2 \bigg] \leq G^2$ for all $\bm{w}$ and $k$.
\end{assumption}

Our analysis of partial client participation also relies on the following standard lemmas.

\begin{lemma}[Bounded Projection Norm]
	\label{assump: BPN}
	Let the projection matrix $\mathbf{\Phi} \in \mathcal{R}^{m\times n}$ be constructed as described in the "Efficient Projection" section, involving a normalized Hadamard matrix $\bm{H}$, a random sign matrix $\bm{D}$, a subsampling matrix $\bm{S}$ and a padding matrix $\bm{P}_{\text{pad}}$. The resulting operator has an exact spectral norm given by:
	\begin{equation}
	\|\mathbf{\Phi}\| \le \sqrt{\frac{n'}{m}}.
	\end{equation}
\end{lemma}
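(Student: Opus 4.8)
The plan is to compute $\mathbf{\Phi}\mathbf{\Phi}^\top$ in closed form and read off its single eigenvalue. Recall the factorization from the ``Efficient Projection'' section: $\mathbf{\Phi} = \bm{S}'\bm{H}\bm{D}$ (composed, when $n$ is not a power of two, with the zero-padding isometry $\bm{P}_{\text{pad}}$), where $\bm{D}$ is a diagonal $\pm 1$ matrix, $\bm{H}$ is the normalized Walsh--Hadamard matrix, and $\bm{S}' = \sqrt{n'/m}\,\bm{S}$ with $\bm{S}$ selecting $m$ distinct rows of the $n'$-dimensional identity. Three elementary facts are all I need: (i) $\bm{D}\bm{D}^\top = \bm{I}_{n'}$, since each diagonal entry squares to one; (ii) $\bm{H}\bm{H}^\top = \bm{I}_{n'}$, since the normalized Hadamard matrix is orthogonal (equivalently $\bm{H} = n'^{-1/2}\bm{H}_{\pm}$ with $\bm{H}_{\pm}\bm{H}_{\pm}^\top = n'\bm{I}_{n'}$); and (iii) $\bm{S}\bm{S}^\top = \bm{I}_m$, because $\bm{S}$ has $m$ distinct standard basis vectors as rows, hence orthonormal rows.

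First I would chain these identities: $\mathbf{\Phi}\mathbf{\Phi}^\top = \bm{S}'\bm{H}\bm{D}\bm{D}^\top\bm{H}^\top\bm{S}'^\top = \bm{S}'\bm{H}\bm{H}^\top\bm{S}'^\top = \bm{S}'\bm{S}'^\top = \frac{n'}{m}\bm{S}\bm{S}^\top = \frac{n'}{m}\bm{I}_m$. Since $\mathbf{\Phi}\mathbf{\Phi}^\top$ is a scalar multiple of the identity, $\lambda_{\max}(\mathbf{\Phi}\mathbf{\Phi}^\top) = n'/m$, so $\|\mathbf{\Phi}\|^2 = n'/m$ and $\|\mathbf{\Phi}\| = \sqrt{n'/m}$. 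An equivalent, perhaps more transparent, route works on the domain side: $\bm{H}\bm{D}$ is a product of orthogonal matrices, hence orthogonal, and right-multiplication by an orthogonal matrix preserves the spectral norm, so $\|\mathbf{\Phi}\| = \|\bm{S}'\bm{H}\bm{D}\| = \|\bm{S}'\| = \sqrt{n'/m}\,\|\bm{S}\| = \sqrt{n'/m}$, using $\|\bm{S}\|=1$ from $\bm{S}\bm{S}^\top = \bm{I}_m$. I would present whichever is cleaner and relegate the other to a one-line remark.

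The only point requiring care -- not really an obstacle, just bookkeeping -- is the zero-padding operator $\bm{P}_{\text{pad}}$ invoked when $n$ is not a power of two. It satisfies $\bm{P}_{\text{pad}}^\top\bm{P}_{\text{pad}} = \bm{I}_n$, i.e.\ it is a partial isometry, so $\|\mathbf{\Phi}\bm{x}\| = \|\bm{S}'\bm{H}\bm{D}(\bm{P}_{\text{pad}}\bm{x})\| \le \sqrt{n'/m}\,\|\bm{P}_{\text{pad}}\bm{x}\| = \sqrt{n'/m}\,\|\bm{x}\|$; thus the stated value remains the operative bound in full generality, with exact equality whenever $n = n'$. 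I would state the lemma for the power-of-two case, where $\mathbf{\Phi} = \bm{S}'\bm{H}\bm{D}$ and the equality is exact, and add a remark that padding can only tighten the norm, so every downstream use of $\|\mathbf{\Phi}\| = \sqrt{n'/m}$ in the convergence analysis is unaffected.
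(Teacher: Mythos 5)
Your proposal is correct and follows essentially the same route as the paper's own proof: compute $\mathbf{\Phi}\mathbf{\Phi}^\top = \frac{n'}{m}\bm{I}_m$ from the identities $\bm{D}\bm{D}^\top = \bm{I}$, $\bm{H}\bm{H}^\top = \bm{I}$, $\bm{S}\bm{S}^\top = \bm{I}_m$, and read off the spectral norm. Your added bookkeeping for the zero-padding isometry $\bm{P}_{\text{pad}}$ (showing the stated value remains a valid upper bound when $n < n'$) is a small but worthwhile refinement that the paper's proof omits.
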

For our analysis, we formally define $ C_{\Phi}  = \sqrt{\frac{n'}{m}} $, where $C_{\Phi}=\mathcal{O}\left(\sqrt{\frac{n}{m}}\right)$.

\begin{lemma}[Client-Side Objective and Gradient]
	\label{lemma: Client Gradient}
	The client-side objective $\tilde{F}_{k}$ for a given client $k$ and server message $\bm{v}$ is defined as:
	\begin{align}
	\label{eq:obj}
	\tilde{F}_{k}(\bm{w}_k; \bm{v}) &= f_k(\bm{w}_k) + \lambda \left (h_\gamma(\mathbf{\Phi}\bm{w}_k) - \langle \bm{v}, \mathbf{\Phi}\bm{w}_k \rangle \right) \nonumber \\
	&+\frac{\mu}{2} \|\bm{w}_k\|_2^2,
	\end{align}
	\noindent where $h_\gamma(\bm{z})=\frac{1}{\gamma}\sum_{i=1}^{m}\log(\cosh(\gamma z_i))$ is a differentiable surrogate for the $\ell_1$-norm. 
	The objective $\tilde{F}_k(\bm{w}_k)$ is differentiable with respect to $\bm{w}_k$, and its gradient is given by:
	\begin{align}
	\label{eq:client_gradient}
	\nabla \tilde{F}_k(\bm{w}_{k}; \bm{v}) = &
	\nabla f_k(\bm{w}_{k}) + \lambda \mathbf{\Phi}^{\top} \left[\tanh(\gamma \mathbf{\Phi} \bm{w}_{k}) - \bm{v} \right] \nonumber \\
	&+\mu \bm{w}_{k}.
	\end{align}
\end{lemma}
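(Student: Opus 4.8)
The plan is to prove the lemma by direct differentiation, decomposing $\tilde{F}_k$ into its four additive components and computing the gradient of each. For differentiability, I would first note that $f_k$ is continuously differentiable by Assumption~\ref{assump:L-smooth} (an $L$-smooth function has a well-defined, Lipschitz gradient); the scalar map $t\mapsto \frac{1}{\gamma}\log(\cosh(\gamma t))$ is $C^\infty$ on $\mathbb{R}$ since $\cosh$ is strictly positive, so $h_\gamma$ is smooth and remains so after composition with the linear map $\bm{w}_k\mapsto\mathbf{\Phi}\bm{w}_k$; the term $\langle\bm{v},\mathbf{\Phi}\bm{w}_k\rangle$ is linear in $\bm{w}_k$; and $\frac{\mu}{2}\|\bm{w}_k\|_2^2$ is a quadratic form. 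A finite sum of differentiable maps is differentiable, which settles the first assertion.

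For the gradient formula, I would handle the four terms in turn. The gradient of $f_k$ is $\nabla f_k(\bm{w}_k)$ by definition. For the smoothed $\ell_1$ surrogate, the crucial ingredient is the elementary identity $\frac{d}{dt}\,\frac{1}{\gamma}\log(\cosh(\gamma t)) = \frac{1}{\gamma}\cdot\frac{\gamma\sinh(\gamma t)}{\cosh(\gamma t)} = \tanh(\gamma t)$; since $h_\gamma$ is separable across coordinates, $\nabla_{\bm{z}}h_\gamma(\bm{z}) = \tanh(\gamma\bm{z})$ with $\tanh$ applied entrywise, and the chain rule through $\bm{z}=\mathbf{\Phi}\bm{w}_k$ then gives $\nabla_{\bm{w}_k}h_\gamma(\mathbf{\Phi}\bm{w}_k) = \mathbf{\Phi}^\top\tanh(\gamma\mathbf{\Phi}\bm{w}_k)$. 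The linear term differentiates to $\nabla_{\bm{w}_k}\langle\bm{v},\mathbf{\Phi}\bm{w}_k\rangle = \mathbf{\Phi}^\top\bm{v}$, and the $\ell_2$ penalty to $\mu\bm{w}_k$. Summing the four pieces with the coefficients $1$, $\lambda$, $-\lambda$, $1$ from the definition and factoring $\mathbf{\Phi}^\top$ out of the bracketed difference produces exactly the stated expression \eqref{eq:client_gradient}.

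There is no substantive obstacle here: the statement is a bookkeeping exercise in multivariate calculus. The only point deserving a moment's care is the transpose induced by the chain rule through the linear operator $\mathbf{\Phi}$ — namely that differentiating a composition $g(\mathbf{\Phi}\bm{w}_k)$ yields $\mathbf{\Phi}^\top\nabla g(\mathbf{\Phi}\bm{w}_k)$ rather than $\mathbf{\Phi}\,\nabla g(\mathbf{\Phi}\bm{w}_k)$; everything else follows from linearity of the gradient and the scalar derivative identity above.
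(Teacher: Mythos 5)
Your proposal is correct and follows essentially the same route as the paper's proof: direct differentiation term by term, with the chain rule through the linear map $\bm{w}_k \mapsto \mathbf{\Phi}\bm{w}_k$ producing the $\mathbf{\Phi}^\top$ factor and the scalar identity $\frac{d}{dt}\frac{1}{\gamma}\log(\cosh(\gamma t)) = \tanh(\gamma t)$ handling the smoothed $\ell_1$ surrogate. Your explicit justification of differentiability of each summand is slightly more careful than the paper's, which simply asserts it, but the substance is identical.
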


\begin{lemma}[Smoothness of Client Objective] 
	\label{lm:LF_smooth}
	Under Assumptions~\ref{assump:L-smooth} and~\ref{assump: BPN}, the client-side objective $\tilde{F}_{k}(\bm{w}_k;\bm{v})$ is $L_F$-smooth with respect to $\bm{w}_k$, where the smoothness constant is given by:
	\begin{equation}
	L_F = L + \lambda \gamma C_{\Phi}^2 + \mu.
	\end{equation}
\end{lemma}

\begin{lemma}[Bounded Model Norm] 
	\label{lemma:Bounded Model Norm}
	Let Assumption~\ref{assump:L-smooth}-\ref{assump: BTGV} hold.
	For a learning rate $\eta$  satisfying $\eta<\frac{1}{6\mu}$,  the expected squared norm of the client weights is uniformly bounded across all rounds $t$:
	\begin{equation}
		\mathbb{E}[\|\bm{w}_k^t\|_2^2] \leq W^2, \quad \forall t \geq 0,
	\end{equation}
	\noindent where the bound $W^2$ is defined as
	\begin{equation}
		W^2 \triangleq \|\bm{w}_k^0\|_2^2 + \frac{C'}{(1-\alpha)(1-\alpha^R)}
	\end{equation}
	with constants $\alpha=1-\eta\mu(1/2-3\eta\mu)$ and $C'$ given by:
	\begin{equation}
		C' \triangleq \left(\frac{2\eta}{\mu} + 3\eta^2\right)G^2 + 4\Big(\frac{\eta}{\mu} + 3\eta^2 \Big) \lambda^2 C_\Phi ^2 m.
	\end{equation}
\end{lemma}

\begin{lemma}[Variance of Client Sampling]
	\label{lem:sampling_variance_main} 
	Let $\{\bm{z}_k^t\}_{k=1}^{K}$ be the set of client sketches at round $t$.
	If a subset $\mathcal{S}^t$ of size $S$ is sampled uniformly at random without replacement, then the variance of the sample mean is bounded by:
	\begin{multline}
	\label{eq:sampling_variance_main}
	\mathbb{E}_{\mathcal{S}^t}\left[ \bigg\| \frac{1}{S}\sum_{k \in \mathcal{S}^t} p_k\bm{z}_k^t - \bar{\bm{z}}^t \bigg\|^2_2 \right] 
	\\ \le \frac{K-S}{SK(K-1)} \sum_{k=1}^K \left\| p_k \bm{z}_k^t - \bar{\bm{z}}^t \right\|^2_2,
	\end{multline}   
	where $\bar{\bm{z}}^t \triangleq\frac{1}{K}\sum_{k=1}^K p_k \bm{z}_k^t$.
\end{lemma}

\begin{lemma}[Client-Side Objective Descent] 
	\label{lemma:client_update}
	After $R$ local steps of subgradient descent with learning rate $\eta$ on the smoothed objective $\tilde{F}_{k}(\cdot;\bm{v}^t)$, starting from $\bm{w}_{k,0}^{t+1}=\bm{w}_{k}^{t}$, we have
	\begin{align}
	&\mathbb{E}\left[\tilde{F}_{k}(\bm{w}_{k,R}^{t+1}; \bm{v}^t)\right] 
	\le \tilde{F}_{k}(\bm{w}_k^t; \bm{v}^t) + \frac{\eta^2 R L_F \sigma^2}{2}  \nonumber \\
	&\quad - \eta R \left(1 - \frac{\eta L_F}{2}\right) 
	\cdot \frac{1}{R} \sum_{r=0}^{R-1} 
	\left\| \nabla \tilde{F}_{k}(\bm{w}_{k,r}^{t+1}; \bm{v}^t) \right\|_2^2.
	\end{align}
\end{lemma}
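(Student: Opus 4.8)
The plan is to run the standard SGD descent argument one local step at a time, exploiting that the only stochasticity in the per-step update direction comes from the mini-batch gradient of $f_k$, while the sign-regularizer contribution $\lambda\mathbf{\Phi}^{\top}(\tanh(\gamma\mathbf{\Phi}\bm{w})-\bm{v}^t)$ and the weight-decay term $\mu\bm{w}$ are deterministic functions of the current iterate. Write $\bm{g}_{k,r} = \nabla \hat{f}_k(\bm{w}_{k,r}^{t+1};\mathcal{B}_{k,r}) + \lambda\mathbf{\Phi}^{\top}(\tanh(\gamma\mathbf{\Phi}\bm{w}_{k,r}^{t+1})-\bm{v}^t) + \mu\bm{w}_{k,r}^{t+1}$ for the stochastic direction used in Line~16 of Algorithm~\ref{alg:main}, so that $\bm{w}_{k,r+1}^{t+1} = \bm{w}_{k,r}^{t+1} - \eta\bm{g}_{k,r}$. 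Conditioning on $\bm{w}_{k,r}^{t+1}$ and combining Lemma~\ref{lemma: Client Gradient} with Assumption~\ref{assump:bounded_variance_main}, we obtain $\mathbb{E}[\bm{g}_{k,r}\mid\bm{w}_{k,r}^{t+1}] = \nabla\tilde{F}_k(\bm{w}_{k,r}^{t+1};\bm{v}^t)$ and $\mathbb{E}[\|\bm{g}_{k,r}-\nabla\tilde{F}_k(\bm{w}_{k,r}^{t+1};\bm{v}^t)\|_2^2\mid\bm{w}_{k,r}^{t+1}]\le\sigma^2$, since subtracting the deterministic part leaves exactly the mini-batch error of $\nabla\hat{f}_k$, which does not inflate the variance.

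Next I would invoke the $L_F$-smoothness from Lemma~\ref{lm:LF_smooth}: the quadratic upper bound on $\tilde{F}_k$ along the segment from $\bm{w}_{k,r}^{t+1}$ to $\bm{w}_{k,r+1}^{t+1}$ gives
\[
\tilde{F}_k(\bm{w}_{k,r+1}^{t+1};\bm{v}^t) \le \tilde{F}_k(\bm{w}_{k,r}^{t+1};\bm{v}^t) - \eta\langle\nabla\tilde{F}_k(\bm{w}_{k,r}^{t+1};\bm{v}^t),\bm{g}_{k,r}\rangle + \frac{\eta^2 L_F}{2}\|\bm{g}_{k,r}\|_2^2 .
\]
Taking the conditional expectation over $\mathcal{B}_{k,r}$, the inner-product term collapses to $-\eta\|\nabla\tilde{F}_k(\bm{w}_{k,r}^{t+1};\bm{v}^t)\|_2^2$ by unbiasedness, and the last term is controlled by the bias--variance split $\mathbb{E}\|\bm{g}_{k,r}\|_2^2 = \|\nabla\tilde{F}_k(\bm{w}_{k,r}^{t+1};\bm{v}^t)\|_2^2 + \mathbb{E}\|\bm{g}_{k,r}-\nabla\tilde{F}_k(\bm{w}_{k,r}^{t+1};\bm{v}^t)\|_2^2 \le \|\nabla\tilde{F}_k(\bm{w}_{k,r}^{t+1};\bm{v}^t)\|_2^2 + \sigma^2$. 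Collecting terms yields the one-step recursion
\[
\mathbb{E}\big[\tilde{F}_k(\bm{w}_{k,r+1}^{t+1};\bm{v}^t)\big] \le \tilde{F}_k(\bm{w}_{k,r}^{t+1};\bm{v}^t) - \eta\Big(1-\tfrac{\eta L_F}{2}\Big)\big\|\nabla\tilde{F}_k(\bm{w}_{k,r}^{t+1};\bm{v}^t)\big\|_2^2 + \frac{\eta^2 L_F\sigma^2}{2}.
\]

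Finally I would telescope this recursion over $r=0,\dots,R-1$, applying the tower property to pass to the full expectation over all mini-batches drawn within round $t$ (the iterates $\bm{w}_{k,r}^{t+1}$ for $r\ge 1$ are themselves random, so the gradient-norm terms on the right are read under this expectation, consistent with the statement), and using the initialization $\bm{w}_{k,0}^{t+1}=\bm{w}_k^t$. The intermediate objective values cancel pairwise, the constant term accumulates to $\tfrac{\eta^2 R L_F\sigma^2}{2}$, and rewriting $\eta(1-\tfrac{\eta L_F}{2})\sum_{r=0}^{R-1} = \eta R(1-\tfrac{\eta L_F}{2})\cdot\tfrac1R\sum_{r=0}^{R-1}$ produces exactly the claimed inequality. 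I do not expect a substantive obstacle; the only points needing care are (i) recognizing that the regularizer and $\ell_2$ contributions are conditionally deterministic, so the stochastic-gradient variance stays at $\sigma^2$, and (ii) being explicit about the filtration so that the telescoping expectation is legitimate and the ``with high probability'' qualifier on $L_F$-smoothness (inherited from the spectral-norm control of $\mathbf{\Phi}$ in Lemma~\ref{assump: BPN}) propagates through the entire round.
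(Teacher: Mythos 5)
Your proposal is correct and follows essentially the same route as the paper's proof: apply the $L_F$-smoothness quadratic upper bound per local step, use unbiasedness of the stochastic direction together with the bias--variance split $\mathbb{E}\|\bm{g}_{k,r}\|_2^2 \le \|\nabla\tilde{F}_k\|_2^2 + \sigma^2$, and telescope over $r=0,\dots,R-1$. Your added care about the filtration and about the regularizer and $\ell_2$ terms being conditionally deterministic is a welcome clarification but does not change the argument.
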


\subsection{Main Convergence Result}
Our proof relies on several key intermediate results (proofs in the Appendix). 
These are combined to analyze the evolution of a global potential function:
\begin{equation} \label{eq:potential}
\Psi^t \triangleq \sum_{k=1}^{K} p_k \tilde{F}_{k}(\bm{w}_{k}^{t}; \bm{v}^t).
\end{equation}

 Our main result bounds the time-averaged expected squared norm of the local gradients.
 
 \begin{theorem}[Local Convergence]
 	\label{thm:main_convergence}
 	Under standard assumptions for federated learning analysis (detailed in the Appendix),
 	if the learning rate satisfies $\eta \leq \frac{1}{L_F}$, after $T$ communication rounds with partial client participation, Algorithm \ref{alg:main} guarantees the following convergence
 	\begin{multline}
 	\frac{1}{T}\sum_{t=0}^{T-1}\frac{1}{R}\sum_{r=0}^{R-1} \mathbb{E}\left[\sum_{k=1}^K p_k \|\nabla \tilde{F}_{k}(\bm{w}_{k,r}^{t+1}; \bm{v}^t)\|_2^2 \right] \\ \le \frac{\Psi^0 - F^*}{c_1 T} + \frac{\eta^2 R L_F \sigma^2}{2c_1} + \frac{\Delta_{\max}}{c_1}+\frac{\lambda E_S}{c_1},
 	\end{multline}
 	where $c_1=\eta R(1-\eta L_F/2)$.
 	The term $\Psi^0$ is the initial value of the potential function in \eqref{eq:potential}, $F^*$ is its lower bound, $L_F = L + \lambda \gamma C_{\Phi}^2 + \mu$ is the smoothness constant of the client objective, 
  $\Delta_{\max}=2\lambda(\sqrt{m} W C_{\Phi}+ m)$ bounds the error from the one-bit server update, and $E_S$ bounds the error from client sampling, defined as
 	\begin{equation}
 	E_S =  2m \sqrt{\frac{K(K-S)}{S(K-1)} \sum_{k=1}^K p_k^2}.
 	\end{equation}
 \end{theorem}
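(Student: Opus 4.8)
The plan is to track the global potential $\Psi^t$ from \eqref{eq:potential} across a single communication round, splitting the round into a client-side descent phase (with the consensus vector $\bm{v}^t$ held fixed) followed by a server-side sign-aggregation phase (with the updated models held fixed), and then telescoping over $t=0,\dots,T-1$. For the first phase, the transition $\bm{w}_k^t\to\bm{w}_k^{t+1}$ at fixed $\bm{v}^t$: I would apply Lemma~\ref{lemma:client_update} to each client and take the $p_k$-weighted sum. Since $\eta\le 1/L_F$ gives $1-\eta L_F/2\ge 1/2>0$, so $c_1=\eta R(1-\eta L_F/2)>0$, and using $\sum_k p_k=1$, this yields
\[
\mathbb{E}\Big[\textstyle\sum_{k}p_k\tilde{F}_k(\bm{w}_k^{t+1};\bm{v}^t)\Big]\le \Psi^t+\tfrac{\eta^2 R L_F\sigma^2}{2}-\tfrac{c_1}{R}\textstyle\sum_{r=0}^{R-1}\sum_{k}p_k\big\|\nabla\tilde{F}_k(\bm{w}_{k,r}^{t+1};\bm{v}^t)\big\|_2^2 .
\]

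For the server phase, the key structural observation (Lemma~\ref{lemma: Client Gradient}) is that $\tilde{F}_k$ depends on the consensus vector only through the linear term $-\lambda\langle\bm{v},\mathbf{\Phi}\bm{w}_k\rangle$, so with the models fixed the potential changes by exactly
\[
\Psi^{t+1}-\textstyle\sum_k p_k\tilde{F}_k(\bm{w}_k^{t+1};\bm{v}^t)=\lambda\big\langle \bm{v}^t-\bm{v}^{t+1},\ \textstyle\sum_k p_k\mathbf{\Phi}\bm{w}_k^{t+1}\big\rangle .
\]
Inserting $\bm{z}_k^{t+1}=\mathrm{sign}(\mathbf{\Phi}\bm{w}_k^{t+1})$ and splitting the inner product gives two pieces. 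The \emph{quantization} piece $\lambda\langle\bm{v}^t-\bm{v}^{t+1},\sum_k p_k(\mathbf{\Phi}\bm{w}_k^{t+1}-\bm{z}_k^{t+1})\rangle$ I would bound with $\|\bm{v}^t-\bm{v}^{t+1}\|_\infty\le 2$ and $\|\mathbf{\Phi}\bm{w}_k^{t+1}-\bm{z}_k^{t+1}\|_1\le\|\mathbf{\Phi}\bm{w}_k^{t+1}\|_1+\|\bm{z}_k^{t+1}\|_1\le\sqrt{m}\,C_{\Phi}\|\bm{w}_k^{t+1}\|_2+m$, then invoking Lemma~\ref{assump: BPN} and Lemma~\ref{lemma:Bounded Model Norm} together with $\sum_k p_k=1$, which produces the bound $2\lambda(\sqrt{m}\,C_{\Phi}W+m)=\Delta_{\max}$. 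The \emph{sampling} piece $\lambda\langle\bm{v}^t-\bm{v}^{t+1},\sum_k p_k\bm{z}_k^{t+1}\rangle$: by Lemma~\ref{lem:server_optimality}, $\bm{v}^{t+1}$ is the maximizer over $\{\pm1\}^m$ of $\langle\bm{v},\hat{\bm{z}}^{t+1}\rangle$, where $\hat{\bm{z}}^{t+1}$ is the sampled counterpart of the full weighted average $\bar{\bm{z}}^{t+1}$, so $\langle\bm{v}^t-\bm{v}^{t+1},\hat{\bm{z}}^{t+1}\rangle\le 0$; hence this piece is at most $\lambda\langle\bm{v}^t-\bm{v}^{t+1},\bar{\bm{z}}^{t+1}-\hat{\bm{z}}^{t+1}\rangle\le 2\lambda\sqrt{m}\,\|\bar{\bm{z}}^{t+1}-\hat{\bm{z}}^{t+1}\|_2$, and taking expectation over $\mathcal{S}^t$, applying Jensen, invoking Lemma~\ref{lem:sampling_variance_main}, and averaging over $t$ turns this into $\lambda E_S$.

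Combining the two phases gives the one-round drift $\mathbb{E}[\Psi^{t+1}]\le\Psi^t+\tfrac{\eta^2 R L_F\sigma^2}{2}+\Delta_{\max}+\lambda\,(\text{sampling term at }t)-\tfrac{c_1}{R}\sum_r\sum_k p_k\|\nabla\tilde{F}_k(\bm{w}_{k,r}^{t+1};\bm{v}^t)\|_2^2$. Summing over $t=0,\dots,T-1$, applying $\Psi^T\ge F^*$ from Assumption~\ref{assump: Bounded Below}, dividing by $c_1T$, and identifying the averaged sampling terms with $E_S$ delivers the claimed inequality; the ``with high probability'' qualifier is inherited from Lemmas~\ref{lm:LF_smooth} and~\ref{lemma:Bounded Model Norm}.

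The main obstacle is the server phase. Unlike an ordinary descent lemma, the majority-vote aggregation can \emph{increase} $\Psi$, and that increase must be disentangled cleanly into two distinct error sources: the one-bit quantization gap between the true projection $\mathbf{\Phi}\bm{w}_k^{t+1}$ and the transmitted sketch $\bm{z}_k^{t+1}$ (which contributes $\Delta_{\max}$), and the partial-participation gap between the full weighted average and the sampled one (which contributes $E_S$ via Lemma~\ref{lem:sampling_variance_main}). The leverage that makes this tractable is precisely the linearity of the $\bm{v}$-dependence of $\tilde{F}_k$ — the increment is an inner product against a fixed vector, so the exact optimality of the weighted majority-vote rule (Lemma~\ref{lem:server_optimality}) can be used to make the ``sampled'' part of that inner product non-positive — while the bounded-model-norm guarantee (Lemma~\ref{lemma:Bounded Model Norm}) is what keeps both $\Delta_{\max}$ and the per-round sampling term finite, and is also why the bound holds only with high probability.
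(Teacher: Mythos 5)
Your proposal is correct and follows essentially the same route as the paper's proof: the same split of the one-round drift of $\Psi^t$ into a client descent phase (Lemma~\ref{lemma:client_update}) and a server phase whose increment is the linear inner product $\lambda\langle\bm{v}^t-\bm{v}^{t+1},\sum_k p_k\mathbf{\Phi}\bm{w}_k^{t+1}\rangle$, handled by the same three ingredients — optimality of the majority vote against the sampled aggregate, Cauchy--Schwarz plus the sampling-variance lemma for $E_S$, and the bounded-model-norm argument for $\Delta_{\max}$ — followed by the same telescoping. Your two-piece grouping of the server term is just a reorganization of the paper's $\Theta_1+\Theta_2+\Theta_3$ decomposition.
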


\begin{remark}%[Interpretation and Discussion]
	Theorem \ref{thm:main_convergence} shows that as $T$ grows, the average squared gradient norm converges to a neighborhood of zero at an $\mathcal{O}(1/(RT))$ rate, which means that the algorithm converges to a stationary point of the
	global potential function. The neighborhood (convergence error) of pFed1BS is governed by stochastic noise $\mathcal{O}(\eta L_F \sigma^2)$, communication error $\mathcal{O}(\Delta_{\max}/({\eta R }))$ and client sampling error $\mathcal{O}({\lambda E_S}/({\eta R}))$. To bound the convergence error, the regularization parameter $\lambda$ must satisfy $\lambda=\mathcal{O}\left(1/n \right)$, which simultaneously controls $L_F$, $\Delta_{\max}$ and $E_S$.
\end{remark}

\begin{remark}
	Note that the client sampling error $E_S$ vanishes when $S=K$, \textit{i.e.}, full client participation. In this case, our convergence bound recovers the result for the full participation setting.
\end{remark}

\section{Experiments}

\begin{table*}[t]
	\centering
	\renewcommand{\arraystretch}{1.3}
	\resizebox{\textwidth}{!}{%
		\begin{tabular}{l cc | cc | cc | cc | cc} 
			\toprule[1.2pt]
			\multirow{2}{*}{\textbf{Method}} & \multicolumn{2}{c|}{\textbf{MNIST}} & \multicolumn{2}{c|}{\textbf{FMNIST}} & \multicolumn{2}{c|}{\textbf{CIFAR-10}} & \multicolumn{2}{c|}{\textbf{CIFAR-100}} & \multicolumn{2}{c}{\textbf{SVHN}} \\
			\cmidrule(lr){2-3} \cmidrule(lr){4-5} \cmidrule(lr){6-7} \cmidrule(lr){8-9} \cmidrule(l){10-11}
			& {\bf{{Acc. (\%)}}} & {\bf{{Cost (MB)}}} & {\bf{{Acc. (\%)}}} & {\bf{{Cost (MB)}}} & {\bf{{Acc. (\%)}}} & {\bf{{Cost (MB)}}} & {\bf{{Acc. (\%)}}} & {\bf{{Cost (MB)}}} & {\bf{{Acc. (\%)}}} & {\bf{{Cost (MB)}}} \\
			\midrule[1.2pt]
			FedAvg   & 97.21 $\pm$ 0.48 & 31.06 & 84.40 $\pm$ 0.09 & 31.06 & 87.78 $\pm$ 1.45 & 42.85 & 59.60 $\pm$ 0.66 & 2335.85 & 96.33 $\pm$ 0.28 & 42.85 \\
			\midrule[0.8pt]
			OBDA     & 92.54 $\pm$ 0.32 & 0.97\textsubscript{\(\downarrow\)96.88\%} & 78.51 $\pm$ 0.62 & 0.97\textsubscript{\(\downarrow\)96.88\%} & 73.26 $\pm$ 5.39 & 1.34\textsubscript{\(\downarrow\)96.88\%} & 42.47 $\pm$ 2.02 & 72.95\textsubscript{\(\downarrow\)96.88\%} & 84.32 $\pm$ 1.15 & 1.34\textsubscript{\(\downarrow\)96.88\%} \\
			OBCSAA   & 92.20 $\pm$ 0.20 & 15.58\textsubscript{\(\downarrow\)49.84\%} & 80.13 $\pm$ 0.45 & 15.58\textsubscript{\(\downarrow\)49.84\%} & 83.57 $\pm$ 0.14 & 21.49\textsubscript{\(\downarrow\)49.84\%} & 48.99 $\pm$ 0.54 & 1171.57\textsubscript{\(\downarrow\)49.84\%} & 87.10 $\pm$ 0.65 & 21.49\textsubscript{\(\downarrow\)49.84\%} \\
			zSignFed & 94.83 $\pm$ 0.07 & 16.01\textsubscript{\(\downarrow\)48.45\%} & 82.55 $\pm$ 0.28 & 16.01\textsubscript{\(\downarrow\)48.45\%} & 67.60 $\pm$ 3.46 & 22.04\textsubscript{\(\downarrow\)48.56\%} & 40.17 $\pm$ 2.32 & 1203.78\textsubscript{\(\downarrow\)48.45\%} & 85.33 $\pm$ 1.05 & 22.04\textsubscript{\(\downarrow\)48.56\%} \\
			EDEN     & 96.50 $\pm$ 0.35 & 12.15\textsubscript{\(\downarrow\)60.88\%} & 83.85 $\pm$ 0.30 & 12.15\textsubscript{\(\downarrow\)60.88\%} & 84.91 $\pm$ 0.53 & 22.76\textsubscript{\(\downarrow\)46.88\%} & 47.55 $\pm$ 1.12 & 1205.33\textsubscript{\(\downarrow\)48.39\%} & 89.01 $\pm$ 0.48 & 22.76\textsubscript{\(\downarrow\)46.88\%} \\
			FedBAT   & 96.42 $\pm$ 0.41 & 11.88\textsubscript{\(\downarrow\)61.75\%} & 83.70 $\pm$ 0.35 & 11.88\textsubscript{\(\downarrow\)61.75\%} & 81.20 $\pm$ 0.95 & 22.04\textsubscript{\(\downarrow\)48.56\%} & 46.89 $\pm$ 1.25 & 1198.71\textsubscript{\(\downarrow\)48.68\%} & 88.89 $\pm$ 0.51 & 22.04\textsubscript{\(\downarrow\)48.56\%} \\
			\midrule[0.8pt]
			\textbf{pFed1BS} & 
			\textbf{97.83 $\pm$ 0.02} & 
			\textbf{0.10\textsubscript{\(\downarrow\)99.68\%}} & 
			\textbf{84.15 $\pm$ 0.21} & 
			\textbf{0.10\textsubscript{\(\downarrow\)99.68\%}} & 
			{\bf{85.21 $\pm$ 0.34}} & 
			\textbf{0.13\textsubscript{\(\downarrow\)99.69\%}} & 
			\textbf{52.88 $\pm$ 0.32} & 
			\textbf{7.30\textsubscript{\(\downarrow\)99.69\%}} &
			{\bf{95.07 $\pm$ 0.21}} &
			\textbf{0.13\textsubscript{\(\downarrow\)99.69\%}} \\
			\bottomrule[1.2pt]
		\end{tabular}
	} %
	\caption{Top-1 accuracy (\%) and one-round communication cost (MB) of FL algorithms on various datasets under a Non-IID setting. Best results in each column are highlighted in \textbf{bold}. The row for our proposed method, pFed1BS, is highlighted with a gray background for emphasis.}
	\label{tab:main_results}
\end{table*}

We empirically evaluate pFed1BS against state-of-the-art baselines on several benchmarks under challenging \textit{non-i.i.d.} conditions.

\subsection{Experimental Setup}

{\bf{Datasets and Models: }}
Our experiments use standard image classification benchmarks: MNIST, FMNIST, CIFAR-10, CIFAR-100, and SVHN. 
To further assess generalizability, we also include experiments on SVHN. 
We use a two-layer MLP for MNIST and FMNIST, and VGG architectures for the other datasets. 
We simulate a highly \textit{non-i.i.d.} environment by partitioning data among 20 clients based on labels.

{\bf{Baselines:}}
We compare pFed1BS against FedAvg and several one-bit CEFL methods: OBDA, OBCSAA, and zSignFed. 
To provide a comprehensive evaluation, we also include state-of-the-art PFL and communication-efficient baselines: EDEN \cite{vargaftik2022eden} and  FedBAT\cite{li2024fedbat}.

{\bf{Implementation Details:}}
We run experiments for 100-300 communication rounds with multiple local epochs. 
Key hyperparameters were set via a grid search to $\lambda=0.0005$, $\mu=0.00001$, and $\gamma=10000$.
The compression ratio is fixed at $m/n=0.1$.
All experiments are implemented in PyTorch and run on an NVIDIA RTX 3090 Ti GPU, with results averaged over 10 independent runs.

\subsection{Evaluation Metrics}
We use the following metrics for evaluation:
\begin{itemize}
	\item {\bf{Top-1 Accuracy:}} We report the average Top-1 test accuracy on the held-out test set, aggregated across all clients' personalized models.
	\item {\bf{Communication Cost:}} We define the per-round communication cost as the total number of bits transmitted between the server and all participating clients in a single round. 
	For pFed1BS, this is the sum of all uplink one-bit sketches (size $m$) and the downlink one-bit consensus vector (size $m$).
\end{itemize}

\subsection{Main Results}
Table \ref{tab:main_results} presents the final test accuracy and per-round communication cost. 
pFed1BS establishes a new state-of-the-art for communication-constrained FL. 
On all datasets, it achieves accuracy that is highly competitive with or superior to all baselines, including full-precision FedAvg and advanced communication-efficient methods like OBDA, while reducing communication costs by over $96\%$. 
For instance, on CIFAR-10, pFed1BS achieves $85.21\%$ accuracy with only $0.13$ MB per round, whereas OBDA requires $1.34$ MB for a similar $73.26\%$ accuracy. 

\begin{figure}[t]
	\centering
	\includegraphics[width=0.9\columnwidth]{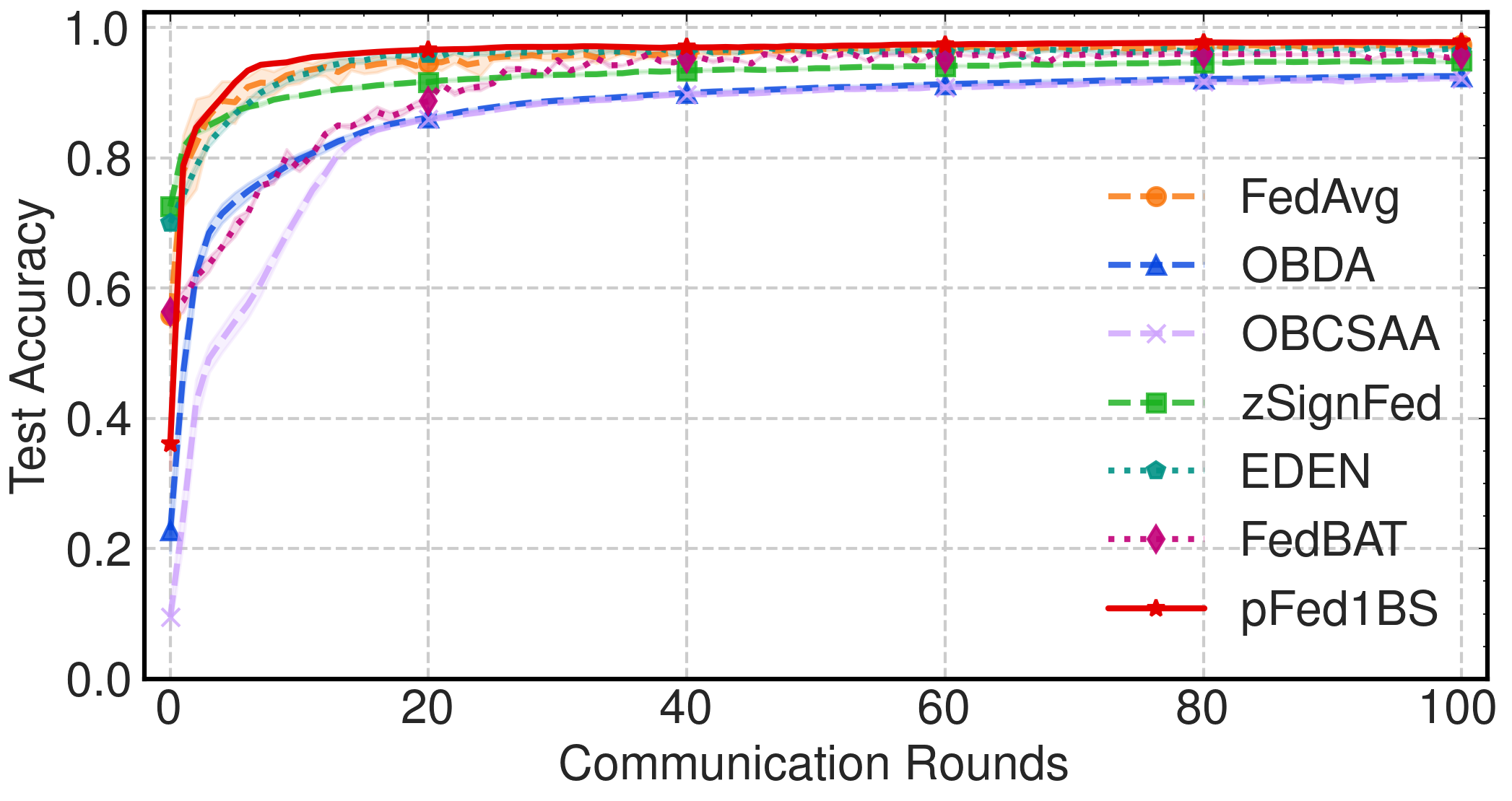}
	\caption{Test accuracy on MNIST (\textit{non-i.i.d.}). pFed1BS achieves both faster convergence and higher final accuracy.}
	\label{fig:accuracy_curve}
\end{figure}

\begin{figure}[t]
	\centering
	\includegraphics[width=0.9\columnwidth]{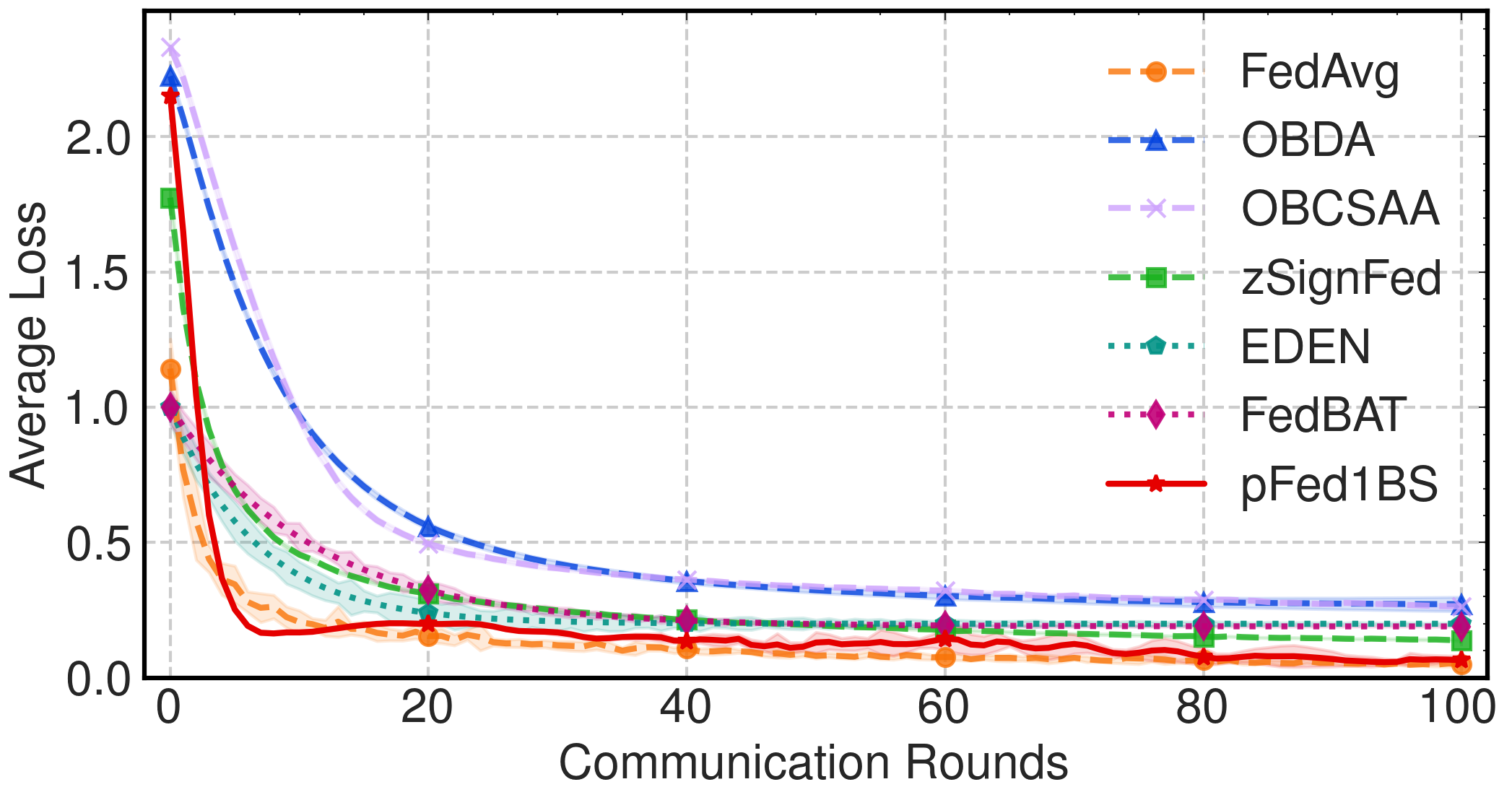}
	\caption{Average training loss on MNIST (\textit{non-i.i.d.}). 
		pFed1BS exhibits significantly faster convergence to a lower loss value and maintains stable behavior throughout the training process.}
	\label{fig:loss_curve}
\end{figure}

On more challenging datasets like CIFAR-100, the advantage is even more pronounced: one-bit baselines suffer a performance collapse, while pFed1BS maintains high accuracy, demonstrating the critical role of personalization in enabling extreme compression. 
The convergence plots in Figures \ref{fig:accuracy_curve} and \ref{fig:loss_curve} further show that pFed1BS achieves faster convergence to a better and more stable solution.

\section{Conclusion}
In this work, we proposed pFed1BS, a novel personalized federated learning framework that successfully reconciles the dual challenges of extreme communication compression and data heterogeneity. 
By integrating a bidirectional one-bit sketching mechanism with a principled sign-based regularizer, our method reduces communication costs by over $99\%$ while simultaneously achieving state-of-the-art accuracy. 
Crucially, pFed1BS achieves this extreme compression with only a minimal trade-off in model accuracy compared to full-precision methods, while decisively outperforming other one-bit baselines that suffer a significant performance collapse in \textit{non-i.i.d.} settings. 
Our work demonstrates that a carefully designed personalization strategy is the key to making extreme compression schemes viable, preventing catastrophic performance loss and establishing a new, practical frontier for deploying powerful federated models in real-world, resource-constrained environments.

\section{Acknowledgments}
This work was supported by the National Natural Science Foundation of China under Grant No. 62501432, Science and Technology on Electronic Information Control Laboratory, the Postdoctoral Fellowship Program of CPSF under Grant No. GZC20232038, and the China Postdoctoral Science Foundation under Grant No. 2024M762521.

\newpage
\appendix

\setcounter{table}{0}
\setcounter{figure}{0}
\setcounter{section}{0}
\setcounter{equation}{0}
\setcounter{lemma}{0}
\setcounter{assumption}{0}
\setcounter{theorem}{0}

\section{Appendix}
In this appendix, we provide the proofs for the theorems and lemmas in the main paper, as well as additional experimental settings and results that further validate our proposed framework, pFed1BS.

\paragraph{Appendix Contents:}
\begin{itemize}
	\item {\bf{A. Additional Experimental Results:}} We present ablation studies on the MNIST and CIFAR-10 datasets to analyze the impact of key components and hyperparameters of pFed1BS.
	\begin{itemize}
		\item A.1. Effect of the Number of Participating Clients ($S$)
		\item A.2. Effect of the Number of Local Epochs ($R$)
		\item A.3. Performance with the Fast Hadamard Transform (FHT)
		\item A.4. Hyperparameter Sensitivity Analysis
	\end{itemize}
	
	\item {\bf{B. Theoretical Analysis and Proofs:}} 
	We provide detailed proofs for all lemmas and the main convergence theorem.
	\begin{itemize}
		\item B.1. Assumptions
		\item B.2. Supporting Lemmas and Proofs
		\item B.3. Main Convergence Proof (Theorem 1)
	\end{itemize}
\end{itemize}

\subsection{A. Additional Experimental Results}
We conduct further analysis on the MNIST dataset under the non-i.i.d. setting to investigate the impact of our method's key components and hyperparameters.

\paragraph{A.1 Effect of the Number of Participating Clients ($S$).}
To understand how the degree of client participation affects performance, we vary the number of clients ($S$) sampled in each communication round, from sparse participation to full participation ($S=20$). 
As shown in Figure~\ref{fig:ablation_s}, 
model performance improves directly with the number of participating clients.
A larger $S$ provides the server with a more accurate and stable estimate of the global consensus, leading to faster convergence and higher final accuracy. 
Even with a small fraction of participating clients (e.g., $S=5$), pFed1BS maintains robust performance, demonstrating its efficacy in settings with limited client availability.

%\begin{figure}[h]
%	\centering
%	\includegraphics[width=3.3in]{clients.png}
%	\caption{Performance of pFed1BS with a varying number of participating clients ($S$) on MNIST.
%	}
%	\label{fig:ablation_s}
%\end{figure}

\begin{figure}[h]
	\centering
	\begin{subfigure}{\linewidth}
		\centering
		\includegraphics[width=0.8\linewidth]{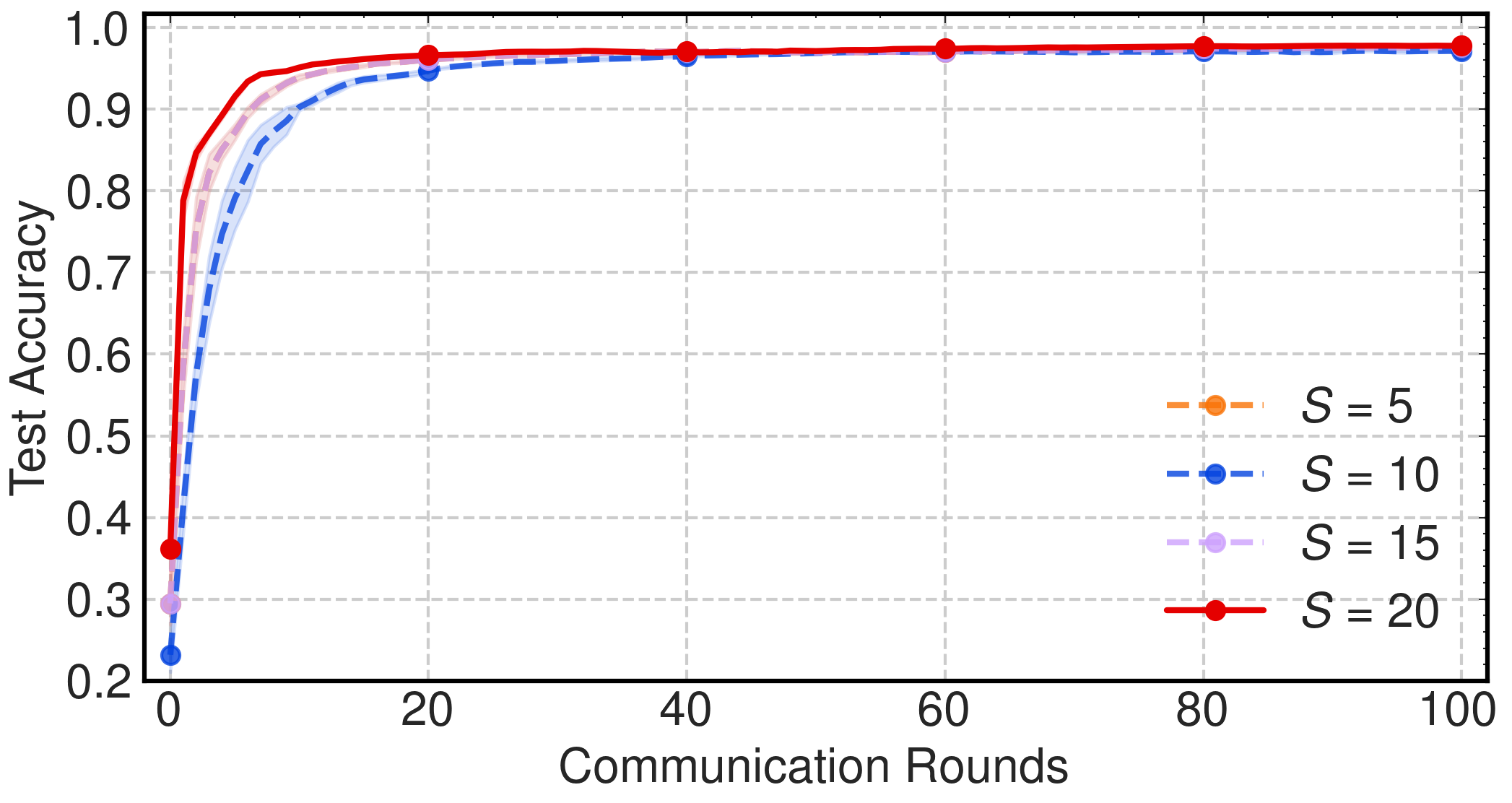}
		\subcaption{Test accuracy \textit{v.s.} communication rounds.}
		\label{fig:clients_sub_a}
	\end{subfigure}
	\vspace{1em}
	\begin{subfigure}{\linewidth}
		\centering
		\includegraphics[width=0.8\linewidth]{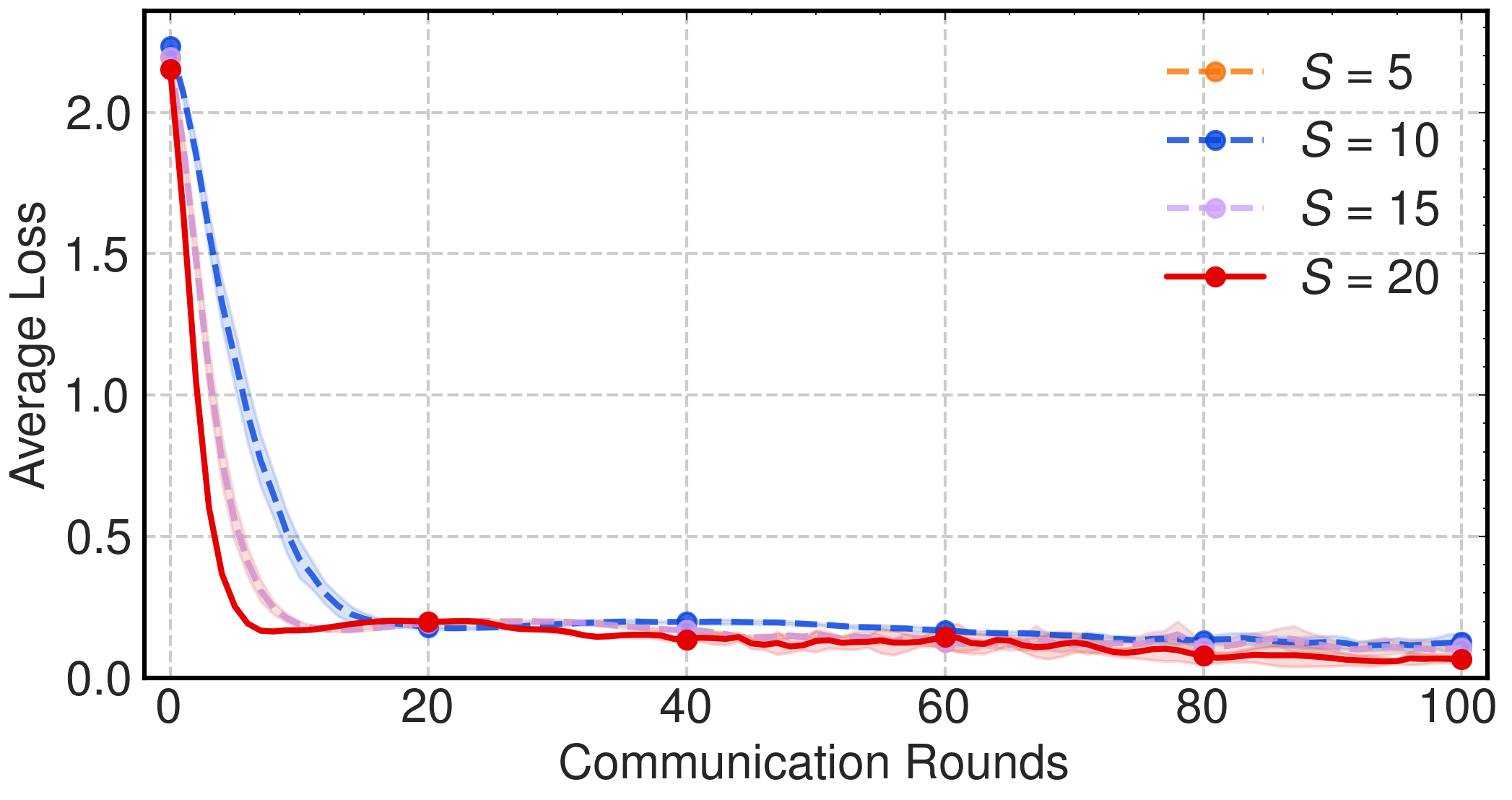}
		\subcaption{Average training loss \textit{v.s.} communication rounds.}
		\label{fig:clients_sub_b}
	\end{subfigure}
	\caption{Performance of pFed1BS with a varying number of participating clients ($S$) on MNIST.}
	\label{fig:ablation_s}
\end{figure}

\paragraph{A.2. Effect of Local Epochs ($R$).}
The number of local epochs, $R$, determines the amount of computation each client performs between communication rounds. 
We evaluate the impact of varying $R$ from $5$ to $30$.
As shown in Figure~\ref{fig:local_epochs}, increasing the amount of local work generally accelerates convergence in terms of communication rounds. 
For instance, increasing $R$ from $5$ to $20$ leads to a noticeable improvement in convergence speed. 
However, this benefit saturates quickly; performance for  $R=20$ is nearly identical.
This suggests that while sufficient local training is beneficial, excessive local updates provide diminishing returns and may not be the most efficient use of computational resources, making $R=20$ or $R=25$ a practical choice for this task.
%\begin{figure}[h]
%	\centering
%	\includegraphics[width=3.3in]{localr.png}
%	\caption{Effect of the number of local epochs ($R$) on MNIST.
%	}
%	\label{fig:local_epochs}
%\end{figure}

\begin{figure}[h]
	\centering
	\begin{subfigure}{\linewidth}
		\centering
		\includegraphics[width=0.8\linewidth]{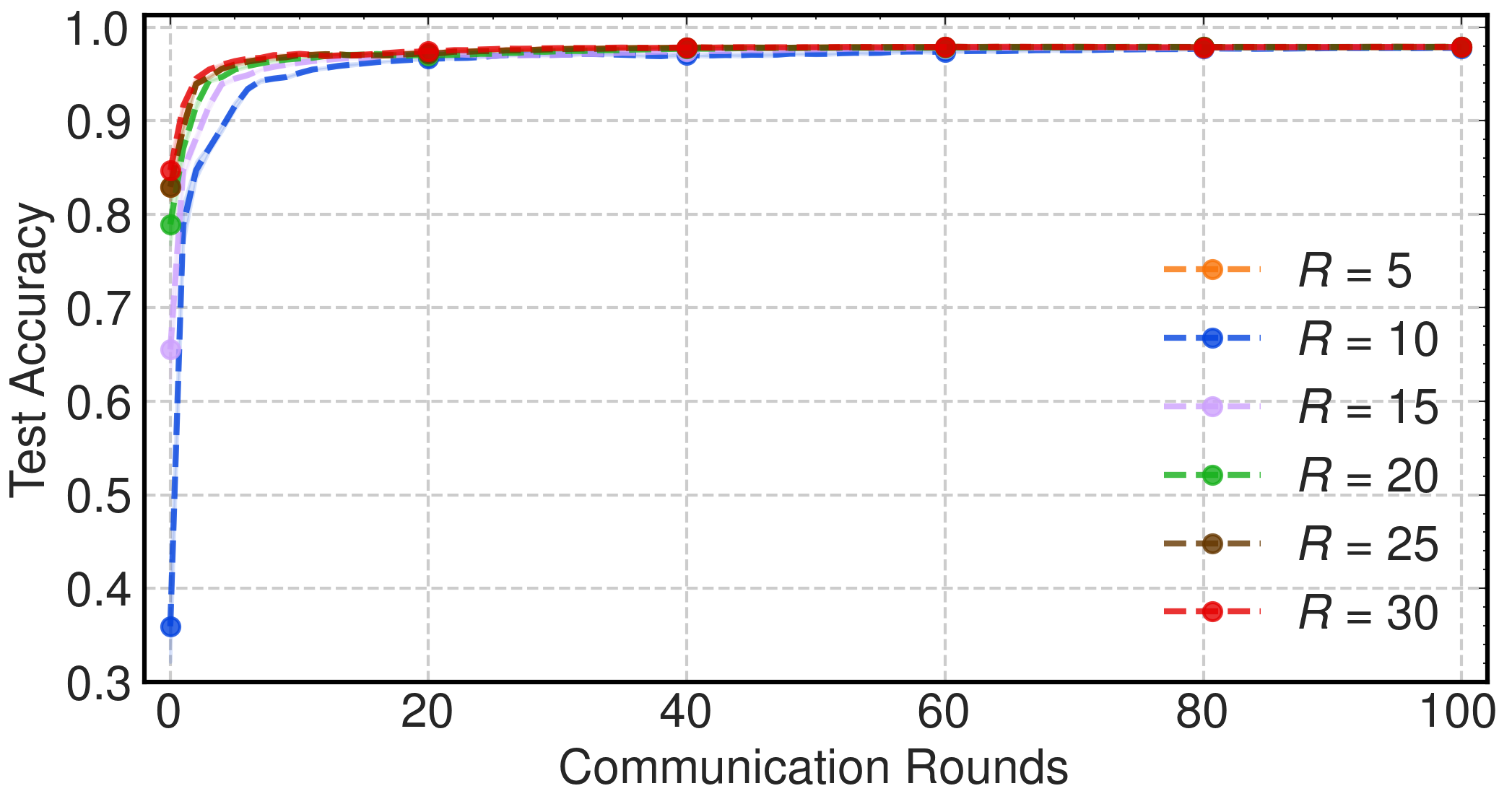}
		\subcaption{Test accuracy \textit{v.s.} communication rounds.}
		\label{fig:local_epochs_sub_a}
	\end{subfigure}
	\vspace{1em}
	\begin{subfigure}{\linewidth}
		\centering
		\includegraphics[width=0.8\linewidth]{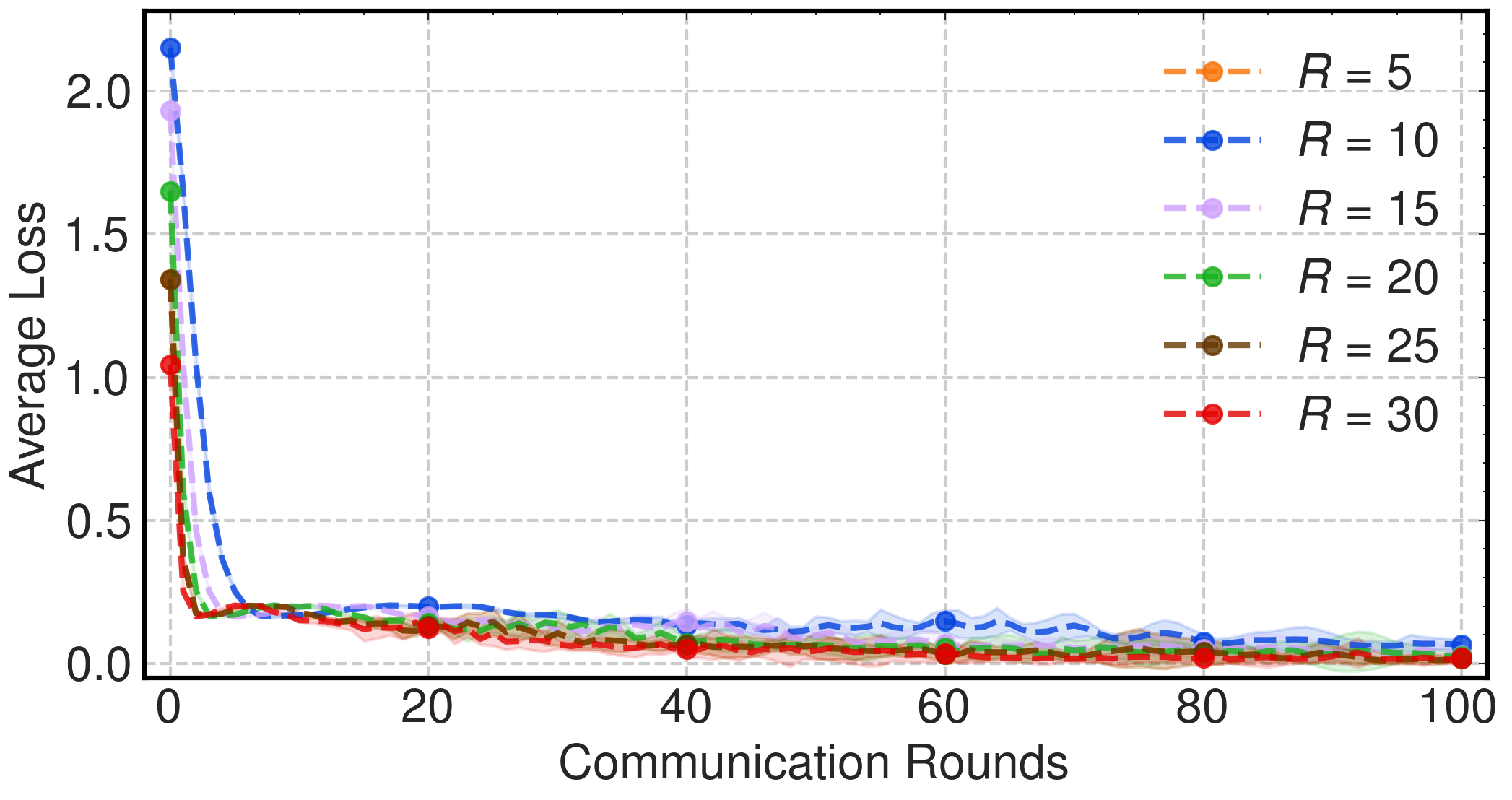}
		\subcaption{Average training loss \textit{v.s.} communication rounds.}
		\label{fig:local_epochs_sub_b}
	\end{subfigure}
	\caption{Effect of the number of local epochs ($R$) on MNIST.}
	\label{fig:local_epochs}
\end{figure}

\paragraph{A.3. Performance with the Fast Hadamard Transform (FHT).}
A key claim of our work is that the computational efficiency gained by using a structured projection (FHT) does not come at the cost of model performance. 
To validate this, we compared our FHT-based implementation against a baseline using a dense Gaussian projection matrix. 
The results in Figure~\ref{fig:ablation_fht} are decisive. The accuracy and loss curves for both methods are nearly identical throughout the training process.
This empirically confirms that our use of FHT provides its significant computational advantages (from $\mathcal(O)(mn)$ to $\mathcal(O)(n\log n)$) with no discernible impact on convergence or final model quality, making it a critical component for the scalability of pFed1BS.

%\begin{figure}[h]
%	\centering
%	\includegraphics[width=3.3in]{exfht.png}
%	\caption{Performance comparison of pFed1BS with a structured FHT-based projection versus a dense Gaussian projection.
%	}
%	\label{fig:ablation_fht}
%\end{figure}

\begin{figure}[h]
	\centering
	\begin{subfigure}{\linewidth}
		\centering
		\includegraphics[width=0.8\linewidth]{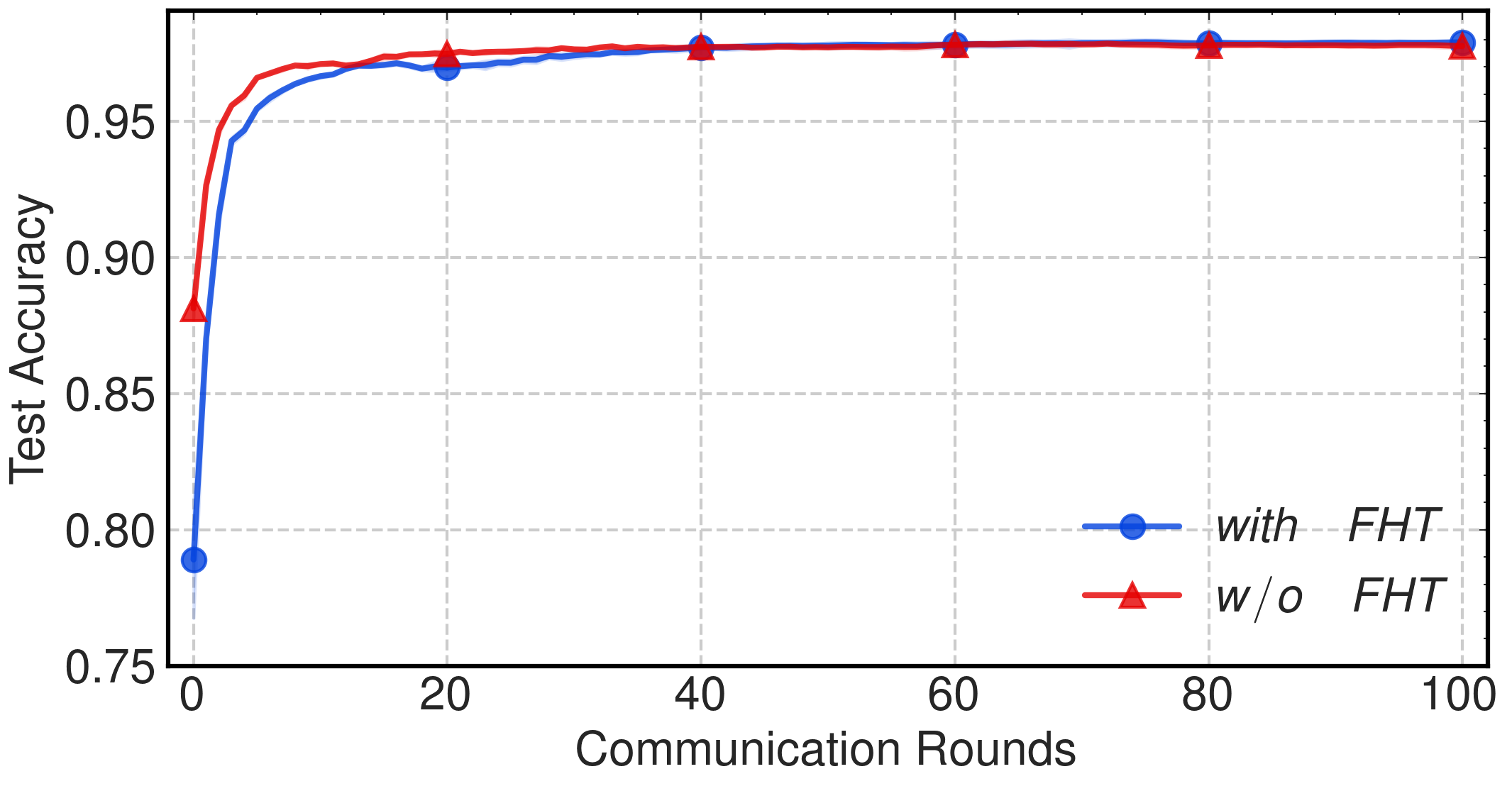}
		\subcaption{Test accuracy \textit{v.s.} communication rounds.}
		\label{fig:fht_sub_a}
	\end{subfigure}
	\vspace{1em}
	\begin{subfigure}{\linewidth}
		\centering
		\includegraphics[width=0.8\linewidth]{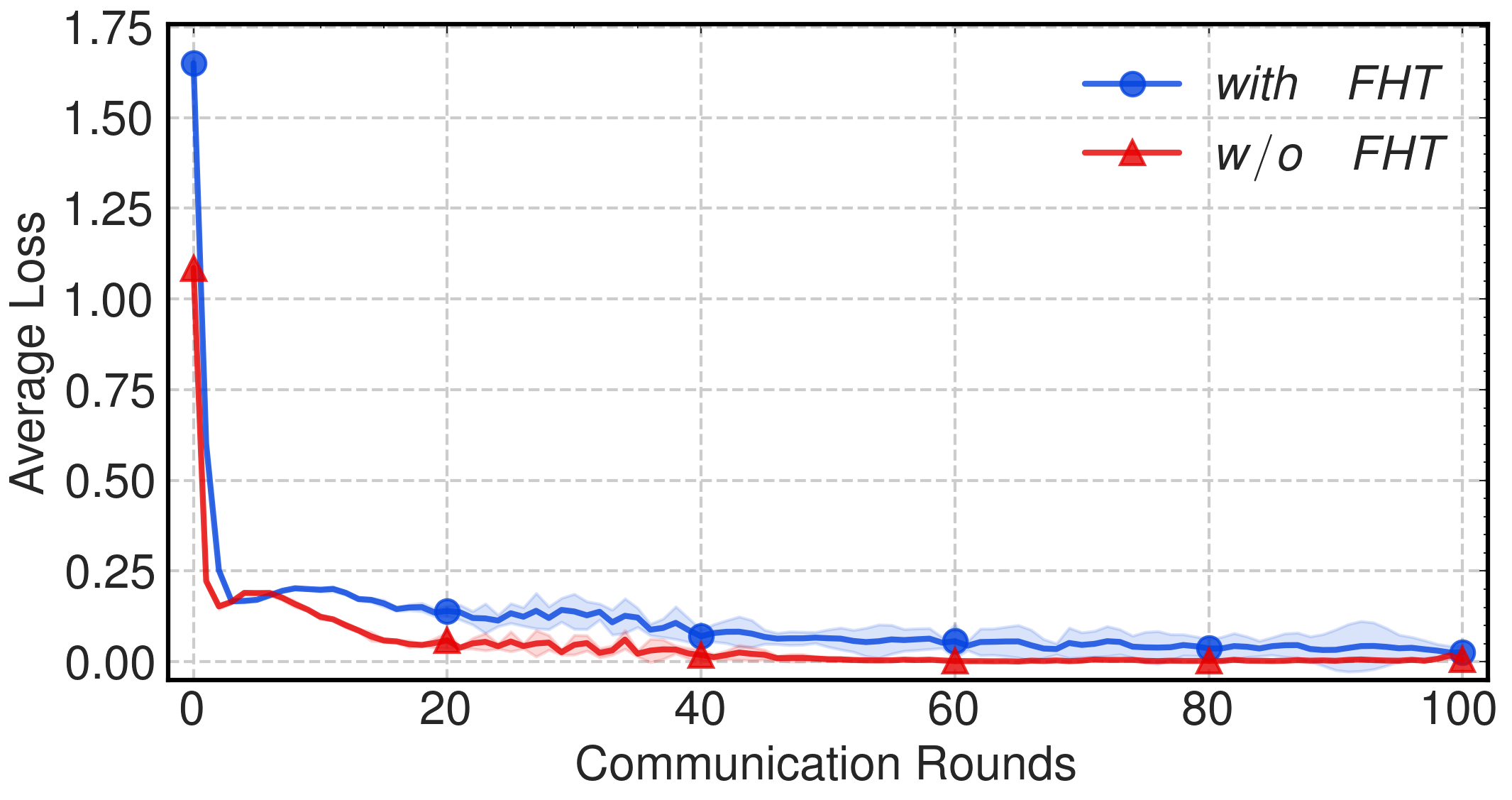}
		\subcaption{Average training loss \textit{v.s.} communication rounds.}
		\label{fig:fht_sub_b}
	\end{subfigure}
	\caption{Performance comparison of pFed1BS with a structured FHT-based projection versus a dense Gaussian projection.}
	\label{fig:ablation_fht}
\end{figure}

\paragraph{A.4. Hyperparameter Sensitivity Analysis.}
To validate the robustness of pFed1BS, we conduct a comprehensive sensitivity analysis of its key hyperparameters on the CIFAR-10 dataset under \textit{non-i.i.d.} conditions. The results are presented in Table~\ref{tab:hyperparameter_sensitivity}.

\begin{table*}[t]
	\centering
	\renewcommand{\arraystretch}{1.2}
	% --- Sub-table (a) for Lambda ---
	\begin{subtable}[t]{0.32\textwidth}
		\centering
		\label{tab:sens_lambda}
		\begin{tabular}[t]{l c}
			\toprule[1.2pt]
			\textbf{Value of $\lambda$} & \textbf{Accuracy (\%)} \\
			\midrule[0.8pt]
			$5 \times 10^{-7}$ & $86.46 \pm 0.28$ \\
			$5 \times 10^{-6}$ & $\mathbf{86.61 \pm 0.21}$ \\
			$5 \times 10^{-5}$ & $86.24 \pm 0.31$ \\
			$5 \times 10^{-4}$ & $86.28 \pm 0.25$ \\
			$5 \times 10^{-2}$ & $86.48 \pm 0.23$ \\
			$5 \times 10^{-1}$ & $86.29 \pm 0.35$ \\
			\bottomrule[1.2pt]
		\end{tabular}
		\caption{Impact of $\lambda$}
	\end{subtable}
	\hfill
	% --- Sub-table (b) for Mu ---
	\begin{subtable}[t]{0.32\textwidth}
		\centering
		\label{tab:sens_mu}
		\begin{tabular}[t]{l c}
			\toprule[1.2pt]
			\textbf{Value of $\mu$} & \textbf{Accuracy (\%)} \\
			\midrule[0.8pt]
			$10^{-6}$ & $86.26 \pm 0.29$ \\
			$10^{-5}$ & $\mathbf{87.03 \pm 0.18}$ \\
			$10^{-4}$ & $86.10 \pm 0.24$ \\
			$10^{-3}$ & $86.34 \pm 0.22$ \\
			$10^{-2}$ & $85.96 \pm 0.33$ \\
			$10^{-1}$ & $85.99 \pm 0.41$ \\
			\bottomrule[1.2pt]
		\end{tabular}
		\caption{Impact of $\mu$}
	\end{subtable}
	\hfill
	% --- Sub-table (c) for Gamma ---
	\begin{subtable}[t]{0.32\textwidth}
		\centering
		\label{tab:sens_gamma}
		\begin{tabular}[t]{l c}
			\toprule[1.2pt]
			\textbf{Value of $\gamma$} & \textbf{Accuracy (\%)} \\
			\midrule[0.8pt]
			$10^{1}$  & $86.82 \pm 0.26$ \\
			$10^{2}$  & $86.40 \pm 0.30$ \\
			$10^{3}$  & $86.21 \pm 0.28$ \\
			$10^{4}$  & $86.56 \pm 0.24$ \\
			$10^{5}$  & $\mathbf{87.03 \pm 0.22}$ \\
			$10^{6}$  & $86.56 \pm 0.31$ \\
			\bottomrule[1.2pt]
		\end{tabular}
		\caption{Impact of $\gamma$}
	\end{subtable}
	\caption{Sensitivity analysis of the key hyperparameters of pFed1BS ($\lambda$, $\mu$, and $\gamma$) on the CIFAR-10 (Non-IID) dataset. We report the Top-1 test accuracy (\%) as 'mean ± standard deviation' over multiple independent runs. The best result in each sub-table is shown in \textbf{bold}.}
	\label{tab:hyperparameter_sensitivity} 
\end{table*}

The analysis demonstrates that pFed1BS is remarkably robust to the precise settings of its key hyperparameters. For instance, the sign-alignment parameter $\lambda$ can be varied across six orders of magnitude with a performance fluctuation of less than $0.4\%$. Similarly, the model performs consistently well for the $\ell_2$ penalty $\mu$ and the smoothing parameter $\gamma$ across very wide ranges, with performance degrading only at extreme values. This stability reduces the burden of meticulous hyperparameter tuning, making our method more practical for real-world deployment. These results confirm that our sign-based regularizer and $\ell_2$ penalty are essential components that contribute to the final performance without introducing fragility.

\subsection{B. Theoretical Analysis and Proofs}
In this subsection, we provide the detailed theoretical analysis for our proposed Algorithm 1, including all supporting lemmas and the full proof for our main convergence result, Theorem 1, which is stated in the main paper. 
%Our analysis proceeds as follows. First, we formally state the key assumptions required for our convergence proof. Next, we define the client-side objective function and establish its key properties, including its gradient formulation, its smoothness, and the boundedness of the model norm under our algorithm's updates. 
%Finally, we prove a key lemma regarding the optimality of the server's aggregation rule and combine these elements to present the main convergence proof.

\paragraph{B.1. Assumptions.}

Our analysis relies on the following assumptions. Assumption~\ref{assump: L-smoothness} and Assumption~\ref{assump: Bounded Below} are standard assumptions in the analysis of large-scale optimization and federated learning~\cite{mcmahan2017fedavg}. 
Assumptions~\ref{assump: BSGV}~\cite{karimireddy2020scaffold,yu2019linear,t2020personalized} and~\ref{assump: BTGV}~\cite{deng2020adaptive,fallah2020personalized}  are widely used in FL context.

\begin{assumption}[L-smoothness] 
	\label{assump: L-smoothness}
	The local objective function $f_k(\cdot)$ is differentiable and $L-$smooth for all clients $k \in {1,\ldots,K}$, \textit{i.e.}, for any $\bm{w}_1$, $\bm{w}_2 \in \mathbb{R}^n$, there exists a constant $L>0$ such that
	\begin{equation}
	\|\nabla f_k(\bm{w}_1) - \nabla f_k(\bm{w}_2)\| \le L \|\bm{w}_1 - \bm{w}_2\|.
	\end{equation}
\end{assumption}

\begin{assumption}[Bounded Below] 
	\label{assump: Bounded Below}
	The global potential function 
	$\Psi(\bm{w}_1,\ldots,\bm{w}_K;\bm{v})=\sum_{k=1}^{K} p_k \tilde{F}_k(\bm{w}_k)$ is bounded below by some value 
	$F^*$.
\end{assumption}

%\begin{assumption}[Bounded Below] 
%	\label{assump: Bounded Below}
%	The global objective function $F(\bm{w}_1,\ldots,\bm{w}_K)=\sum_{k=1}^{K} p_k F_k(\bm{w}_k)$ is bounded below by some value 
%	$F^*$.
%\end{assumption}

\begin{assumption}[Bounded Stochastic Gradient Variance]
	\label{assump: BSGV}
	The variance of the stochastic gradients computed on mini-batches is bounded as follows
	\begin{equation}
	\mathbb{E}_{\mathcal{B}} \bigg[\|\nabla \hat{f}_k(\bm{w};\mathcal{B}) - \nabla f_k(\bm{w})\|^2 \bigg] \le \sigma^2, \quad \forall \bm{w}.
	\end{equation}
\end{assumption}
\begin{assumption}[Bounded Task Gradient Variance]
	\label{assump: BTGV}
	The stochastic gradient of the task loss $f_k$ has a bounded second moment, \textit{i.e.}, there exists a constant  $G>0$ such that $\mathbb{E}_{\mathcal{B}} \bigg[\|\nabla \hat{f}_k(\bm{w};\mathcal{B}) \|^2 \bigg] \leq G^2$ for all $\bm{w}$ and $k$.
\end{assumption}

\paragraph{B.2. Supporting Lemmas and Proofs.}

We now establish several key lemmas concerning the properties of the projection matrix norm, the client objective, model boundedness, and server aggregation.

\begin{lemma}[Bounded Projection Norm]
	\label{assump: BPN}
	Let the projection matrix $\mathbf{\Phi} \in \mathcal{R}^{m\times n}$ be constructed as described in the "Efficient Projection" section, involving a normalized Hadamard matrix $\bm{H}$, a random sign matrix $ \bm D$, a subsampling matrix $\bm S$ and a padding matrix $\bm{P}_{\text{pad}}$. The resulting operator has an exact spectral norm given by:
	\begin{equation}
	\|\mathbf{\Phi}\| \le \sqrt{\frac{n'}{m}},
	\end{equation}
	
	For our analysis, we formally assume that $C_{\Phi} = \sqrt{\frac{n'}{m}}$, where $C_{\Phi}=\mathcal{O}\left(\sqrt{\frac{n}{m}}\right)$.
	
\end{lemma}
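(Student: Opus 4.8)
The plan is to use the explicit factorization $\mathbf{\Phi} = \bm{S}'\bm{H}\bm{D}\bm{P}_{\text{pad}}$ from the ``Efficient Projection'' section and to track how each factor acts on the Euclidean norm. First I would record the elementary facts: the zero-padding map $\bm{P}_{\text{pad}}:\mathbb{R}^n\to\mathbb{R}^{n'}$ is an isometric embedding since $\bm{P}_{\text{pad}}^\top\bm{P}_{\text{pad}} = \bm{I}_n$, so $\|\bm{P}_{\text{pad}}\bm{x}\|_2 = \|\bm{x}\|_2$; the diagonal $\pm1$ matrix $\bm{D}$ is orthogonal, $\bm{D}^\top\bm{D} = \bm{I}_{n'}$; and the normalized Walsh--Hadamard matrix satisfies $\bm{H}^\top\bm{H} = \bm{I}_{n'}$ (the unnormalized matrix has pairwise orthogonal rows of norm $\sqrt{n'}$, and $\bm{H}$ rescales by $1/\sqrt{n'}$). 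Chaining these, $\bm{H}\bm{D}\bm{P}_{\text{pad}}$ is an isometry of $\mathbb{R}^n$ into $\mathbb{R}^{n'}$.

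Next I would handle the subsampling factor $\bm{S}\in\mathbb{R}^{m\times n'}$, whose rows are $m$ distinct rows of $\bm{I}_{n'}$: since $\bm{S}\bm{S}^\top = \bm{I}_m$, every singular value of $\bm{S}$ equals $1$, hence $\|\bm{S}\| = 1$ and $\|\bm{S}'\| = \sqrt{n'/m}$. Combining with the previous step yields the upper bound, since for any unit $\bm{x}$,
\[
\|\mathbf{\Phi}\bm{x}\|_2 = \|\bm{S}'\,(\bm{H}\bm{D}\bm{P}_{\text{pad}}\bm{x})\|_2 \le \|\bm{S}'\|\,\|\bm{H}\bm{D}\bm{P}_{\text{pad}}\bm{x}\|_2 = \sqrt{\tfrac{n'}{m}},
\]
so $\|\mathbf{\Phi}\| \le \sqrt{n'/m}$.

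For the matching lower bound I would pass to the Gram operator $\mathbf{\Phi}\mathbf{\Phi}^\top = \tfrac{n'}{m}\,\bm{S}\bm{H}\bm{D}\,(\bm{P}_{\text{pad}}\bm{P}_{\text{pad}}^\top)\,\bm{D}\bm{H}^\top\bm{S}^\top$, using $\bm{D}^\top = \bm{D}$. When $n$ is a power of two, $n' = n$ and $\bm{P}_{\text{pad}} = \bm{I}_n$, so $\bm{P}_{\text{pad}}\bm{P}_{\text{pad}}^\top = \bm{I}_{n'}$ and the orthogonality of $\bm{H}$ and $\bm{D}$ collapses the product to $\mathbf{\Phi}\mathbf{\Phi}^\top = \tfrac{n'}{m}\bm{S}\bm{S}^\top = \tfrac{n'}{m}\bm{I}_m$; thus all singular values of $\mathbf{\Phi}$ equal $\sqrt{n'/m}$ and $\|\mathbf{\Phi}\| = \sqrt{n'/m}$ exactly. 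For general $n$, $\bm{P}_{\text{pad}}\bm{P}_{\text{pad}}^\top$ is the orthogonal projection onto the first $n$ coordinates, so $\mathbf{\Phi}\mathbf{\Phi}^\top \preceq \tfrac{n'}{m}\bm{I}_m$ (re-deriving the upper bound), and $\|\mathbf{\Phi}\|$ attains $\sqrt{n'/m}$ iff the $n$-dimensional range of $\bm{H}\bm{D}\bm{P}_{\text{pad}}$ meets the coordinate subspace selected by $\bm{S}$ nontrivially, which holds in particular whenever $n+m>n'$. Since $n \le n' < 2n$ always, this gives $C_{\Phi} = \|\mathbf{\Phi}\| = \Theta(\sqrt{n/m})$, which is all the convergence analysis uses.

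The main obstacle is precisely this exactness claim. For a non-power-of-two $n$ with a small sampling budget $m$, the randomly selected coordinate subspace may miss the $n$-dimensional range of $\bm{H}\bm{D}\bm{P}_{\text{pad}}$, so the clean identity $\mathbf{\Phi}\mathbf{\Phi}^\top = \tfrac{n'}{m}\bm{I}_m$ must be weakened to the inequality $\mathbf{\Phi}\mathbf{\Phi}^\top \preceq \tfrac{n'}{m}\bm{I}_m$. I would therefore state the guarantee as $\|\mathbf{\Phi}\| \le \sqrt{n'/m}$ in full generality, with equality in the standard power-of-two regime (and more generally when $n+m>n'$), which is exactly what is needed to fix $C_{\Phi} = \Theta(\sqrt{n/m})$ in all downstream bounds.
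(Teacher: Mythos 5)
Your proof is correct and rests on the same core computation as the paper's: exploit $\bm{D}^\top\bm{D}=\bm{I}$, $\bm{H}^\top\bm{H}=\bm{I}_{n'}$, and $\bm{S}\bm{S}^\top=\bm{I}_m$ to control the Gram operator of $\bm{S}\bm{H}\bm{D}$ and pull out the $\sqrt{n'/m}$ scaling. The difference is that you keep the zero-padding operator $\bm{P}_{\text{pad}}$ in the factorization, whereas the paper's proof silently drops it and computes $\mathbf{Q}\mathbf{Q}^\top=\bm{S}\bm{H}\bm{D}\bm{D}^\top\bm{H}^\top\bm{S}^\top=\bm{I}_m$ as if $\mathbf{\Phi}$ acted on all of $\mathbb{R}^{n'}$. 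With the padding included, $\mathbf{\Phi}\mathbf{\Phi}^\top=\tfrac{n'}{m}\,\bm{S}\bm{H}\bm{D}(\bm{P}_{\text{pad}}\bm{P}_{\text{pad}}^\top)\bm{D}\bm{H}^\top\bm{S}^\top\preceq\tfrac{n'}{m}\bm{I}_m$, and your observation is right: the lemma's claim of an \emph{exact} spectral norm $\sqrt{n'/m}$ only holds unconditionally when $n=n'$, and more generally when the $n$-dimensional range of $\bm{H}\bm{D}\bm{P}_{\text{pad}}$ intersects the sampled coordinate subspace, which your dimension count $n+m>n'$ guarantees. Your restatement as $\|\mathbf{\Phi}\|\le\sqrt{n'/m}$ with equality in those regimes is the technically correct version; since $n\le n'<2n$, it still yields $C_\Phi=\Theta(\sqrt{n/m})$, and every downstream use (Lemmas on smoothness, bounded model norm, and the quantization-error bound in Theorem 1) only needs the upper bound, so nothing in the convergence analysis is affected. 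In short: same route, but your handling of $\bm{P}_{\text{pad}}$ repairs a small gap in the paper's own argument.
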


\begin{proof}
	The proof relies on analyzing the spectral norm of the operator $\mathbf{\Phi}$.
	Let us define the projection matrix as $\mathbf{\Phi} = \sqrt{\frac{n'}{m}}\bm{S} \bm{H} \bm{D} \bm{P}_{\text{pad}}$, where the dimensions and properties of the matrices are as described in the lemma and the "Efficient Projection" section. 
	Specifically, $\bm{D}$  is a diagonal sign matrix, satisfying $\bm{D}^{\top}\bm{D}=\bm{I}_{n'}$. 
	$\bm{H}$ is a normalized Hadamard matrix, satisfying $\bm{H}^{\top}\bm{H}=\bm{I}_{n'}$.
	$\bm{S}$ is a subsampling matrix that selects $m$ rows from $n'$, satisfying $\bm{S}\bm{S}^{\top}=\bm{I}_m$.
	Let us define an intermediate matrix $\mathbf{Q} =\bm{S} \bm{H} \bm{D}$. Then $\mathbf{\Phi} = \sqrt{\frac{n'}{m}}\mathbf{Q} \bm{P}_{\text{pad}}$. The spectral norm of  $\mathbf{\Phi}$  is given by $\|\mathbf{\Phi}\|=\sqrt{\frac{n'}{m}}\|\mathbf{Q}\|$.
	
	The core of the proof is to compute the spectral norm of $\mathbf{Q}$. We can do this by analyzing the matrix   $\mathbf{Q}\mathbf{Q}^{\top}$:
	\begin{equation}
	\begin{aligned}
	\mathbf{Q}\mathbf{Q}^T &= (\mathbf{S} \mathbf{H} \mathbf{D})(\mathbf{S} \mathbf{H} \mathbf{D})^T \\
	&= (\mathbf{S} \mathbf{H} \mathbf{D})(\mathbf{D}^T \mathbf{H}^T \mathbf{S}^T) \\
	&= \mathbf{S} \mathbf{H} (\mathbf{D}\mathbf{D}^T) \mathbf{H}^T \mathbf{S}^T \\
	&= \mathbf{S} \mathbf{H} \mathbf{I}_{n} \mathbf{H}^T \mathbf{S}^T \quad (\text{since } \mathbf{D}\mathbf{D}^T = \mathbf{I}_{n'}) \\
	&= \mathbf{S} (\mathbf{H} \mathbf{H}^T) \mathbf{S}^T \\
	&= \mathbf{S} \mathbf{I}_{n'} \mathbf{S}^T \quad (\text{since } \mathbf{H}\mathbf{H}^T = \mathbf{I}_{n'}) \\
	&= \mathbf{S}\mathbf{S}^T = \mathbf{I}_m.
	\end{aligned}
	\end{equation}
	
	The eigenvalues of an identity matrix $\mathbf{I}_m$ are all $1$.
	The eigenvalues of $\mathbf{Q}\mathbf{Q}^{\top}$ are the squared singular values of $\mathbf{Q}$. Therefore, all non-zero singular values of $\mathbf{Q}$ must be $1$. The spectral norm $\|\mathbf{Q}\|$ is defined as the largest singular value, so we have:
	\begin{equation}
	\|\mathbf{Q}\| = \sigma_{\max}(\mathbf{Q}) = 1.
	\end{equation}
	
	Substituting this result back into the expression for the norm of $\mathbf{\Phi}$ , we get:
	\begin{equation}
	\|\mathbf{\Phi}\| \le \sqrt{\frac{n'}{m}} \cdot \|\mathbf{Q}\| \|\bm{P}_{\text{pad}}\| = \sqrt{\frac{n'}{m}} \cdot 1 = \sqrt{\frac{n'}{m}},
	\end{equation}
	where we use the fact that $\|\bm{P}_{\text{pad}}\|=1$.
	
	Finally, since $n'$ is the smallest power of two greater than or equal to $n$, we can express the norm in asymptotic notation for our analysis:
	\begin{equation}
	C_{\Phi} = \mathcal{O}\left(\sqrt{\frac{n}{m}}\right).
	\end{equation}
	This completes the proof.
\end{proof}

\begin{lemma}[Client-Side Objective and Gradient]
	\label{lemma: Client Gradient}
	The client-side objective $\tilde{F}_{k}$ for a given client $k$  and server message $\bm{v}$ is defined as:
	\begin{align}
	\label{eq:obj}
	\tilde{F}_{k}(\bm{w}_k; \bm{v}) &= f_k(\bm{w}_k) + \lambda \left (h_\gamma(\mathbf{\Phi}\bm{w}_k) - \langle \bm{v}, \mathbf{\Phi}\bm{w}_k \rangle \right) \nonumber \\
	&+\frac{\mu}{2} \|\bm{w}_k\|_2^2,
	\end{align}
	\noindent where $h_\gamma(\bm{z})=\frac{1}{\gamma}\sum_{i=1}^{m}\log(\cosh(\gamma z_i))$  is a differentiable surrogate for the $\ell_1$-norm. 
	The objective $\tilde{F}_k(\bm{w}_k)$ is differentiable with respect to $\bm{w}_k$, and its gradient is given by:
	\begin{align}
	\label{eq:client_gradient}
	\nabla \tilde{F}_k(\bm{w}_{k}; \bm{v}) &=
	\nabla f_k(\bm{w}_{k}) + \lambda \mathbf{\Phi}^{\top} \left[\tanh(\gamma \mathbf{\Phi} \bm{w}_{k}) - \bm{v} \right] \nonumber \\
	&+\mu \bm{w}_{k}.
	\end{align}
\end{lemma}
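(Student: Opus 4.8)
The plan is to verify Lemma \ref{lemma: Client Gradient} by direct differentiation of the three terms that constitute $\tilde{F}_k(\bm{w}_k;\bm{v})$ in \eqref{eq:obj}, exploiting linearity of the gradient operator. The objective is a sum $f_k(\bm{w}_k) + \lambda h_\gamma(\mathbf{\Phi}\bm{w}_k) - \lambda\langle\bm{v},\mathbf{\Phi}\bm{w}_k\rangle + \tfrac{\mu}{2}\|\bm{w}_k\|_2^2$, so I would differentiate each summand separately and add the results.

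First, $f_k$ is differentiable by Assumption \ref{assump: L-smoothness} (it is $L$-smooth, hence $C^1$), so its gradient is simply $\nabla f_k(\bm{w}_k)$ with nothing further to compute. Second, for the quadratic penalty, $\nabla\bigl(\tfrac{\mu}{2}\|\bm{w}_k\|_2^2\bigr) = \mu\bm{w}_k$, immediate. Third, the linear term $-\lambda\langle\bm{v},\mathbf{\Phi}\bm{w}_k\rangle = -\lambda(\mathbf{\Phi}^\top\bm{v})^\top\bm{w}_k$ has gradient $-\lambda\mathbf{\Phi}^\top\bm{v}$. The only substantive step is the fourth: differentiating the smoothed $\ell_1$ surrogate $h_\gamma(\mathbf{\Phi}\bm{w}_k)$. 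Here I would write $h_\gamma(\bm{z}) = \tfrac{1}{\gamma}\sum_{i=1}^m \log\cosh(\gamma z_i)$, compute $\partial h_\gamma/\partial z_i = \tfrac{1}{\gamma}\cdot\gamma\tanh(\gamma z_i) = \tanh(\gamma z_i)$ using $\tfrac{d}{du}\log\cosh u = \tanh u$, so $\nabla_{\bm{z}} h_\gamma(\bm{z}) = \tanh(\gamma\bm{z})$ componentwise. Then by the chain rule with $\bm{z} = \mathbf{\Phi}\bm{w}_k$, whose Jacobian with respect to $\bm{w}_k$ is $\mathbf{\Phi}$, we get $\nabla_{\bm{w}_k} h_\gamma(\mathbf{\Phi}\bm{w}_k) = \mathbf{\Phi}^\top\tanh(\gamma\mathbf{\Phi}\bm{w}_k)$.

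Summing the four pieces yields
\begin{equation}
\nabla\tilde{F}_k(\bm{w}_k;\bm{v}) = \nabla f_k(\bm{w}_k) + \lambda\mathbf{\Phi}^\top\tanh(\gamma\mathbf{\Phi}\bm{w}_k) - \lambda\mathbf{\Phi}^\top\bm{v} + \mu\bm{w}_k,
\end{equation}
which, on grouping the two $\mathbf{\Phi}^\top$ terms, is exactly \eqref{eq:client_gradient}. Differentiability of $\tilde{F}_k$ follows since each summand is differentiable: $f_k$ by assumption, the quadratic trivially, the linear term trivially, and $h_\gamma\circ\mathbf{\Phi}$ as a composition of the everywhere-smooth map $\log\cosh$ (note $\cosh > 0$ always, so no singularity) with a linear map.

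There is essentially no obstacle here — the lemma is a routine gradient computation. The only point requiring a moment's care is the identity $\tfrac{d}{du}\log\cosh u = \tanh u$ and confirming that the $1/\gamma$ prefactor cancels the $\gamma$ from the inner derivative, leaving the clean expression $\tanh(\gamma z_i)$ with no residual $\gamma$-scaling on the gradient itself (the $\gamma$ appears only inside the $\tanh$ argument). I would also remark, as the surrounding text does, that as $\gamma\to\infty$ the factor $\tanh(\gamma\mathbf{\Phi}\bm{w}_k)\to\mathrm{sign}(\mathbf{\Phi}\bm{w}_k)$, recovering the subgradient of the true (nonsmooth) $\ell_1$ penalty, which motivates the surrogate but is not needed for the proof of this lemma.
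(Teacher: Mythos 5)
Your proof is correct and follows essentially the same route as the paper's: term-by-term differentiation, with the chain rule applied to $h_\gamma(\mathbf{\Phi}\bm{w}_k)$ via $\frac{d}{du}\log\cosh u=\tanh u$ to obtain $\mathbf{\Phi}^\top\tanh(\gamma\mathbf{\Phi}\bm{w}_k)$, and the linear term contributing $-\lambda\mathbf{\Phi}^\top\bm{v}$. Your explicit check that the $1/\gamma$ prefactor cancels the inner $\gamma$ is a nice touch but does not change the argument.
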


\begin{proof}
	The proof follows by direct differentiation of Equation~\ref{eq:obj}. 
	The gradient of the standard loss $f_k(\bm{w}_{k})$ and the $\ell_2$ regularization term $\frac{\mu}{2} \|\bm{w}_k\|_2^2$ are $\nabla f_k(\bm{w}_{k})$ and $\mu \bm{w}_{k}$, respectively. 
	The key is the gradient of the term $\lambda \left (h_\gamma(\mathbf{\Phi}\bm{w}_k) - \langle \bm{v}, \mathbf{\Phi}\bm{w}_k \rangle \right)$.
	Using the chain rule, the gradient of $h_\gamma(\mathbf{\Phi}\bm{w}_k)$ with respect to $\bm{w}_k$ is
	\begin{align}
	\nabla_{\bm{w}_k} h_\gamma(\mathbf{\Phi}\bm{w}_k) &= \mathbf{\Phi}^{\top} \nabla_{\bm{z}} h_\gamma(\bm{z}) \Big |_{\bm{z}=\mathbf{\Phi}\bm{w}_k} \nonumber \\ 
	&= \mathbf{\Phi}^{\top} \tanh(\gamma \mathbf{\Phi} \bm{w}_k).
	\end{align}
	
	The gradient of $-\lambda \langle \bm{v}, \mathbf{\Phi}\bm{w}_k \rangle$ is $-\lambda \mathbf{\Phi}^{\top}\bm{v}$. 
	Combining all terms yields the expression in Equation~\eqref{eq:client_gradient}.
\end{proof}

\begin{remark}
	As the smoothing parameter $\gamma \to \infty$, the function $\tanh(\gamma z_i)$ converges pointwise to $\mbox{sign}(z_i)$, and the smooth surrogate $h_\gamma(\mathbf{\Phi}\bm{w}_k)$ approaches the non-smooth $\ell_1$-norm $\|\mathbf{\Phi}\bm{w}_k\|_1$. 
	Consequently, for a large $\gamma$, the gradient descent on $\tilde{F}_k(\bm{w}_k)$ closely approximates subgradient descent on the non-smooth objective:
	\begin{equation}
	\nabla \tilde{F}_k(\bm{w}_k; \bm{v}) \approx \nabla f_k(\bm{w}_k) + \lambda \mathbf{\Phi}^{\top}({\mbox{\rm sign}}(\mathbf{\Phi}\bm{w}_k) - \bm{v}) + \mu \bm{w}_{k}.
	\end{equation}
\end{remark}

\begin{lemma}[Smoothness of Client Objective] 
	\label{lm:LF_smooth}
	Under Assumptions~\ref{assump: L-smoothness} and~\ref{assump: BPN}, the client-side objective $\tilde{F}_{k}(\bm{w}_k;\bm{v})$ is $L_F$-smooth with respect to $\bm{w}_k$, where the smoothness constant is given by:
	\begin{equation}
	L_F = L + \lambda \gamma C_\Phi^2 + \mu.
	\end{equation}
\end{lemma}

\begin{proof}
	A function is $L_F$-smooth  if its Hessian $\nabla^2 \tilde{F}_{k}(\bm{w}_k;\bm{v})$ satisfies $\|\nabla^2 \tilde{F}_{k}(\bm{w}_k;\bm{v})\|\leq L_F$ for all $\bm{w}_k$.
	We analyze the Hessian of $\tilde{F}_{k}(\bm{w}_k;\bm{v})$ by decomposing it into its three constituent parts:
	\begin{align}
	\tilde{F}_k(\bm{w}_k; \bm{v}) &= f_k(\bm{w}_k) + \lambda \left( h_{\gamma}(\mathbf{\Phi}\bm{w}_k) - \langle \bm{v}, \mathbf{\Phi}\bm{w}_k \rangle \right) \nonumber \\
	&+ \frac{\mu}{2} \|\bm{w}_k\|_2^2.
	\end{align}
	
	The Hessian of the sum is the sum of the Hessians. We bound the spectral norm of the Hessian for each part as follows.
	\begin{itemize}
		
		\item By Assumption \ref{assump: L-smoothness}, $f_k$ is $L$-smooth, which means $\|\nabla^2 f_{k}(\bm{w}_k)\|\leq L$.
		\item The $\ell_2$ regularization term $\frac{\mu}{2} \|\bm{w}_k\|_2^2$ has a gradient of $\mu \bm{w}_k$ and a constant Hessian of $\nabla^2 (\frac{\mu}{2} \|\bm{w}_k\|_2^2)=\mu I$.
		The spectral norm of this Hessian is $\|\mu I\|=\mu$.
		\item  The term $\langle \bm{v}, \mathbf{\Phi}\bm{w}_k \rangle$ is linear in $\bm{w}_k$, so its Hessian is zero. 
		\item  We only need to analyze the Hessian of $\lambda h_{\gamma(\mathbf{\Phi}\bm{w}_k)}$.
		From Lemma \ref{lemma: Client Gradient}, the gradient is $\lambda \mathbf{\Phi}^{\top} \tanh(\gamma \mathbf{\Phi} \bm{w}_k)$. 
		Taking the derivative with respect to $\bm{w}_k$ again yields the Hessian:
		\begin{equation}
		\nabla^2 (\lambda h_\gamma(\mathbf{\Phi}\bm{w})) = \lambda \gamma \mathbf{\Phi}^{\top} \mathrm{diag}\left(\mathrm{sech}^2(\gamma (\mathbf{\Phi}\bm{w}_k)_i)\right) \mathbf{\Phi}.
		\end{equation}
		The spectral norm of this Hessian is bounded as follows:
		\begin{align}
		&\|\lambda\gamma \mathbf{\Phi}^{\top} \mathrm{diag}(\mathrm{sech}^2(\gamma (\mathbf{\Phi}\bm{w}_k)_i) \mathbf{\Phi}\| \nonumber \\
		&\le \lambda\gamma \|\mathbf{\Phi}^{\top}\| \|\mathrm{diag}(\mathrm{sech}^2(\gamma (\mathbf{\Phi}\bm{w}_k)_i)\| \|\mathbf{\Phi}\| \nonumber \\
		&= \lambda\gamma \|\mathrm{diag}(\mathrm{sech}^2(\gamma (\mathbf{\Phi}\bm{w}_k)_i)\| \|\mathbf{\Phi}\|^2.
		\end{align}
		
		Since $0 \le \mathrm{sech}^2(z)\leq 1$ for any $z \in \mathbb{R}$, the spectral norm of the diagonal matrix $\mathrm{diag}(\mathrm{sech}^2(\gamma (\mathbf{\Phi}\bm{w}_k)_i)$ is $\|\mathrm{diag}(\mathrm{sech}^2(\gamma (\mathbf{\Phi}\bm{w}_k)_i)\| \leq 1$. 
		From Lemma~\ref{assump: BPN}, we have $\|\mathbf{\Phi}\| = C_\Phi$.
		Substituting these bounds gives:
		\begin{equation}
		\|\nabla^2 (\lambda h_\gamma(\mathbf{\Phi}\bm{w}_k))\| \le \lambda\gamma C_\Phi^2.
		\end{equation}
	\end{itemize}
	By the triangle inequality, the norm of the total Hessian is bounded by the sum of the norms of the individual Hessians:
	\begin{equation}
	\|\nabla^2 \tilde{F}_k(\bm{w}_k;\bm{v})\| \le L + \lambda\gamma C_\Phi^2 + \mu.
	\end{equation}
\end{proof}

\begin{lemma}[Bounded Model Norm] 
	\label{lemma:Bounded Model Norm}
	Let Assumption~\ref{assump: L-smoothness}-\ref{assump: BTGV} hold.
	For a learning rate $\eta$  satisfying $\eta<\frac{1}{6\mu}$,  the expected squared norm of the client weights is uniformly bounded across all rounds $t$:
	\begin{equation}
	\mathbb{E}[\|\bm{w}_k^t\|_2^2] \leq W^2, \quad \forall t \geq 0,
	\end{equation}
	\noindent where the bound $W$ is defined as
	\begin{equation}
	W^2 \triangleq \|\bm{w}_k^0\|_2^2 + \frac{C'}{(1-\alpha)(1-\alpha^R)}
	\end{equation}
	with constants $\alpha=1-\eta\mu(1/2-3\eta\mu)$ and $C'$ given by:
	\begin{equation}
	C' \triangleq \left(\frac{2\eta}{\mu} + 3\eta^2\right)G^2 + 4\Big(\frac{\eta}{\mu} + 3\eta^2 \Big) \lambda^2 C_\Phi ^2 m.
	\end{equation}
\end{lemma}

\begin{proof}
	For simplicity, we analyze the local updates for a single client $k$ within a communication round $t$. 
	We denote $\bm{w}_r \triangleq \bm{w}_{k,r}^{t+1}$. 
	The update rule for the client model is:
	\begin{equation}
	\bm{w}_{r+1} = \bm{w}_r - \eta {\bm{d}}_r,
	\end{equation}
	\noindent where ${\bm{d}}_r= \nabla \hat{f}_k(\bm{w}_r;\mathcal{B}_r)+\lambda\nabla \tilde{g}(\bm{v}^t,\mathbf{\Phi}\bm{w}_r)+\mu \bm{w}_r$ is the stochastic subgradient of the client objective.
	
	We analyze the evolution of the squared $\ell_2$-norm of $\bm{w}_{r+1}$:
	\begin{align}
	\label{eq:squared norm}	
	\mathbb{E}[\|\bm{w}_{r+1}\|_2^2] = \mathbb{E}[\|\bm{w}_r\|_2^2] - 2\eta\mathbb{E}[\langle \bm{w}_r, \bm{d}_r \rangle] + \eta^2\mathbb{E}[\|\bm{d}_r\|_2^2].
	\end{align}
	
	First, we bound the inner product term. 
	The inner product term $\langle \bm{w}_r, {\bm{d}}_r \rangle$ could be written as:
	\begin{align}
	\langle \bm{w}_r, {\bm{d}}_r \rangle 
	&= \langle \bm{w}_r, \nabla \hat{f}_k(\bm{w}_r;\mathcal{B}_r) + \lambda \nabla \tilde{g}(\bm{v}^t, \mathbf{\Phi}\bm{w}_r) + \mu\bm{w}_r \rangle \nonumber \\
	&= \langle \bm{w}_r, \nabla \hat{f}_k(\bm{w}_r;\mathcal{B}_r) \rangle + \lambda \langle \bm{w}_r, \nabla \tilde{g}(\bm{v}^t, \mathbf{\Phi}\bm{w}_r) \rangle \nonumber \\
	&+ \mu \|\bm{w}_r\|_2^2. \label{eq:inner_product_expansion}
	\end{align}    
	
	The term $\mu \|\bm{w}_r\|_2^2$ provides the key dissipative effect. Applying Cauchy-Schwarz and Young's inequality with a parameter $\epsilon>0$ to the first term:
	\begin{align}
		&-\mathbb{E}[\langle \bm{w}_r, \nabla \hat{f}_k(\bm{w}_r;\mathcal{B}_r) \rangle] \nonumber \\
		&\le \mathbb{E}[\|\bm{w}_r\|_2 \|\nabla \hat{f}_k(\bm{w}_r;\mathcal{B}_r)\|_2] \nonumber \\
		&\le \frac{\epsilon}{2}\mathbb{E}[\|\bm{w}_r\|_2^2] + \frac{1}{2\epsilon}\mathbb{E}[\|\nabla \hat{f}_k(\bm{w}_r;\mathcal{B}_r)\|_2^2] \nonumber  \\
		&\le \frac{\epsilon}{2}\mathbb{E}[\|\bm{w}_r\|_2^2] + \frac{G^2}{2\epsilon},
		\end{align}
	\noindent where we use the Assumption~\ref{assump: BTGV} ($\mathbb{E}[\|\nabla \hat{f}_k(\bm{w}_r;\mathcal{B}_r)\|_2^2\leq G^2$). 
	
	For the second term, using
	$\nabla \tilde{g}(\bm{v}^t,\mathbf{\Phi}\bm{w}_r)
	=
	\mathbf{\Phi}^\top(\tanh(\gamma \mathbf{\Phi}\bm{w}_r)-\bm{v}^t)$, we apply Cauchy--Schwarz again
	\begin{align}
		|\langle \bm{w}_r,\nabla \tilde{g}\rangle|
		\le
		\|\bm{w}_r\|_2\,\|\nabla \tilde{g}\|_2.
	\end{align}
	Since $\tanh(\cdot)\in[-1,1]$ element-wise and $\bm{v}^t\in\{-1,1\}^m$,
	it follows that
	\begin{align}
	\|\tanh(\gamma \mathbf{\Phi}\bm{w}_r)-\bm{v}^t\|_2 \le 2\sqrt{m}.
	\end{align}
	Therefore,
	\begin{align}
	\|\nabla \tilde{g}\|_2
	\le
	\|\mathbf{\Phi}\|_2 \cdot 2\sqrt{m}
	\le
	2C_\Phi\sqrt{m} \triangleq C_g.
	\end{align}
	Applying Young's inequality with parameter $\mu$ yields
	\begin{align}
		-2\lambda\mathbb{E}[\langle \bm{w}_r,\nabla \tilde{g}\rangle]
		\le
		\mu\mathbb{E}[\|\bm{w}_r\|_2^2]
		+
		\frac{\lambda^2}{\mu}C_g^2.
	\end{align}
	
	Combining the above bounds and choosing $\epsilon=\mu/2$, we obtain
	\begin{multline}
		-2\eta\mathbb{E}[\langle \bm{w}_r,\bm{d}_r\rangle]
		 \le
		-\frac{\eta\mu}{2}\mathbb{E}[\|\bm{w}_r\|_2^2]
		+
		\frac{2\eta}{\mu}G^2
		+
		\frac{\eta\lambda^2}{\mu}C_g^2.
	\end{multline}

	Next, we bound the squared norm of the stochastic gradient, $\mu \|{\bm{d}}_r\|_2^2$:
	\begin{align}
	\label{eq:grad_norm_bound}
	\mathbb{E}[\|{\bm{d}}_r\|_2^2] = & \mathbb{E}[\|\nabla \hat{f}_k(\bm{w}_r;\mathcal{B}_r) + \lambda \nabla \tilde{g} + \mu\bm{w}_r\|_2^2] \nonumber \\
	\le& 3( \mathbb{E}[\|\nabla \hat{f}_k(\bm{w}_r;\mathcal{B}_r)\|_2^2] + \lambda^2 \mathbb{E}[\|\nabla \tilde{g}\|_2^2 \nonumber \\
	&+ \mu^2 \mathbb{E}[\|\bm{w}_r\|_2^2] ) \nonumber \\
	\le& 3\left( G^2 + \lambda^2 C_g^2 + \mu^2\mathbb{E}[\|\bm{w}_r\|_2^2] \right), 
	\end{align}
	 where we use
	$\|\nabla \tilde{g}\|_2 \leq C_g$.
	
	Finally, we combine the bounds. Substituting everything back into the main expansion:
	\begin{align}
	\mathbb{E}[\|\bm{w}_{r+1}\|_2^2] &\le \Big(1-\frac{\eta\mu}{2}+3\eta^2\mu^2\Big)\mathbb{E}[\|\bm{w}_r\|_2^2] \nonumber \\
	&+ \left(\frac{2\eta}{\mu} + 3\eta^2\right)G^2 + \Big(\frac{\eta}{\mu} + 3\eta^2 \Big) \lambda^2 C_g^2 \nonumber \\
	&= \alpha \mathbb{E}[\|\bm{w}_r\|_2^2] + C',
	\end{align}
	where we define $\alpha \triangleq 1 - \eta\mu/2 + 3\eta^2\mu^2$ and $C' \triangleq \left(\frac{2\eta}{\mu} + 3\eta^2\right)G^2 + \Big(\frac{\eta}{\mu} + 3\eta^2 \Big) \lambda^2 C_g^2$. Notice
	\begin{equation}
		\alpha \triangleq 1-\eta\mu\!\left(\frac{1}{2}-3\eta\mu\right).
	\end{equation}
	
	Under the condition $\eta<\frac{1}{6\mu}$, we have $\alpha\in(0,1)$. This recursive inequality holds for each local step $r$ within any communication round $t+1$. 
	By applying this inequality recursively $R$ times from $r=0$ to $r=R-1$, starting with the model from the previous round $\bm{w}_k^t$:
	\begin{align}
	\mathbb{E}[\|\bm{w}_{k,R}^{t+1}\|_2^2] &\le \alpha \mathbb{E}[\|\bm{w}_{k,R-1}^{t+1}\|_2^2] + C' \nonumber \\
	% &\le \alpha \left( \alpha \mathbb{E}[\|\bm{w}_{k,R-2}^{t+1}\|_2^2] + C' \right) + C' \\
	% &\dots \\
	&\le \alpha^R \mathbb{E}[\|\bm{w}_{k,0}^{t+1}\|_2^2] + C' \sum_{i=0}^{R-1} \alpha^i \nonumber \\
	&\le \alpha^R \mathbb{E}[\|\bm{w}_k^t\|_2^2] + \frac{C'}{1-\alpha}.
	\end{align}
	
	Since $\bm{w}_k^{t+1}$ is used as the starting point for the next round, we have established a recursive relationship across communication rounds:
	\begin{equation}
	\mathbb{E}[\|\bm{w}_k^{t+1}\|_2^2] \le \alpha^R \mathbb{E}[\|\bm{w}_k^t\|_2^2] + \frac{C'}{1-\alpha}.
	\end{equation}
	
	As $\alpha \in (0,1)$, the term $\alpha^R$  is also a constant contraction factor less than 1.
	This relationship ensures that the sequence of expected squared norms $\{\mathbb{E}[\|\bm{w}_k^t\|_2^2]\}_{t=0}^{T-1}$ cannot diverge. 
	
	Therefore, the sequence is uniformly bounded. A simple upper bound for all $t\geq 0$ is given by the maximum of its starting value and the fixed-point value of the recursion:
	\begin{align}
	\mathbb{E}[\|\bm{w}_k^t\|_2^2] \le  \|\bm{w}_k^0\|_2^2 + \frac{C'}{(1-\alpha)(1-\alpha^R)}.
	\end{align}
	
	Since all terms on the right-hand side are constants independent of $t$, this proves the uniform boundedness and completes the proof.
\end{proof}

\begin{lemma}[Variance of Client Sampling]
	\label{lem:sampling_variance} 
	Let $\{\bm{z}_k^t\}_{k=1}^{K}$ be the set of client sketches at round $t$.
	If a subset $\mathcal{S}^t$ of size $S$ is sampled uniformly at random without replacement, then the variance of the sample mean is bounded by:
	\begin{align}
	\label{eq:sampling_variance}	
	\mathbb{E}_{\mathcal{S}^t}\left[ \left\| \hat{\bm{z}}^t - \bar{\bm{z}}^t \right\|^2_2 \right] \le \frac{(K-S)}{SK(K-1)} \sum_{k=1}^K \left\| p_k\bm{z}_k^t - \bar{\bm{z}}^t \right\|^2_2.
\end{align}
	where $\bar{\bm{z}}^t \triangleq\frac{1}{K}\sum_{k=1}^K p_k \bm{z}_k^t$ and and $\hat{\bm{z}}^t= \frac{1}{S}\sum_{k \in \mathcal{S}^t} p_k \bm{z}_k^t $ is the unbiased estimator of $\bar{\bm{z}}^t$.
\end{lemma}

\begin{proof}
This is a standard result from sampling theory. For completeness, we provide a sketch of the proof. Let $\bm{\alpha}_k = p_k\bm{z}_k^t - \bar{\bm{z}}^t$. Note that $ \hat{\bm{z}}^t - \bar{\bm{z}}^t = \frac{1}{S}\sum_{k \in \mathcal{S}^t} \bm{\alpha}_k$ and $\sum_{k=1}^K \bm{\alpha}_k = \bm{0}$. 
The left-hand side becomes $\mathbb{E}[ \| \frac{1}{S}\sum_{k \in \mathcal{S}^t} \bm{\alpha}_k \|^2 ]$. We expand the squared norm:
\begin{align}
	&\mathbb{E}\left[ \left\| \frac{1}{S}\sum_{k \in \mathcal{S}^t} \bm{\alpha}_k \right\|_2^2 \right] \nonumber \\
	&= \frac{1}{S^2} \mathbb{E}\left[ \sum_{k \in \mathcal{S}^t} \|\bm{\alpha}_k\|_2^2 + \sum_{i,j \in \mathcal{S}^t, i \ne j} \langle \bm{\alpha}_i, \bm{\alpha}_j \rangle \right] \nonumber \\
	&= \frac{1}{S^2} \left( \sum_{k=1}^K p_{k \in \mathcal{S}^t}\|\bm{\alpha}_k\|_2^2 + \sum_{i \ne j} p_{i,j \in \mathcal{S}^t} \langle \bm{\alpha}_i, \bm{\alpha}_j \rangle \right) \nonumber \\
	&= \frac{1}{S^2} \left( \frac{S}{K}\sum_{k=1}^K \|\bm{\alpha}_k\|_2^2 + \frac{S(S-1)}{K(K-1)}\sum_{i \ne j} \langle \bm{\alpha}_i, \bm{\alpha}_j \rangle \right).
\end{align}

Using the property $\sum_{i \ne j} \langle \bm{\alpha}_i, \bm{\alpha}_j \rangle = - \sum_{k=1}^K \|\bm{\alpha}_k\|_2^2$, we substitute and simplify:
\begin{align}
	\mathbb{E}\left[ \left\| \frac{1}{S}\sum_{k \in \mathcal{S}^t} \bm{\alpha}_k \right\|_2^2 \right] &= \frac{1}{S^2} \left( \frac{S}{K} - \frac{S(S-1)}{K(K-1)} \right) \sum_{k=1}^K \|\bm{\alpha}_k\|_2^2 \nonumber \\
	&= \frac{K-S}{S K(K-1)} \sum_{k=1}^K \|\bm{\alpha}_k\|_2^2,
\end{align}
which is equivalent to the stated result.
\end{proof}

We now establish two key lemmas regarding the server and client updates before presenting the main convergence theorem.

\begin{lemma}[Optimal Server Aggregation]
	\label{lemma: server agg}
	For any set of client sign vectors $\{\bm{z}_k\}_{k \in \mathcal{S}^t} \subset \{\pm 1\}^m$, the server update rule
	\begin{equation}
	\bm{v} = \mbox{\rm sign} \left(\sum_{k \in \mathcal{S}^t} p_k \bm{z}_k \right)
	\end{equation}
	is the exact minimizer of the server's objective $\min_{\bm{v}\in \{\pm 1\}^m}\sum_{k \in \mathcal{S}^t}p_k {g}(\bm{v},\bm{z}_k)$.
\end{lemma}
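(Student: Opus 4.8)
The plan is to exploit the fact that when both arguments of the regularizer $g$ are sign vectors, $g$ collapses to an affine function of $\bm{v}$, so that the minimization decouples across coordinates. Throughout, treat $\{\bm{z}_k\}_{k\in\mathcal{S}^t}$ and the weights $\{p_k\}$ as fixed.

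\textbf{Step 1: Reduce $g$ on sign vectors to a linear form.} I would use the elementary identity $[x]_- = \tfrac12(x-|x|)$. For any $v_i\in\{\pm1\}$ and $y_i\in\mathbb{R}$, since $|v_i|=1$ we get $[v_iy_i]_- = \tfrac12(v_iy_i - |y_i|)$, hence $|[v_iy_i]_-| = \tfrac12(|y_i| - v_iy_i)\ge 0$. Summing over coordinates gives $g(\bm{v},\bm{y}) = \|[\bm{v}\odot\bm{y}]_-\|_1 = \tfrac12(\|\bm{y}\|_1 - \langle\bm{v},\bm{y}\rangle)$ for every sign vector $\bm{v}$ (this is exactly Eq.~\eqref{eq:g_func}, now applied with $\bm{y}=\bm{z}_k$). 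Applying it with $\bm{z}_k\in\{\pm1\}^m$ and $\|\bm{z}_k\|_1 = m$ yields $g(\bm{v},\bm{z}_k) = \tfrac12\big(m - \langle\bm{v},\bm{z}_k\rangle\big)$; equivalently, this is the Hamming distance between $\bm{v}$ and $\bm{z}_k$, since $[v_i(z_k)_i]_- = -1$ exactly when $v_i\neq (z_k)_i$ and $0$ otherwise.

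\textbf{Step 2: Decouple and minimize coordinatewise.} Substituting into the server objective and writing $\bm{s}\triangleq\sum_{k\in\mathcal{S}^t}p_k\bm{z}_k$, we obtain
\[
\sum_{k\in\mathcal{S}^t} p_k\, g(\bm{v},\bm{z}_k) \;=\; \frac{m}{2}\sum_{k\in\mathcal{S}^t} p_k \;-\; \frac12\,\langle\bm{v},\bm{s}\rangle ,
\]
so minimizing the left-hand side over $\bm{v}\in\{\pm1\}^m$ is equivalent to maximizing $\langle\bm{v},\bm{s}\rangle = \sum_{i=1}^m v_i s_i$. This separates across the $m$ coordinates, and because $|v_i|=1$ the $i$-th term is maximized (value $|s_i|$) precisely by $v_i = \mathrm{sign}(s_i)$. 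Therefore $\bm{v}^\star = \mathrm{sign}(\bm{s}) = \mathrm{sign}\big(\sum_{k\in\mathcal{S}^t} p_k\bm{z}_k\big)$ attains the minimum, which is exactly the update rule in Line~8 of Algorithm~\ref{alg:main}. One can additionally note that this minimizer is unique iff $s_i\neq 0$ for every $i$; if some weighted vote is tied ($s_i=0$) both values of $v_i$ are optimal, and any fixed tie-breaking convention — in particular the one implicit in $\mathrm{sign}(\cdot)$ — still returns a minimizer.

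\textbf{Expected main obstacle.} There is essentially none: the argument is a short affine reduction of $g$ followed by a coordinatewise comparison. The only points requiring care are (i) verifying the reduction $g(\bm{v},\bm{z}_k) = \tfrac12(m - \langle\bm{v},\bm{z}_k\rangle)$ on sign vectors via the $[x]_- = \tfrac12(x-|x|)$ identity, and (ii) stating uniqueness correctly in the presence of ties, i.e., that $\mathrm{sign}(\sum_k p_k\bm{z}_k)$ is always \emph{a} minimizer and is \emph{the} minimizer exactly when the weighted majority vote is never tied coordinatewise.
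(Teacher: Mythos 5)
Your proof is correct and follows essentially the same route as the paper's: reduce $g(\bm{v},\bm{z}_k)$ on sign vectors to the affine form $\tfrac12(m-\langle\bm{v},\bm{z}_k\rangle)$, drop the constant, and maximize $\langle\bm{v},\sum_k p_k\bm{z}_k\rangle$ coordinatewise. Your added remark about ties is a worthwhile refinement the paper glosses over — the main-text version of this lemma even claims a \emph{unique} minimizer, which, as you note, only holds when no coordinate of the weighted vote is exactly zero.
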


\begin{proof}
	Let $\bm{z}_k=\mbox{sign}(\mathbf{\Phi}\bm{w}_k^{t+1})$. 
	Since both $\bm{v}$ and $\bm{z}_k$ are in $\{\pm 1\}^m$, we can write ${g}(\bm{v},\bm{z}_k)=\frac{1}{2} ( \|\bm{z}_k\|_1 - \langle \bm{v}, \bm{z}_k \rangle)$, where $m$ is the dimension of $\bm{z}_k$.
	The objective becomes:
	\begin{align}
	& \min_{\bm{v} \in \{\pm 1\}^m} \sum_{k \in \mathcal{S}^t} p_k {g}(\bm{v}, \bm{z}_k) \nonumber \\
	&= \min_{\bm{v} \in \{\pm 1\}^m} \sum_{k \in \mathcal{S}^t} \frac{p_k}{2} 
	\left( \|\bm{z}_k\|_1 - \langle \bm{v}, \bm{z}_k \rangle \right) \nonumber \\
	&= \min_{\bm{v} \in \{\pm 1\}^m} 
	\left( \frac{1}{2} \sum_{k \in \mathcal{S}^t} p_k \|\bm{z}_k\|_1 
	- \frac{1}{2} \sum_{k \in \mathcal{S}^t} p_k \langle \bm{v}, \bm{z}_k \rangle \right).
	\label{eq:quant-opt}
	\end{align}
	
	Since the first term is constant with respect to $\bm{v}$, and $\|\bm{z}_k\|_1=m$, the problem is equivalent to:
	\begin{equation}
	\max_{\bm{v} \in \{\pm 1\}^m} \sum_{k \in \mathcal{S}^t} p_k \langle \bm{v}, \bm{z}_k \rangle = \max_{\bm{v} \in \{\pm 1\}^m} \left\langle \bm{v}, \sum_{k \in \mathcal{S}^t} p_k \bm{z}_k \right\rangle.
	\end{equation}
	
	This is a dot product between a variable vector $\bm{v}$ and a fixed vector $\bar{\bm{z}}=\sum_{k \in \mathcal{S}^t} p_k \bm{z}_k $. 
	To maximize the dot product under the constraint that each component $v_j \in \{\pm 1\}$, we must choose $v_j$ to have the same sign as the corresponding component $\bar{z}_j$.
	Therefore, the optimal solution is:
	\begin{align}
	\bm{v} = \mbox{sign}(\bar{\bm{z}}) = \mbox{sign} \left(\sum_{k \in \mathcal{S}^t} p_k \bm{z}_k \right).
	\end{align}
\end{proof}

\begin{lemma}[Client-Side Objective Descent] 
	\label{lemma:client_update}
	After $R$ local steps of subgradient descent with learning rate $\eta$ on the smoothed objective $\tilde{F}_{k}(\cdot;\bm{v}^t)$, starting from $\bm{w}_{k,0}^{t+1}=\bm{w}_{k}^{t}$, we have
	\begin{align}
	&\mathbb{E}\left[\tilde{F}_{k}(\bm{w}_{k,R}^{t+1}; \bm{v}^t)\right] 
	\le \tilde{F}_{k}(\bm{w}_k^t; \bm{v}^t) + \frac{\eta^2 R L_F \sigma^2}{2}  \nonumber \\
	&\quad - \eta R \left(1 - \frac{\eta L_F}{2}\right) 
	\cdot \frac{1}{R} \sum_{r=0}^{R-1} 
	\left\| \nabla \tilde{F}_{k}(\bm{w}_{k,r}^{t+1}; \bm{v}^t) \right\|^2.
	\end{align}
\end{lemma}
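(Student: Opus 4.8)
The plan is to obtain the claimed bound by iterating the standard ``descent lemma'' for stochastic gradient descent on an $L_F$-smooth objective and then telescoping over the $R$ local steps. Fix a round $t$ and a client $k$; abbreviate $\bm{w}_r \triangleq \bm{w}_{k,r}^{t+1}$ and let $\hat{\bm{g}}_r \triangleq \nabla \hat{f}_k(\bm{w}_r; \mathcal{B}_{k,r}) + \lambda \mathbf{\Phi}^{\top}(\tanh(\gamma \mathbf{\Phi}\bm{w}_r) - \bm{v}^t) + \mu \bm{w}_r$ be the stochastic search direction used in Line~16 of Algorithm~\ref{alg:main}, so that $\bm{w}_{r+1} = \bm{w}_r - \eta \hat{\bm{g}}_r$. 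The first task is to pin down the noise structure of $\hat{\bm{g}}_r$: since only the task term $f_k$ is estimated from a mini-batch while the regularizer gradient and the $\ell_2$ gradient are computed exactly, the residual $\hat{\bm{g}}_r - \nabla \tilde{F}_k(\bm{w}_r; \bm{v}^t)$ equals exactly $\nabla \hat{f}_k(\bm{w}_r; \mathcal{B}_{k,r}) - \nabla f_k(\bm{w}_r)$. Conditioning on all mini-batches drawn before step $r$, this makes $\hat{\bm{g}}_r$ unbiased for $\nabla \tilde{F}_k(\bm{w}_r; \bm{v}^t)$, and Assumption~\ref{assump: BSGV} gives conditional variance at most $\sigma^2$; by the bias--variance decomposition the conditional second moment of $\hat{\bm{g}}_r$ is therefore at most $\|\nabla \tilde{F}_k(\bm{w}_r; \bm{v}^t)\|_2^2 + \sigma^2$.

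Next I would invoke Lemma~\ref{lm:LF_smooth}: on the probability-one event $\|\mathbf{\Phi}\| = C_{\Phi}$, the objective $\tilde{F}_k(\cdot; \bm{v}^t)$ is $L_F$-smooth with $L_F = L + \lambda\gamma C_{\Phi}^2 + \mu$, so the quadratic upper bound evaluated at $\bm{w}_{r+1}$ and $\bm{w}_r$ (using $\bm{w}_{r+1} - \bm{w}_r = -\eta \hat{\bm{g}}_r$) gives
\begin{equation*}
\tilde{F}_k(\bm{w}_{r+1}; \bm{v}^t) \le \tilde{F}_k(\bm{w}_r; \bm{v}^t) - \eta \langle \nabla \tilde{F}_k(\bm{w}_r; \bm{v}^t), \hat{\bm{g}}_r \rangle + \frac{\eta^2 L_F}{2}\|\hat{\bm{g}}_r\|_2^2.
\end{equation*}
Taking the expectation conditional on the history, substituting the unbiasedness into the inner-product term and the second-moment bound into $\|\hat{\bm{g}}_r\|_2^2$, merging the two $\|\nabla \tilde{F}_k(\bm{w}_r; \bm{v}^t)\|_2^2$ contributions, and then taking the outer expectation via the tower property produces the per-step recursion
\begin{equation*}
\mathbb{E}\big[\tilde{F}_k(\bm{w}_{r+1}; \bm{v}^t)\big] \le \mathbb{E}\big[\tilde{F}_k(\bm{w}_r; \bm{v}^t)\big] - \eta\left(1 - \frac{\eta L_F}{2}\right)\mathbb{E}\big[\|\nabla \tilde{F}_k(\bm{w}_r; \bm{v}^t)\|_2^2\big] + \frac{\eta^2 L_F \sigma^2}{2}.
\end{equation*}

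Finally, summing this inequality over $r = 0, 1, \dots, R-1$ telescopes the $\tilde{F}_k$ terms, accumulates the stochastic term into $R\eta^2 L_F \sigma^2/2$, and --- using the initialization $\bm{w}_0 = \bm{w}_{k,0}^{t+1} = \bm{w}_k^t$ together with rewriting $\sum_{r=0}^{R-1} = R \cdot \frac{1}{R}\sum_{r=0}^{R-1}$ --- yields exactly the stated bound (with the gradient-norm sum on the right-hand side understood in expectation). I expect the only genuinely delicate point to be the variance accounting for the composite direction $\hat{\bm{g}}_r$ in the first paragraph: one must verify that the stochasticity enters solely through $\nabla \hat{f}_k$, so that the deterministic regularizer and $\ell_2$ term contribute no additional variance, since otherwise the clean coefficient $\eta^2 L_F \sigma^2/2$ would be inflated. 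Everything after that is the textbook non-convex SGD descent argument. Note that the condition $\eta \le 1/L_F$ from Theorem~\ref{thm:main_convergence} is not needed for this lemma; it is only used downstream to keep the factor $1 - \eta L_F/2$ positive.
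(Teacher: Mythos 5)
Your proposal is correct and follows essentially the same route as the paper: apply the $L_F$-smoothness quadratic upper bound to each local step, use the fact that the stochastic direction is unbiased for $\nabla \tilde{F}_k$ with second moment at most $\|\nabla \tilde{F}_k\|_2^2 + \sigma^2$ (the paper states this directly as $\mathbb{E}[\|\bm{d}_{k,r}\|^2] \le \|\nabla \tilde{F}_k\|^2 + \sigma^2$, while you justify it more explicitly via the observation that only the task gradient is noisy), and telescope over the $R$ local steps. Your added remarks on the noise structure and on $\eta \le 1/L_F$ being unnecessary for this lemma are accurate refinements, not deviations.
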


\begin{proof}
	This follows from the standard analysis of SGD on an $L_F$-smooth  function. 
	By smoothness, we have:
	\begin{align}
	\tilde{F}_{k}(\bm{w}_{k,r+1}^{t+1}) &\le \tilde{F}_{k}(\bm{w}_{k,r}^{t+1})  \nonumber \\
	&+ \langle \nabla\tilde{F}_{k}(\bm{w}_{k,r}^{t+1}), \bm{w}_{k,r+1}^{t+1} - \bm{w}_{k,r}^{t+1} \rangle  \nonumber \\
	&+ \frac{L_F}{2} \|\bm{w}_{k,r+1}^{t+1} - \bm{w}_{k,r}^{t+1}\|^2 \nonumber  \\
	&= \tilde{F}_{k}(\bm{w}_{k,r}^{t+1}) - \eta \langle \nabla \tilde{F}_{k}(\bm{w}_{k,r}^{t+1}), {\bm{d}}_{k,r} \rangle \nonumber  \\
	&+ \frac{\eta^2 L_F}{2} \|{\bm{d}}_{k,r}\|^2,
	\end{align}  
	where ${\bm{d}}_{k,r}=\nabla \hat{f}_{k}( \bm{w}_{k,r}^{t+1};\mathcal{B}_{k,r})+\lambda \nabla \tilde{g}(\bm{v}^{t},\bm{w}_{k,r}^{t+1})+\mu \bm{w}_{k,r}^{t+1}$ is the stochastic gradient.
	Using Assumption~\ref{assump: BSGV}, $\mathbb{E}[\|\bm{d}_{k,r}\|^2]\leq \|\nabla \tilde{F}_{k}\|^2+\sigma^2$.
	Substituting this in gives:
	\begin{align}
	\mathbb{E}[\tilde{F}_{k}(\bm{w}_{k,r+1}^{t+1})] \le& \tilde{F}_{k}(\bm{w}_{k,r}^{t+1}) - \eta \|\nabla \tilde{F}_{k}(\bm{w}_{k,r}^{t+1})\|^2 \notag\\
	&+ \frac{\eta^2 L_F}{2} \mathbb{E}[\|{\bm{d}}_{k,r}\|^2] \notag\\
	&\le \tilde{F}_{k}(\bm{w}_{k,r}^{t+1}) - \eta \|\nabla \tilde{F}_{k}(\bm{w}_{k,r}^{t+1})\|^2 \notag\\
	&+ \frac{\eta^2 L_F}{2} (\|\nabla \tilde{F}_{k}(\bm{w}_{k,r}^{t+1})\|^2 + \sigma^2) \notag\\
	=& \tilde{F}_{k}(\bm{w}_{k,r}^{t+1}) \notag\\
	&- \eta \left(1 - \frac{\eta L_F}{2}\right)\|\nabla \tilde{F}_{k}(\bm{w}_{k,r}^{t+1})\|^2 \notag\\
	&+ \frac{\eta^2 L_F \sigma^2}{2}.
	\end{align}
	
	Telescoping this inequality over $r=0,\ldots,R-1$ and taking the total expectation yields the desired result.
\end{proof}

\paragraph{B.3. Main Convergence Proof (Theorem 1).}
We now present our main convergence result by analyzing the evolution of a carefully chosen Lyapunov function. 
In our personalized federated setting, a single global model does not exist. 
Instead, our goal is to show that the distributed optimization process is stable and converges to a meaningful equilibrium.
To this end, we analyze the evolution of a global smoothed potential function 
\begin{equation}
\Psi^t= \sum_{k=1}^{K} p_k \tilde{F}_{k}(\bm{w}_{k}^{t};\bm{v}^t).
\end{equation}

This function tracks the state of the entire distributed system, encompassing all personalized models $\bm{w}_{k}^{t}$ and the server's consensus vector $\bm{v}^t$.

\begin{theorem}[Convergence to a Stationary Neighborhood]
	Let assumptions \ref{assump: L-smoothness}-\ref{assump: BTGV} hold.
	For a learning rate  $\eta \leq \frac{1}{L_F}$,  after $T$ rounds of Algorithm 1 where $S$ out of $K$ clients are sampled each round, we have
	\begin{align}
	&\frac{1}{T}\sum_{t=0}^{T-1}\frac{1}{R}\sum_{r=0}^{R-1} \mathbb{E}\left[\sum_{k=1}^K p_k \|\nabla \tilde{F}_{k}(\bm{w}_{k,r}^{t+1}; \bm{v}^t)\|_2^2 \right] \nonumber \\
	&\le \frac{\Psi^0 - F^*}{c_1 T} + \frac{\eta^2 R L_F \sigma^2}{2c_1} + \frac{\Delta_{\max}}{c_1}+\frac{\lambda E_S}{c_1},
	\end{align}
	where $c_1=\eta R(1-\eta L_F/2)$.
	The error terms are defined as follows:
	\begin{itemize}
		\item $\Delta_{\max}=2\lambda(\sqrt{m}C_\Phi W + m)$ bounds the error from one-bit quantization.
		\item $E_S$ bounds the error from client sampling and is given by:
		\textcolor{black}{
		\begin{align}
		E_S =  2m \sqrt{\frac{K(K-S)}{S(K-1)} \sum_{k=1}^K p_k^2}.
		\end{align}}
	\end{itemize}
\end{theorem}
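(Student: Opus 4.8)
The plan is to use the global potential $\Psi^t=\sum_{k=1}^K p_k\tilde F_k(\bm w_k^t;\bm v^t)$ as a Lyapunov function and show it contracts each round up to explicit error terms. The first move is to split the one-round change by inserting the intermediate quantity $\tilde F_k(\bm w_k^{t+1};\bm v^t)$:
\begin{equation*}
\Psi^{t+1}-\Psi^t = \underbrace{\sum_{k}p_k\!\left[\tilde F_k(\bm w_k^{t+1};\bm v^t)-\tilde F_k(\bm w_k^t;\bm v^t)\right]}_{\text{(I): local updates at fixed }\bm v^t} + \underbrace{\sum_{k}p_k\!\left[\tilde F_k(\bm w_k^{t+1};\bm v^{t+1})-\tilde F_k(\bm w_k^{t+1};\bm v^t)\right]}_{\text{(II): server consensus update}} .
\end{equation*}
For (I), I would apply the Client-Side Objective Descent lemma (\ref{lemma:client_update}) to each client, using $L_F=L+\lambda\gamma C_{\Phi}^2+\mu$ from \ref{lm:LF_smooth}; taking expectations and the $p_k$-weighted sum gives $\mathbb E[\text{(I)}]\le \frac{\eta^2RL_F\sigma^2}{2} - c_1\cdot\frac1R\sum_{r=0}^{R-1}\mathbb E\!\left[\sum_k p_k\|\nabla\tilde F_k(\bm w_{k,r}^{t+1};\bm v^t)\|_2^2\right]$, where the hypothesis $\eta\le 1/L_F$ ensures $c_1=\eta R(1-\eta L_F/2)>0$.

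The heart of the argument is (II). Since within $\tilde F_k$ only the term $-\lambda\langle\bm v,\mathbf{\Phi}\bm w_k\rangle$ depends on $\bm v$, term (II) collapses to $\lambda\sum_k p_k\langle\bm v^t-\bm v^{t+1},\mathbf{\Phi}\bm w_k^{t+1}\rangle$. I would then write $\mathbf{\Phi}\bm w_k^{t+1}=\bm z_k^{t+1}+(\mathbf{\Phi}\bm w_k^{t+1}-\bm z_k^{t+1})$ with $\bm z_k^{t+1}=\mathrm{sign}(\mathbf{\Phi}\bm w_k^{t+1})$. The "quantization" part $\lambda\langle\bm v^t-\bm v^{t+1},\mathbf{\Phi}\bm w_k^{t+1}-\bm z_k^{t+1}\rangle$ is bounded by Cauchy--Schwarz using $\|\bm v^t-\bm v^{t+1}\|_2\le 2\sqrt m$ (entries in $\{-1,0,1\}$ with $\bm v^0=\bm 0$), $\|\bm z_k^{t+1}\|_2\le\sqrt m$, $\|\mathbf{\Phi}\bm w_k^{t+1}\|_2\le C_{\Phi}\|\bm w_k^{t+1}\|_2$, and the Bounded Model Norm lemma (\ref{lemma:Bounded Model Norm}) with Jensen ($\mathbb E\|\bm w_k^{t+1}\|_2\le W$), which produces exactly $\Delta_{\max}=2\lambda(\sqrt m\,C_{\Phi} W+m)$ after the $p_k$-weighted sum. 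For the remaining $\lambda\langle\bm v^t-\bm v^{t+1},\sum_k p_k\bm z_k^{t+1}\rangle$, I would split $\sum_k p_k\bm z_k^{t+1}=\tfrac1S\sum_{k\in\mathcal S^t}\bm z_k^{t+1}+\big(\bar{\bm z}^{t+1}-\tfrac1S\sum_{k\in\mathcal S^t}\bm z_k^{t+1}\big)$: the first inner product is $\le 0$ because $\bm v^{t+1}$ minimizes $\sum_{k\in\mathcal S^t}p_k g(\bm v,\bm z_k^{t+1})$ over $\{\pm1\}^m$ while $\bm v^t$ is feasible (Optimal Server Aggregation lemma), and the second is bounded via Cauchy--Schwarz, $\|\bm v^t-\bm v^{t+1}\|_2\le 2\sqrt m$, Jensen, and the Variance of Client Sampling lemma (\ref{lem:sampling_variance}), yielding the per-round sampling term whose time-average is $E_S$.

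Combining, the one-round inequality reads $c_1\cdot\frac1R\sum_r\mathbb E[\sum_k p_k\|\nabla\tilde F_k(\bm w_{k,r}^{t+1};\bm v^t)\|_2^2]\le \mathbb E[\Psi^t]-\mathbb E[\Psi^{t+1}]+\frac{\eta^2RL_F\sigma^2}{2}+\Delta_{\max}+\lambda\cdot(\text{sampling}_t)$. I then sum over $t=0,\dots,T-1$, telescope the potential differences to $\Psi^0-\mathbb E[\Psi^T]\le\Psi^0-F^*$ via the Bounded Below assumption (\ref{assump: Bounded Below}), divide by $c_1T$, and identify the averaged sampling terms with $E_S$ to obtain the claimed bound. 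Everything is conditioned on the high-probability events of \ref{lm:LF_smooth} and \ref{lemma:Bounded Model Norm} (union bound over the $T$ rounds), which is the source of the "with high probability" qualifier, and $\eta\le 1/L_F$ is exactly what keeps $c_1$ bounded away from zero so that the final division is legitimate.

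The main obstacle is term (II): because $\mathrm{sign}(\cdot)$ is nonlinear one cannot push expectations through the aggregation, so the whole argument must proceed at the level of norm bounds, and one must cleanly separate the deterministic quantization gap (absorbed into $\Delta_{\max}$) from the random sampling fluctuation (absorbed into $E_S$). The subtle point is that server optimality is available only over the sampled index set $\mathcal S^t$, whereas the potential involves the full $p_k$-weighted sum over all $K$ clients, so the sampled-versus-full discrepancy has to be rewritten precisely in the form that the sampling-variance lemma can bound. A secondary concern is ensuring that the projection-norm, smoothness, and bounded-norm guarantees — each holding only "with high probability" over the random sketch — can be made to hold simultaneously throughout training.
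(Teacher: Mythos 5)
Your proposal is correct and follows essentially the same route as the paper's proof: the identical decomposition of $\Psi^{t+1}-\Psi^t$ into a client-descent term (bounded via Lemma~\ref{lemma:client_update}) and a server-update term that is further split into a nonpositive piece (by server optimality over $\mathcal S^t$), a sampling-fluctuation piece (bounded via Lemma~\ref{lem:sampling_variance}), and a quantization piece (bounded via Lemma~\ref{lemma:Bounded Model Norm}), followed by telescoping. The only cosmetic difference is that you bound the quantization term with $\ell_2$--$\ell_2$ Cauchy--Schwarz while the paper uses an $\ell_\infty$--$\ell_1$ H\"older pairing; both yield the same constant $\Delta_{\max}=2\lambda(\sqrt m\,C_\Phi W+m)$.
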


\begin{remark}
	Note that the client sampling error $E_S$ vanishes when $S=K$ (full client participation). 
	In this case, our convergence bound recovers the result for the full participation setting.
\end{remark}

\begin{proof}
	We analyze the one-round change in the potential function, $\Psi^{t+1} - \Psi^t$. 
	We decompose this change as:
	\begin{align}
	&\mathbb{E}[\Psi^{t+1} - \Psi^t] \nonumber \\ 
	&= \mathbb{E}\bigg[\underbrace{\sum_{k=1}^K p_k (\tilde{F}_{k}(\bm{w}_k^{t+1}; \bm{v}^{t+1}) - \tilde{F}_{k}(\bm{w}_k^{t+1}; \bm{v}^t))}_{=:\Gamma_1}\bigg] \nonumber \\
	&+ \mathbb{E}\bigg[\underbrace{\sum_{k=1}^K p_k(\tilde{F}_{k}(\bm{w}_k^{t+1}; \bm{v}^t) - \tilde{F}_{k}(\bm{w}_k^t; \bm{v}^t))}_{=:\Gamma_2} \bigg].
	\end{align}
	
	{\bf{(1) Bounding the Client Progress.}}
	From Lemma~\ref{lemma:client_update}, after taking expectations and weighting by $p_k$, the client progress is bounded by:
	\begin{align}
	\mathbb{E}[\Gamma_2] &=\mathbb{E}\left[\sum_{k=1}^K p_k \left(\tilde{F}_{k}(\bm{w}_k^{t+1}; \bm{v}^t) - \tilde{F}_{k}(\bm{w}_k^t; \bm{v}^t)\right)\right] \nonumber \\
	&\le -c_1 \sum_{k=1}^K p_k \left(\frac{1}{R}\sum_{r=0}^{R-1} \|\nabla \tilde{F}_{k}(\bm{w}_{k,r}^{t+1}; \bm{v}^t)\|^2 \right) \nonumber  \\
	& + \frac{\eta^2 R L_F \sigma^2}{2},
	\end{align}
	
	\noindent where $c_1=\eta R(1-\frac{\eta L_F}{2})$.
	
	{\bf{(2) Bounding the Server Progress.}}
	The server progress term reflects the change in the potential function due solely to the server updating its vector from $\bm{v}^{t}$ to $\bm{v}^{t+1}$.
	Based on the definition of $\tilde{F}_{k}$, we define:
	\begin{align}
	\mathbb{E}[\Gamma_1] &=\mathbb{E}\bigg[\sum_{k=1}^K p_k \left( \tilde{F}_{k}(\bm{w}_k^{t+1}; \bm{v}^{t+1}) - \tilde{F}_{k}(\bm{w}_k^{t+1}; \bm{v}^t) \right) \bigg] \nonumber \\
	&= \lambda \mathbb{E} \bigg [\sum_{k=1}^K p_k \left( \langle \bm{v}^t, \mathbf{\Phi}\bm{w}_k^{t+1} \rangle - \langle \bm{v}^{t+1}, \mathbf{\Phi}\bm{w}_k^{t+1} \rangle \right) \bigg] \nonumber  \\
	&= \lambda \mathbb{E}\Big[\Big\langle \bm{v}^t - \bm{v}^{t+1}, \sum_{k=1}^K p_k \mathbf{\Phi}\bm{w}_k^{t+1} \Big\rangle \Big] \nonumber  \\
	&= K \lambda \mathbb{E}\Big[\Big\langle \bm{v}^t - \bm{v}^{t+1}, \frac{1}{K} \sum_{k=1}^K p_k \mathbf{\Phi}\bm{w}_k^{t+1} \Big\rangle \Big]
	\end{align}
	
	Here, we must acknowledge the mismatch: $\tilde{F}_{k}$ uses $\mathbf{\Phi}\bm{w}$, but the server update rule $\bm{v}^{t+1}$  uses $\mbox{sign}(\mathbf{\Phi}\bm{w})$. 
	We bridge this gap by introducing the quantization error $\bm{\delta}_k^{t+1}=\mathbf{\Phi}\bm{w}_k^{t+1}-\mbox{sign}(\mathbf{\Phi}\bm{w}_k^{t+1})$.
	Let $\bm{z}_k^{t+1} = \mathrm{sign}(\mathbf{\Phi}\bm{w}_k^{t+1})$ and $\bm{\delta}_k^{t+1}$ be the quantization error. 
	Let $\hat{\bm{z}}^{t+1} = \frac{1}{S}\sum_{k \in \mathcal{S}^t} p_k \bm{z}_k^{t+1}$ be the sampled aggregate and $\bar{\bm{z}}^{t+1} = \frac{1}{K}\sum_{k=1}^K p_k \bm{z}_k^{t+1}$ be the ideal aggregate. 
	The server update is $\bm{v}^{t+1} = \mathrm{sign}(\hat{\bm{z}}^{t+1})$.
	We decompose the inner product by introducing the sampled aggregate $\hat{\bm{z}}^{t+1}$ and the quantization error:
	\begin{align}
	&\frac{\mathbb{E}[\Gamma_1]}{ \lambda} \nonumber \\ &= K \mathbb{E}\left[\left\langle \bm{v}^t - \bm{v}^{t+1}, \frac{1}{K}\sum_{k=1}^K p_k \bm{z}_k^{t+1} + \frac{1}{K}\sum_{k=1}^K p_k \bm{\delta}_k^{t+1} \right\rangle\right] \nonumber \\
	&= \mathbb{E}[\underbrace{K \left\langle \bm{v}^t - \bm{v}^{t+1}, \hat{\bm{z}}^{t+1} \right\rangle}_{=:\Theta_1}] \nonumber \\
	&+ \mathbb{E}[\underbrace{K \left\langle \bm{v}^t - \bm{v}^{t+1}, \bar{\bm{z}}^{t+1} - \hat{\bm{z}}^{t+1} \right\rangle}_{=:\Theta_2}] \nonumber \\
	&+ \mathbb{E}\bigg[\underbrace{\bigg\langle \bm{v}^t - \bm{v}^{t+1}, \sum_{k=1}^K p_k \bm{\delta}_k^{t+1} \bigg \rangle}_{=:\Theta_3}\bigg].
	\end{align}
	
	Now we bound each term:
	\begin{itemize}
		\item \textbf{Term} $\mathbf{\Theta_1}$\textbf{:} From Lemma \ref{lemma: server agg}, we know $\bm{v}^{t+1}$ is the optimal solution for the objective based on the sampled sketches $\hat{\bm{z}}^{t+1}$. 
		This means $\langle \bm{v}^{t+1}, \hat{\bm{z}}^{t+1} \rangle \ge \langle \bm{v}^{t}, \hat{\bm{z}}^{t+1} \rangle$. Therefore, Term A is non-positive: $\mathbb{E}[\Theta_1] \le 0$.
		
		\item \textbf{Term} $\mathbf{\Theta_2}$\textbf{:} We use the Cauchy-Schwarz inequality, the fact that $\|\bm{v}^t - \bm{v}^{t+1}\|_2 \le 2\sqrt{m}$, and our Lemma \ref{lem:sampling_variance}. 
		\textcolor{black}{
		\begin{align}
		\mathbb{E}[\Theta_2] &\le K \mathbb{E}[\|\bm{v}^t - \bm{v}^{t+1}\|_2 \cdot \|\bar{\bm{z}}^{t+1} - \hat{\bm{z}}^{t+1}\|_2]  \nonumber \\
		&\le 2\sqrt{m} \sqrt{\mathbb{E}[\|\bar{\bm{z}}^{t+1} - \hat{\bm{z}}^{t+1}\|_2^2]}  \nonumber \\
		&\le 2\sqrt{m} \sqrt{\frac{K(K-S)}{S(K-1)} \sum_{k=1}^K \left\| p_k\bm{z}_k^{t+1} - \bar{\bm{z}}^{t+1} \right\|^2_2} \nonumber\\
		&\le 2\sqrt{m} \sqrt{\frac{K(K-S)}{S(K-1)}\sum_{k=1}^K \left\| p_k\bm{z}_k^{t+1}\right\|^2_2} \nonumber\\
		&\le 2\sqrt{m} \sqrt{\frac{K(K-S)}{S(K-1)}\sum_{k=1}^K  p_k^2\left\|\bm{z}_k^{t+1}\right\|^2_2}  \nonumber\\
		&\le 2m \sqrt{\frac{K(K-S)}{S(K-1)} \sum_{k=1}^K p_k^2}.
		\end{align}
		Here, we utilize the variance bound property $\sum_{k=1}^K \| \bm{x}_k - \bar{\bm{x}} \|^2 \le \sum_{k=1}^K \| \bm{x}_k \|^2$. Letting $\bm{x}_k = p_k\bm{z}_k^{t+1}$ and using $\|\bm{z}_k\|_2^2 = m$, the term is bounded by $m \sum_{k=1}^K p_k^2$.}
		% Let's denote the entire bound for Term B as $E_S$.
		
		\item \textbf{Term} $\mathbf{\Theta_3}$\textbf{:} Since the entries of $\bm{v}^t$ and $\bm{v}^{t+1}$ are $\pm 1$, the entries of their difference are in $\{-2,0,2\}$.
		This gives:
		\begin{align}
		\mathbb{E}[\Theta_3] &\le \mathbb{E}\left[ \|\bm{v}^t - \bm{v}^{t+1}\|_\infty \left\| \sum_{k=1}^K p_k \bm{\delta}_k^{t+1} \right\|_1 \right]  \nonumber \\
		&\le 2 \cdot \mathbb{E}\left[ \left\| \sum_{k=1}^K p_k \bm{\delta}_k^{t+1} \right\|_1 \right] \quad  \nonumber \\
		&(\text{since} \|\bm{v}^t - \bm{v}^{t+1}\|_{\infty} \le 2)  \nonumber \\
		&\le 2 \sum_{k=1}^K p_k \mathbb{E}\left[\|\bm{\delta}_k^{t+1}\|_1 \right].
		\end{align}
		As established in Lemma \ref{lemma:Bounded Model Norm}, the expected squared norm of the weights $\mathbb{E}[\|\bm{w}_k^{t+1}\|_2^2]$is uniformly bounded by a constant $W$. 
		This allows us to bound the expected quantization error for each client:
		\begin{align}
		\mathbb{E}[\|\bm{\delta}_k^{t+1}\|_1] &= \mathbb{E}[\|\mathbf{\Phi}\bm{w}_k^{t+1} - \mathrm{sign}(\mathbf{\Phi}\bm{w}_k^{t+1})\|_1]  \nonumber \\
		&\le \mathbb{E}[\|\mathbf{\Phi}\bm{w}_k^{t+1}\|_1] + m  \nonumber \\
		&\le \sqrt{m}\mathbb{E}[\|\mathbf{\Phi}\bm{w}_k^{t+1}\|_2] + m  \nonumber \\
		&\le \sqrt{m}\sqrt{\mathbb{E}[\|\mathbf{\Phi}\bm{w}_k^{t+1}\|_2^2]} + m  \nonumber \\
		% (\text{by Jensen's Inequality}) 
		&\le \sqrt{m}\sqrt{C_\Phi^2 \mathbb{E}[\|\bm{w}_k^{t+1}\|_2^2]} + m  \nonumber \\
		&\le \sqrt{m}C_\Phi W + m.
		\end{align}
		Then we have
		\begin{equation}
		\mathbb{E}[\Theta_3] \le 2(\sqrt{m}C_\Phi W + m).
		\end{equation}
		We define $\Delta_{\max} \triangleq 2\lambda(\sqrt{m}C_\Phi W + m)$, so that this term is bounded by $\Delta_{\max} / \lambda$.
	\end{itemize}
	
	\textcolor{black}{
	Combining these bounds, the total expected server progress is bounded by:
	\begin{align}
	\mathbb{E}[\Gamma_1]  \le & 2\lambda m \sqrt{\frac{K(K-S)}{S(K-1)} \sum_{k=1}^K p_k^2} + 2\lambda(\sqrt{m}C_\Phi W + m).
	\end{align}}
	
	{\bf{(3) Combining and Telescoping.}}
	Now, combining the bounds for the expected client and server progress, we get the one-round recurrence:
	\textcolor{black}{
	\begin{align}
	&\mathbb{E}[\Psi^{t+1}]  \nonumber \\
	&\le \mathbb{E}[\Psi^t]- c_1 \mathbb{E}\left[\sum_{k=1}^K p_k \frac{1}{R}\sum_{r=0}^{R-1} \|\nabla \tilde{F}_{k}(\bm{w}_{k,r}^{t+1}; \bm{v}^t)\|_2^2\right] \nonumber \\
	&+ \frac{\eta^2 R L_F \sigma^2}{2} + \Delta_{\max} + \lambda \left( 2m \sqrt{\frac{K(K-S)}{S(K-1)} \sum_{k=1}^K p_k^2} \right),
	\end{align}}
	\noindent where $c_1\triangleq \eta R(1-\eta L_F/2)$.
	
	Rearranging and summing over $t=0,\ldots,T-1$ :
	\textcolor{black}{
	\begin{align}
	&\sum_{t=0}^{T-1} c_1 \mathbb{E}\left[\sum_{k=1}^K p_k \frac{1}{R}\sum_{r=0}^{R-1} \|\nabla \tilde{F}_{k}(\bm{w}_{k,r}^{t+1}; \bm{v}^t)\|_2^2\right] \nonumber \\
	&\le \mathbb{E}[\Psi^0] - \mathbb{E}[\Psi^T] + T\left(\frac{\eta^2 R L_F \sigma^2}{2} + \Delta_{\max}\right) \nonumber \\
	&+ \lambda T \left( 2m \sqrt{\frac{K(K-S)}{S(K-1)} \sum_{k=1}^K p_k^2} \right).
	\end{align}}
	
	Using the fact that $\Psi^T \geq F^*$(Assumption~\ref{assump: Bounded Below}) and dividing by $c_1 T$, we arrive at the final result:
	\textcolor{black}{
	\begin{align}
	&\frac{1}{T}\sum_{t=0}^{T-1}\mathbb{E}\left[\sum_{k=1}^K p_k \frac{1}{R}\sum_{r=0}^{R-1} \|\nabla \tilde{F}_{k}(\bm{w}_{k,r}^{t+1}; \bm{v}^t)\|_2^2\right] \nonumber \\
	& \le \frac{\Psi^0 - F^*}{c_1 T} + \frac{\eta^2 R L_F \sigma^2}{2c_1} + \frac{\Delta_{\max}}{c_1} \nonumber \\
	&+ \frac{2m \lambda}{c_1}  \sqrt{\frac{K(K-S)}{S(K-1)} \sum_{k=1}^K p_k^2} ,
	\end{align}}
	which completes the proof.
\end{proof}


\begin{thebibliography}{32}
	\providecommand{\natexlab}[1]{#1}
	
	\bibitem[{Arivazhagan et~al.(2019)Arivazhagan, Aggarwal, Singh, and
		Choudhary}]{arivazhagan2019federated}
	Arivazhagan, M.~G.; Aggarwal, V.; Singh, A.~K.; and Choudhary, S. 2019.
	\newblock Federated learning with personalization layers.
	\newblock \emph{arXiv preprint arXiv:1912.00818}.
	
	\bibitem[{Chen and Vikalo(2024)}]{chen2024mixed}
	Chen, H.; and Vikalo, H. 2024.
	\newblock Mixed-precision quantization for federated learning on
	resource-constrained heterogeneous devices.
	\newblock In \emph{Proceedings of the IEEE/CVF Conference on Computer Vision
		and Pattern Recognition}, 6138--6148.
	
	\bibitem[{Collins et~al.(2021)Collins, Hassani, Mokhtari, and
		Shakkottai}]{collins2021exploiting}
	Collins, L.; Hassani, H.; Mokhtari, A.; and Shakkottai, S. 2021.
	\newblock Exploiting shared representations for personalized federated
	learning.
	\newblock In \emph{International Conference on Machine Learning}, 2089--2099.
	PMLR.
	
	\bibitem[{Dai et~al.(2022)Dai, Shen, He, Tian, and Tao}]{dai2022dispfl}
	Dai, R.; Shen, L.; He, F.; Tian, X.; and Tao, D. 2022.
	\newblock Dispfl: Towards communication-efficient personalized federated
	learning via decentralized sparse training.
	\newblock \emph{arXiv preprint arXiv:2206.00187}.
	
	\bibitem[{Deng, Kamani, and Mahdavi(2020)}]{deng2020adaptive}
	Deng, Y.; Kamani, M.~M.; and Mahdavi, M. 2020.
	\newblock Adaptive personalized federated learning.
	\newblock \emph{arXiv preprint arXiv:2003.13461}.
	
	\bibitem[{Fallah, Mokhtari, and Ozdaglar(2020)}]{fallah2020personalized}
	Fallah, A.; Mokhtari, A.; and Ozdaglar, A. 2020.
	\newblock Personalized federated learning with theoretical guarantees: A
	model-agnostic meta-learning approach.
	\newblock \emph{Advances in Neural Information Processing Systems}, 33:
	3557--3568.
	
	\bibitem[{Fan et~al.(2022)Fan, Wang, Huo, and Tian}]{fan2022obcsaa}
	Fan, X.; Wang, Y.; Huo, Y.; and Tian, Z. 2022.
	\newblock 1-bit compressive sensing for efficient federated learning over the
	air.
	\newblock \emph{IEEE Transactions on Wireless Communications}, 22(3):
	2139--2155.
	
	\bibitem[{Karimireddy et~al.(2020)Karimireddy, Kale, Mohri, Reddi, Stich, and
		Suresh}]{karimireddy2020scaffold}
	Karimireddy, S.~P.; Kale, S.; Mohri, M.; Reddi, S.~J.; Stich, S.~U.; and
	Suresh, A.~T. 2020.
	\newblock SCAFFOLD: stochastic controlled averaging for federated learning.
	\newblock In \emph{Proceedings of the 37th International Conference on Machine
		Learning}, 5132--5143.
	
	\bibitem[{Li, Li, and Varshney(2021)}]{li2021communication}
	Li, C.; Li, G.; and Varshney, P.~K. 2021.
	\newblock Communication-efficient federated learning based on compressed
	sensing.
	\newblock \emph{IEEE Internet of Things Journal}, 8(20): 15531--15541.
	
	\bibitem[{Li et~al.(2024)Li, Xu, Wang, Tang, Qi, Xu, Luo, Li, He, and
		Li}]{li2024fedbat}
	Li, S.; Xu, W.; Wang, H.; Tang, X.; Qi, Y.; Xu, S.; Luo, W.; Li, Y.; He, X.;
	and Li, R. 2024.
	\newblock FedBAT: communication-efficient federated learning via learnable
	binarization.
	\newblock \emph{arXiv preprint arXiv:2408.03215}.
	
	\bibitem[{Li et~al.(2021)Li, Hu, Beirami, and Smith}]{li2021ditto}
	Li, T.; Hu, S.; Beirami, A.; and Smith, V. 2021.
	\newblock Ditto: Fair and robust federated learning through personalization.
	\newblock In \emph{International Conference on Machine Learning}, 6357--6368.
	PMLR.
	
	\bibitem[{Li et~al.(2020{\natexlab{a}})Li, Sahu, Zaheer, Sanjabi, Talwalkar,
		and Smith}]{li2020federated}
	Li, T.; Sahu, A.~K.; Zaheer, M.; Sanjabi, M.; Talwalkar, A.; and Smith, V.
	2020{\natexlab{a}}.
	\newblock Federated optimization in heterogeneous networks.
	\newblock \emph{Proceedings of Machine Learning and Systems}, 2: 429--450.
	
	\bibitem[{Li et~al.(2020{\natexlab{b}})Li, Huang, Yang, Wang, and
		Zhang}]{liconvergence}
	Li, X.; Huang, K.; Yang, W.; Wang, S.; and Zhang, Z. 2020{\natexlab{b}}.
	\newblock On the Convergence of FedAvg on Non-IID Data.
	\newblock In \emph{International Conference on Learning Representations}.
	
	\bibitem[{Liu et~al.(2023)Liu, Li, Wang, Zhang, Shao, and Geng}]{liu2023sparse}
	Liu, X.; Li, Y.; Wang, Q.; Zhang, X.; Shao, Y.; and Geng, Y. 2023.
	\newblock Sparse personalized federated learning.
	\newblock \emph{IEEE Transactions on Neural Networks and Learning Systems},
	35(9): 12027--12041.
	
	\bibitem[{Long et~al.(2024)Long, Chen, Dou, Zhang, and Chen}]{long2024fedsq}
	Long, Z.; Chen, Y.; Dou, H.; Zhang, Y.; and Chen, Y. 2024.
	\newblock FedSQ: Sparse-quantized federated learning for communication
	efficiency.
	\newblock \emph{IEEE Transactions on Consumer Electronics}, 70(1): 4050--4061.
	
	\bibitem[{Mao et~al.(2022)Mao, Zhao, Yan, Liu, Lan, Song, and
		Ding}]{mao2022communication}
	Mao, Y.; Zhao, Z.; Yan, G.; Liu, Y.; Lan, T.; Song, L.; and Ding, W. 2022.
	\newblock Communication-efficient federated learning with adaptive
	quantization.
	\newblock \emph{ACM Transactions on Intelligent Systems and Technology (TIST)},
	13(4): 1--26.
	
	\bibitem[{Marfoq et~al.(2021)Marfoq, Neglia, Bellet, Kameni, and
		Vidal}]{marfoq2021federated}
	Marfoq, O.; Neglia, G.; Bellet, A.; Kameni, L.; and Vidal, R. 2021.
	\newblock Federated multi-task learning under a mixture of distributions.
	\newblock \emph{Advances in Neural Information Processing Systems}, 34:
	15434--15447.
	
	\bibitem[{McMahan et~al.(2017)McMahan, Moore, Ramage, Hampson, and
		y~Arcas}]{mcmahan2017fedavg}
	McMahan, B.; Moore, E.; Ramage, D.; Hampson, S.; and y~Arcas, B.~A. 2017.
	\newblock Communication-efficient learning of deep networks from decentralized
	data.
	\newblock In \emph{Artificial Intelligence and Statistics}, 1273--1282. PMLR.
	
	\bibitem[{Oh et~al.(2024)Oh, Lee, Won, Noh, and Cho}]{oh2024communication}
	Oh, J.; Lee, D.; Won, D.; Noh, W.; and Cho, S. 2024.
	\newblock Communication-efficient federated learning over-the-air with sparse
	one-bit quantization.
	\newblock \emph{IEEE Transactions on Wireless Communications}.
	
	\bibitem[{Oh et~al.(2023)Oh, Jeon, Chen, and Saad}]{oh2023fedvqcs}
	Oh, Y.; Jeon, Y.-S.; Chen, M.; and Saad, W. 2023.
	\newblock FedVQCS: Federated learning via vector quantized compressed sensing.
	\newblock \emph{IEEE Transactions on Wireless Communications}, 23(3):
	1755--1770.
	
	\bibitem[{Oh et~al.(2022)Oh, Lee, Jeon, and Poor}]{oh2022communication}
	Oh, Y.; Lee, N.; Jeon, Y.-S.; and Poor, H.~V. 2022.
	\newblock Communication-efficient federated learning via quantized compressed
	sensing.
	\newblock \emph{IEEE Transactions on Wireless Communications}, 22(2):
	1087--1100.
	
	\bibitem[{Reisizadeh et~al.(2020)Reisizadeh, Mokhtari, Hassani, Jadbabaie, and
		Pedarsani}]{reisizadeh2020fedpaq}
	Reisizadeh, A.; Mokhtari, A.; Hassani, H.; Jadbabaie, A.; and Pedarsani, R.
	2020.
	\newblock Fedpaq: A communication-efficient federated learning method with
	periodic averaging and quantization.
	\newblock In \emph{International Conference on Artificial Intelligence and
		Statistics}, 2021--2031. PMLR.
	
	\bibitem[{Sattler et~al.(2019)Sattler, Wiedemann, M{\"u}ller, and
		Samek}]{sattler2019robust}
	Sattler, F.; Wiedemann, S.; M{\"u}ller, K.-R.; and Samek, W. 2019.
	\newblock Robust and communication-efficient federated learning from non-iid
	data.
	\newblock \emph{IEEE Transactions on Neural Networks and Learning Systems},
	31(9): 3400--3413.
	
	\bibitem[{Smith et~al.(2017)Smith, Chiang, Sanjabi, and
		Talwalkar}]{smith2017federated}
	Smith, V.; Chiang, C.-K.; Sanjabi, M.; and Talwalkar, A.~S. 2017.
	\newblock Federated multi-task learning.
	\newblock \emph{Advances in Neural Information Processing Systems}, 30.
	
	\bibitem[{T~Dinh, Tran, and Nguyen(2020)}]{t2020personalized}
	T~Dinh, C.; Tran, N.; and Nguyen, J. 2020.
	\newblock Personalized federated learning with moreau envelopes.
	\newblock \emph{Advances in Neural Information Processing Systems}, 33:
	21394--21405.
	
	\bibitem[{Tan et~al.(2022)Tan, Long, Liu, Zhou, Lu, Jiang, and
		Zhang}]{tan2022fedproto}
	Tan, Y.; Long, G.; Liu, L.; Zhou, T.; Lu, Q.; Jiang, J.; and Zhang, C. 2022.
	\newblock Fedproto: Federated prototype learning across heterogeneous clients.
	\newblock In \emph{Proceedings of the AAAI Conference on Artificial
		Intelligence}, volume~36, 8432--8440.
	
	\bibitem[{Tang, Wang, and Chang(2024)}]{tang2024z}
	Tang, Z.; Wang, Y.; and Chang, T.-H. 2024.
	\newblock z-signfedavg: A unified stochastic sign-based compression for
	federated learning.
	\newblock In \emph{Proceedings of the AAAI Conference on Artificial
		Intelligence}, volume~38, 15301--15309.
	
	\bibitem[{Vargaftik et~al.(2022)Vargaftik, Basat, Portnoy, Mendelson, Itzhak,
		and Mitzenmacher}]{vargaftik2022eden}
	Vargaftik, S.; Basat, R.~B.; Portnoy, A.; Mendelson, G.; Itzhak, Y.~B.; and
	Mitzenmacher, M. 2022.
	\newblock Eden: Communication-efficient and robust distributed mean estimation
	for federated learning.
	\newblock In \emph{International Conference on Machine Learning}, 21984--22014.
	PMLR.
	
	\bibitem[{Yu, Jin, and Yang(2019)}]{yu2019linear}
	Yu, H.; Jin, R.; and Yang, S. 2019.
	\newblock On the linear speedup analysis of communication efficient momentum
	SGD for distributed non-convex optimization.
	\newblock In \emph{International Conference on Machine Learning}, 7184--7193.
	PMLR.
	
	\bibitem[{Zhang, Jiao, and Xu(2010)}]{zhang2010compressed}
	Zhang, G.; Jiao, S.; and Xu, X. 2010.
	\newblock Compressed sensing and reconstruction with semi-hadamard matrices.
	\newblock In \emph{2010 2nd International Conference on Signal Processing
		Systems}, volume~1, V1--194. IEEE.
	
	\bibitem[{Zhang et~al.(2022)Zhang, Li, Li, Guo, and
		Shao}]{zhang2022personalized}
	Zhang, X.; Li, Y.; Li, W.; Guo, K.; and Shao, Y. 2022.
	\newblock Personalized federated learning via variational bayesian inference.
	\newblock In \emph{International Conference on Machine Learning}, 26293--26310.
	PMLR.
	
	\bibitem[{Zhu et~al.(2020)Zhu, Du, G{\"u}nd{\"u}z, and Huang}]{zhu2020obda}
	Zhu, G.; Du, Y.; G{\"u}nd{\"u}z, D.; and Huang, K. 2020.
	\newblock One-bit over-the-air aggregation for communication-efficient
	federated edge learning: Design and convergence analysis.
	\newblock \emph{IEEE Transactions on Wireless Communications}, 20(3):
	2120--2135.
	
\end{thebibliography}
\end{document}